\documentclass{article}

\PassOptionsToPackage{numbers, compress}{natbib}

\usepackage[preprint]{neurips_2021}




\usepackage[utf8]{inputenc} 
\usepackage[T1]{fontenc}    
\usepackage{hyperref}       
\usepackage{url}            
\usepackage{booktabs}       
\usepackage{amsfonts}       
\usepackage{nicefrac}       
\usepackage{microtype}      
\usepackage{xcolor}         
\usepackage{multirow}

\usepackage{hyperref}  
\hypersetup{
    colorlinks=true,
    linkcolor=blue,
    citecolor =blue,
    filecolor=magenta,      
    urlcolor=magenta,
}
\usepackage{url}            
\usepackage{booktabs}       
\usepackage{amsfonts}       
\usepackage{nicefrac}       
\usepackage{microtype}      
\usepackage{lipsum}
\usepackage{graphicx}

\usepackage{color}

\usepackage{makecell}

\usepackage{amsmath}
\usepackage{amsfonts}
\usepackage[ruled,vlined]{algorithm2e}
\usepackage{bm}
\usepackage{amssymb}
\usepackage{color}
\usepackage{xcolor}
\usepackage{hyperref}
\usepackage{amsmath, amssymb, graphicx, url, amsthm}
\usepackage{cleveref}


\def\u{\tilde{u}}

\def\X{\mathcal{X}}

\def\ub{\mathbf{u}}
\def\Zb{\mathbf{Z}}
\def\wb{\mathbf{w}}

\def\mb{\mathbf{m}}

\def\H{\mathcal{H}}
\def\Sb{\mathbf{S}}
\def\Sc{\mathcal{S}}

\def\Nc{\mathcal{N}}
\def\Ec{\mathcal{E}}

\def\Bc{\mathcal{B}}
\def\Ib{\mathbf{I}}
\def\phib{\bm{\phi}}
\def\Ic{\mathcal{I}}

\def\argmax{\footnotesize \mbox{argmax}}
\def\argmin{\footnotesize \mbox{argmin}}
\def\TP{\footnotesize \mbox{T}}

\def\phib{\bm{\phi}}

\def\Phib{\bm{\Phi}}

\def\Xb{\mathbf{X}}
\def\yb{\mathbf{y}}
\def\Ib{\mathbf{I}}

\def\Rr{\mathbb{R}}
\def\Nn{\mathbb{N}}
\def\E{\mathbb{E}}

\newtheorem{lemma}{Lemma}
\newtheorem{theorem}{Theorem}
\newtheorem{proposition}{Proposition}

\newtheorem{assumption}{Assumption}

\def\nn{\nonumber}

\title{Scalable Thompson Sampling using\\Sparse Gaussian Process Models}

%
\author{%
  Sattar Vakili\thanks{Equal contribution,
$^1$ MediaTek Research, $^2$ Secondmind, $^{3}$ Imperial College London, $^{4}$ University of Cambridge. Correspondence to Sattar Vakili $<$\texttt{sattar.vakili@mtkresearch.com}$>$, Henry Moss $<$\texttt{henry.moss@secondmind.ai}$>$.

}$^{~~1}$,~~Henry Moss$^{*2}$,~~Artem Artemev$^{2,3}$,~~Vincent Dutordoir$^{2,4}$,~~Victor Picheny$^{2}$ \\

}

\begin{document}

\maketitle

\begin{abstract}

Thompson Sampling (TS) from Gaussian Process (GP) models is a powerful tool for the optimization of black-box functions. Although TS enjoys strong theoretical guarantees and convincing empirical performance, it incurs a large computational overhead that scales polynomially with the optimization budget. Recently, scalable TS methods based on sparse GP models have been proposed to increase the scope of TS, enabling its application to problems that are sufficiently multi-modal, noisy or combinatorial to require more than a few hundred evaluations to be solved. 
However, the approximation error introduced by sparse GPs invalidates all existing regret bounds. In this work, we perform a theoretical and empirical analysis of scalable TS. We provide theoretical guarantees and show that the drastic reduction in computational complexity of scalable TS can be enjoyed without loss in the regret performance over the standard TS. These conceptual claims are validated for practical implementations of scalable TS on synthetic benchmarks and as part of a real-world high-throughput molecular design task.

\end{abstract}

\section{Introduction}\label{Intro}

Thompson sampling \citep[TS,][]{Thompson1933} is a popular algorithm for Bayesian optimization \citep[BO,][]{Shahriari2016} --- a sequential model-based approach for the optimization of expensive-to-evaluate black-box functions, typically characterised by limited prior knowledge and access to only a limited number of (possibly noisy) evaluations. By sequentially evaluating the maxima of random samples from a model of the objective function, TS provides a conceptually simple method for balancing exploration and exploitation. 

TS is often paired with Gaussian Processes (GPs), which offers a spectrum of powerful and flexible modeling tools that provide probabilistic predictions of the objective function. The resulting GP-TS algorithms \citep{Chowdhury2017bandit} have been found to provide highly efficient optimization under heavily restricted optimization budgets, with numerous successful applications  including aerodynamic design \citep{baptista2018bayesian}, route planning \citep{eriksson2021scalable} and web-streaming \citep{daulton2019thompson}. While most popular BO algorithms cannot query more than a handful of points at a time \citep{chevalier2013fast,gonzalez2016batch,wu2016parallel,moss2021gibbon} without employing replicating designs \citep[see ][]{jalali2017comparison,binois2019replication}, TS has a natural ability to query large batches of points. Therefore, TS is a popular solution for optimization pipelines enjoying a large degree of parallelisation, for example in high-throughout chemical space exploration \citep{hernandez2017parallel} and for the distributed tuning of machine learning models across cloud compute resources  \citep{kandasamy2018parallelised}. 

As BO incurs a substantial computational overhead between successive iterations, while updating models and choosing the next set of query points, standard BO methods are limited to optimization problems with small evaluation budgets \citep{Shahriari2016}. However, with large batches, the computational overhead incurred by BO per individual function evaluation is considerably reduced. Therefore, considering large batches is a promising tactic to expand BO to larger optimization budgets, which are required to optimize highly noisy problems with rougher optimization landscapes \citep{jalali2017comparison,binois2019replication} or high dimensional and combinatorial search spaces \citep{snoek2015scalable,hernandez2017parallel,wang2018batched}. Consequently, the highly-parallelizable TS is a promising candidate for BO under large optimization budgets.

Unfortunately, practical implementations of GP-TS suffer from two key computational bottlenecks that prevent the method from scaling in terms of total optimization budget. Not only does each update of the GP posterior distribution require a matrix inversion that incurs a cubic cost w.r.t. the number of observations $t$ \citep{Rasmussen2006}, but even sampling from this posterior can be a daunting task --- the standard approach of drawing a joint sample across a $N$ point discretization of the search space has an 
$O(N^3)$ complexity \citep[due to a Cholesky decomposition step,][]{diggle1998model}. Alternative existing approaches for BO under large optimization budgets include using Neural Networks in lieu of GPs \citep{snoek2015scalable,hernandez2017parallel} or to use local models \citep{eriksson2019scalable} and ensembles \citep{wang2018batched}.

A natural answer to the scalability issues of GP-TS is to rely on the recent advances in Sparse Variational GP models \citep[SVGP,][]{Titsias2009Variational}. SVGPs provide a low rank $O(m^2 t)$ approximation of the GP posterior, where $m$ is the number of the so-called \emph{inducing variables}
that grows at a rate much slower than $t$.
Successful applications of SVGPs for BO under large optimization budgets include optimizing a free-electron laser  \citep{mcintire2016sparse}, molecules under synthesis-ability constraints \citep{griffiths2017constrained}, and the composition of alloys \citep{yang2019sparse}. 
Furthermore, \cite{wilson2020efficientsampling} introduced an efficient sampling rule (referred to as \emph{decoupled } sampling) which can be used to efficiently perform TS with SVGPs. In particular, \cite{wilson2020efficientsampling} decomposes samples from the SVGP posterior into the sum of an approximate prior based on $M$ features (see Sec.~\ref{Subsec:feature}) and an SVGP model update, thus reducing the computational cost of drawing a Thompson sample to $O\left((m+M)N\right)$. Leveraging this sampling rule results in a scalable GP-TS algorithm (henceforth S-GP-TS) that can handle orders of magnitude greater optimization budgets.

While \cite{Chowdhury2017bandit} proposed a comprehensive theoretical analysis of exact GP-TS, it does not apply to S-GP-TS. Indeed, using sparse models and decoupled sampling introduce two layers of approximation,
that must be handled with care, as even a small constant error in the posterior can lead to poor performance by encouraging under-exploration in the vicinity of the optimum point \cite{Phan2019TSExample}. 
Our primary contributions can be summarised as follows. First, we provide a theoretical analysis showing that batch TS from any approximate GP can achieve the same regret order as an exact GP-TS algorithm as long the quality of the posterior approximations satisfies certain conditions (Assumptions~\ref{Ass1} and~\ref{Ass2}). Second, 
for the specific case of S-GP-TS (batch decoupled TS using a SVGP), we leverage the results of \cite{Burt2019Rates} to provide bounds in terms of GP's kernel spectrum for the number of prior features and  inducing variables required to guarantee low regret.
Finally, we investigate empirically the performance of multiple practical implementations of S-GP-TS, considering synthetic benchmarks and a high-throughput molecular design task. 


\section{Problem Formulation}\label{Sec:PF}

We consider the sequential optimization of an unknown function $f$ over a compact set $\X\subset \Rr^d$. A sequential learning policy selects a batch of $B$ observation points $\{x_{t,b}\}_{b\in [B]}$ at each time step $t=1,2,\dots,T$ and receives the corresponding real-valued and noisy rewards $\{y_{t,b}=f(x_{t,b})+\epsilon_{t,b}\}_{b\in[B]}$, where $\epsilon_{t,b}$ denotes the observation noise. Throughout the paper, we use the notation $[n]=\{1,2,\dots,n\},$ for $n\in \Nn$. 
As is common in both the bandits and GP literature, our analysis uses the following sub-Gaussianity assumption,  a direct consequence of which is that $\E[\epsilon_{t,b}] = 0$, for all~$t,b\in \Nn$.
\begin{assumption}\label{AssNoise}
$\epsilon_{t,b}$ are i.i.d., over both $t$ and $b$, $R-$sub-Gaussian random variables, where $R>0$ is a fixed constant. Specifically,
$
\E[e^{h\epsilon_{t,b}}]\le \exp(\frac{h^2R^2}{2}),~\forall h\in \Rr, \forall t,b\in \Nn. 
$

\end{assumption}

Let $x^*\in\argmax_{x\in\X}f(x)$ be an optimal point.
We can then measure the performance of a sequential optimizer by its \emph{strict regret}, defined as the cumulative loss compared to $f(x^*)$ over a time horizon $T$
\begin{eqnarray} \label{eq:regdef}
R(T,B;f) = \E\left[\sum_{t=1}^T \sum_{b=1}^B f(x^*) - f(x_{t,b})\right],
\end{eqnarray}
where the expectation is with respect to the randomness in noise and the possible stochasticity in the sequence of the selected batch observation points $\{x_{t,b}\}_{t\in[T], b\in [B]}$. 
Note that our regret measure (\ref{eq:regdef}) is defined for the true unknown  $f$. In contrast, the alternative {Bayesian regret} \citep[see e.g.][]{Russo2018TutorialTS, kandasamy2018parallelised} averages over a prior distribution for $f$. As upper bounds on strict regret directly apply to the Bayesian regret (but not necessarily the reverse), our results are stronger than those that can be achieved when analysing just Bayesian regret, for example when applying the technique of \cite{Russo2014} that equates TS's Bayesian regret with that of the well-studied upper confidence bound policies.

Following \cite{Chowdhury2017bandit,srinivas2010gaussian, Calandriello2019Adaptive}, our analysis assumes a regularity condition on the objective function motivated by kernelized learning models and their associated reproducing kernel Hilbert spaces \citep[RKHS,][]{berlinet2011reproducing}: 
\begin{assumption}\label{AssNorm}
Given an RKHS $H_k$, the norm of the objective function is bounded: $||f||_{H_k}\le \Bc$, for some $\Bc>0$, and $k(x,x')\le 1$, for all $x,x'\in\X$.
\end{assumption}

In the case of practically relevant kernels, Assumption~\ref{AssNorm} implies certain smoothness properties for the objective functions.

\section{Gaussian Processes and Sparse Models}\label{Sec:GPs}

GPs are powerful non-parametric Bayesian models over the space of functions \citep{Rasmussen2006} with a distribution specified by a mean function $\mu(x)$
(henceforth assumed to be zero for simplicity)
and a positive definite kernel (or covariance function) $k(x,x')$.
We provide here a brief description of the classical GP model and two sparse variational formulations.

\subsection{Exact Gaussian Process models}

Suppose that we have collected a set of location-observation tuples $\H_{t} = \{\Xb_{t},\yb_{t}\}$, where $\Xb_{t}$ is the $tB\times d$ matrix of locations with rows $[\Xb_{t}]_{(s-1)B+b} = x_{s,b}$, and $\yb_{t}$ is the $tB$-dimensional column vector of observations with elements $[\yb_{t}]_{(s-1)B+b}=y_{s,b}$, for all $s\in[t], b\in[B]$.
Then, assuming a  Gaussian observation noise
, the posterior of the GP model $\hat{f}$ 
given the set of past observations $\H_{t}$, is also a GP with mean $\mu_{t}(\cdot)$, variance $\sigma^2_{t}()$ and kernel function $k_t(\cdot,\cdot)$
specified as
\begin{eqnarray}
\mu_t(x) =  k^{\TP}_{\Xb_t,x} (K_{\Xb_t,\Xb_t}+\tau \Ib)^{-1} \yb_{t},
\quad
k_{t}(x,x') =  k(x,x') -  k^{\TP}_{\Xb_t,x} (K_{\Xb_t,\Xb_t}+\tau \Ib)^{-1} k_{\Xb_t,x'},\label{GPt}
\end{eqnarray}
and $\sigma^2_{t}(x)= k_{t}(x,x)$, with $k_{\Xb_t,x}$ the $tB$ dimensional column vector with entries $[k_{\Xb_t,x}]_{(s-1)B+b}= k(x_{s,b},x)$, and $K_{\Xb_t,\Xb_t}$ the ${tB}\times{tB}$ positive definite covariance matrix with entries $[K_{\Xb_t,\Xb_t}]_{(s-1)B+b,(s'-1)B+b'} = k(x_{s,b}, x_{s',b'})$. 
We directly see from (\ref{GPt}) that accessing the posterior expressions require an $O((tB)^3)$ matrix inversion, which is a computational bottleneck for large values of $tB$. 

Note that in our problem formulation $f$ is fixed and observation noise has an unknown sub-Gaussian distribution. Using  a GP prior and assuming a Gaussian noise is merely for ease of modelling and does not affect our assumptions on $f$ and $\epsilon_{t,b}$. The notation  $\hat{f}$ is thus used to distinguish the GP model from the fixed $f$.




\subsection{Sparse Variational Gaussian Process Models with Inducing Points}\label{Sec:SVGP}

To overcome the cubic cost of exact GPs, SVGPs \citep{Titsias2009Variational,Hensman2013} instead approximate the GP posterior  
through a  set of \emph{inducing points} $\Zb_t = \{z_1,..., z_{m_t}\}$ ($z_i \in \X$, with $m_t << tB$). Conditioning on the \emph{inducing variables}
$\ub_t = \hat{f}(\Zb_t)$ (rather than the $tB$ observations in $\yb_t$) and specifying a prior Gaussian density $q_{t}(\ub_t) = \Nc(\mb_t, \Sb_t)$, yields an approximate posterior distribution that, crucially, is still  
a GP but with the significantly reduced computational complexity of $O(m_t^2t)$. The posterior mean and covariance of the SVGP are given in closed form as
\begin{eqnarray}\nn
{\mu}^{(s)}_t(x) = k_{\Zb_t,x}^{\TP} K^{-1}_{\Zb_t,\Zb_t}\mb_t\quad
{k}^{(s)}_t(x, x') = k(x,x')+ k_{\Zb_t,x}^{\TP}K^{-1}_{\Zb_t,\Zb_t} (\Sb_t - K_{\Zb_t,\Zb_t})K^{-1}_{\Zb_t,\Zb_t} k_{\Zb_t,x'}.\nn
\end{eqnarray}
The variational parameters $\mb_t$ and $\Sb_t$ are set as the maximizers of the evidence lower bound (ELBO, see Appendix~\ref{app:SVGP} for the details) and can be 
optimized numerically with mini-batching \citep{Hensman2013}. There are various standard ways in practice to select the locations of the inducing points $\Zb_t$, e.g. by using an experimental design, sampling from a k-DPP (that stands for determinantal point process),
or by optimizing them along with the inducing variables.

\subsection{Sparse Variational Gaussian Process Models with Inducing Features}\label{Subsec:feature}

An alternative approximation strategy is using inducing feature approximations \citep{hensman2017variational,Burt2019Rates,Dutordoir2020spherical}. Here, we
define inducing variables as the linear integral transform of $\hat{f}$ with respect to some \emph{inducing features} \citep{lazaro2009inter} $\psi_1(x),..,\psi_{m_t}(x)$, i.e we set our $i^{\textrm{th}}$ inducing variable as  $u_{t,i} = \int_{\X} \hat{f}(x)\psi_i(x)dx$.  Courtesy of Mercer's theorem, we can {usually} decompose our chosen kernel $k$ as the inner product of possibly infinite dimensional feature maps (see Theorem 4.1 in \cite{Kanagawa2018}) to provide the expansion $k(x,x') = \sum_{j=1}^{\infty}\lambda_j\phi_j(x).\phi_j(x')$ for eigenvalues $\{\lambda_j\in \Rr^+\}_{j=1}^{\infty}$ and eigenfunctions $\{\phi_j\in H_k\}_{j=1}^\infty$. If we set our inducing features to be the $m_t$ eigenfunctions with largest eigenvalues, it can be shown that  $\text{cov}(u_{t,i}, u_{t,j}) = \lambda_j \delta_{i,j}$ and $\text{cov}(u_{t,j}, \hat{f}(x)) = \lambda_j \phi_j(x)$ , yielding an approximate Gaussian Process model with  posterior mean and covariance given by
\begin{eqnarray}\nn
{\mu}^{(s)}_t(x) = \phib_{m_t}^{\TP}(x)\mb_t\qquad {k}^{(s)}_t(x,x') = k(x,x') +   \phib_{m_t}^{\TP}(x)(\Sb_t-\Lambda_{m_t})\phib_{m_t}(x').
\end{eqnarray}
Here, $\mb_t$ and $\Sb_t$ are inducing parameters (as above),  $\phib_{m_t}(x)\triangleq [\phi_1(x),...,\phi_{m_t}(x)]^{\TP}$ is the truncated feature vector and $\Lambda_{m_t}$ is the ${m_t}\times {m_t}$ diagonal matrix of eigenvalues, $[\Lambda_{m_t}]_{i,j}=\lambda_i\delta_{i,j}$. 


Inducing feature approximations have strong advantages, in particular a reduced computational cost and the fact that no inducing points need to be specified. However, accessing these eigenfeatures
require the Mercer decomposition of the used kernel, which is available for certain kernels on manifolds \citep{Borovitskiy2020,Dutordoir2020spherical}, but limited to low dimensions for others \citep{zhu1997gaussian,solin2020hilbert}.

\section{Scalable Thompson Sampling using Gaussian Process Models (S-GP-TS)}\label{Sec:Alg}

At each BO step $t$, GP-TS proceeds by drawing $B$ i.i.d.~samples $\{\hat{f}_{t,b}\}_{b\in [B]}$  from the posterior distribution of $\hat{f}$ and finding their maximizers, i.e. we select samples $x_{t,b}$ satisfying
\begin{eqnarray}
    \{x_{t,b} = \argmax_{x\in\X}\hat{f}_{t,b}(x)\}_{b\in [B]} \label{eq:maxsample}.
\end{eqnarray} However, since $\hat{f}_{t,b}$ is an infinite dimensional object, generating such samples is computationally challenging. Consequently, it is common to resort to approximate strategies, the most simple of which is to sample across an $N_t$ point discretization $D_t$ of $\X$ \citep{kandasamy2018parallelised} which can be obtained with an $O(N_t^3)$ cost (due to a required Cholesky decomposition). 

To improve the computational  efficiency of TS, a classical strategy \citep{hernandez2014predictive,Hernandez2014features} is to rely on kernel decompositions.
For instance, a sample $\hat{f}$ from a GP  can be expressed as a randomly weighted sum of the kernel's eigenfunctions  $
\hat{f}(x) =  \sum_{j=1}^\infty \sqrt{\lambda_j} w_j \phi_j(x),
$ or, in the case of shift-invariant kernels, the kernel's Fourier features $\psi_j(x)$ (see \cite{bochner1959lectures}) as $
\hat{f}(x) =  \sum_{j=1}^\infty w_j \psi_j(x)$. By truncating these infinite expansions to contain only the $M$ eigenfunctions with largest eigenvalues or $M$ random Fourier features, we have access to approximate but analytically tractable samples. 
For both expansions, the weights $w_j$ are sampled independently from a standard normal distribution. Conditioned on current $tB$ observations, the posterior distribution of $w_j$ are Gaussian with mean and covariance functions that can be calculated with an $O(M^3)$ computations, resulting in an $O(M^3+BNM)$ cost to draw $B$ Thompson samples.

Fast approximation strategies described above avoid costly matrix operations and work best only when sampling from GP priors. Posterior GP distributions are often too complex to be well-approximated by a finite feature representation \citep{wang2018batched,Mutny2018SGPTS,Calandriello2019Adaptive}. 
The recent work of \cite{wilson2020efficientsampling} tackled this issue by using truncated feature representations only to approximate the prior GP and a separate model update term to approximate posterior samples. For SVGP models, this has been shown to yield more accurate Thompson samples whilst incurring only an $O((m_t+M)BN)$, on top of the $O(tBm_t^2)$ SVGP model fit, per optimization step $t$.


For our theoretical analysis, we consider two distinct decoupled sampling rules inspired by \cite{wilson2020efficientsampling} , one for each of the two SVGP formulations presented above \citep[see][ for derivations and similar expressions for Fourier decompositions]{wilson2020efficientsampling}. The first rule is referred to as \emph{Decoupled Sampling with Inducing Points} and is defined as
\begin{eqnarray}\label{SampIndPoints}
\tilde{f}_{t}(x)
=\sum_{j=1}^M \alpha_t\sqrt{\lambda_j}w_j\phi_j(x) + \sum_{j=1}^{m_t}  v_{t,j} k(x,z_j),
\end{eqnarray}
where we have coefficients $v_{t,j} = [K^{-1}_{\Zb_t, \Zb_t}(\alpha_t(\ub_t - \mb_t) +\mb_t - \alpha_t\Phib_{m_t,M} \Lambda_M^{\frac{1}{2}}\wb_M)]_j$ for $\Phib_{m_t,M} = [\phib_M(z_1),..., \phib_M(z_{m_t})]^{\TP}$ and $\wb_{M} = [w_1,...,w_{M}]^{\TP}$. The weights $w_i$ are drawn i.i.d from $\Nc(0,1)$. 
(\ref{SampIndPoints})  is a modification of the sampling rule of \cite{wilson2020efficientsampling} where we have added a  scaling parameter $\alpha_t\in \Rr$ (with $\alpha_t=1$, the sampling rule of~\cite{wilson2020efficientsampling} is recovered).
When set to be greater than one, $\alpha_t$ serves to increases the variability of the approximate function samples (without changing their mean) and is used in our analysis to ensure sufficient exploration.

To efficiently sample from our second class of SVGP models, we also consider \emph{Decoupled Sampling with Inducing Features}: 
\begin{eqnarray}\label{SampIndVariables}
\tilde{f}_{t}(x)
=
\sum_{j=1}^M \alpha_t\sqrt{\lambda_j}w_j\phi_j(x) + \sum_{j=1}^{m_t} v_{t,j} {\lambda_j}\phi_j(x),
\end{eqnarray}
where $v_{t,j} = [\Lambda_{m_t}^{-1}(\alpha_t(\ub_t -\mb_t)+\mb_t -\alpha_t\Lambda_{m_t}^{\frac{1}{2}} \wb_{m_t})]_j$ for $\Lambda_{m_t}$ defined in Section \ref{Subsec:feature}.


\section{Regret Analysis of S-GP-TS}\label{Sec:Analysis}
Here, we first establish an upper bound on the regret of any approximate GP model (Theorem~\ref{The:RegretBoundsApproximate}) based on the quality of their approximate posterior, as parameterized in Assumptions~\ref{Ass1} and~\ref{Ass2}. We then discuss the consequences of Theorem~\ref{The:RegretBoundsApproximate} for the regret bounds and the computational complexity of S-GP-TS methods based on SVGPs and the decoupled sampling rules~\eqref{SampIndPoints} and~\eqref{SampIndVariables}.


\subsection{Regret Bounds Based on the Quality of Approximations}\label{Sec:RegSec}

Consider a TS algorithm using an approximate GP model. In particular, assume an approximate model is provided where $\tilde{k}_t$, $\tilde{\sigma}_t$ and $\tilde{\mu}_t$ are approximations of ${k}_t$, ${\sigma}_t$ and ${\mu}_t$, respectively. At each time $t$, a batch of $B$ samples $\{\tilde{f}_{t,b}\}_{b=1}^{B}$ is drawn from a GP with mean $\tilde{\mu}_{t-1}$ and the scaled covariance $\alpha_t^2\tilde{k}_{t-1}$. The batch of observation points $\{x_{t,b}\}_{b=1}^B$ are selected as the maximizers of $\{\tilde{f}_{t,b}\}_{b=1}^B$ over a discretization $D_t$ of the search space.

We start our analysis by making two assumptions on the \emph{quality} of approximations $\tilde{\mu}_t$, $\tilde{\sigma}_t$ of the posterior mean and the standard deviation. This parameterization is agnostic to the particular sampling rule (governing $\tilde{\mu}_t$ and $\tilde{\sigma}_t$)
and
provides valuable intuition that can be applied to any approximate method. When it comes to S-GP-TS (as the model governing $\tilde{\mu}_t$, $\tilde{\sigma}_t$), we show, in Sec.~\ref{Sec:prop1}, that these assumptions are satisfied under some conditions on the value of the parameters of the sampling rules.

\begin{assumption}[{quality} of the approximate standard deviation]\label{Ass1}
For the approximate $\tilde{\sigma}_{t}$, the exact $\sigma_t$, and for all $x\in \X$,
\[
\frac{1}{\underline{a}_t}\sigma_t(x)-\epsilon_t\le \tilde{\sigma}_{t}(x) \le \bar{a}_t\sigma_t(x)+\epsilon_t,
\]
where $1\le\underline{a}_t\le\underline{a}$, $1\le\bar{a}_t\le\bar{a}$ for all $t\ge1$ and some constants $\underline{a}, \bar{a}\in\Rr$, and $0\le\epsilon_t\le \epsilon$ for all $t\ge1$ and some small constant $\epsilon\in \Rr$.
\end{assumption}

\begin{assumption}[{quality} of the approximate prediction]\label{Ass2}
For the approximate $\tilde{\mu}_t$, the exact $\mu_t$ and $\sigma_t$, and for all $x\in \X$,
\[
|\tilde{\mu}_t(x) - \mu_t(x)| \le c_t {\sigma}_t(x),
\]
where $0\le c_t\le c$ for all $t\ge1$ and some constant $c\in \Rr$.
\end{assumption}
The following Lemma establishes a concentration inequality for the approximate statistics using the one for exact statistics~\cite[][Theorem $2$]{Chowdhury2017bandit}.

\begin{lemma}\label{Lemma:ConIneqApprox}
Under Assumptions~\ref{AssNoise},~\ref{AssNorm},~\ref{Ass1}~and~\ref{Ass2}, with probability at least $1-\delta$,
$
|f(x)-\tilde{\mu}_t(x)| \le  \u_t (\tilde{\sigma}_t(x)+\epsilon_t)
$, where $\u_t(\delta)=\underline{a}_t\left( \Bc+R\sqrt{2(\gamma_{tB}+1+\log(1/\delta))} +c_t\right)$.
\end{lemma}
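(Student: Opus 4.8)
The plan is to reduce the concentration inequality for the approximate statistics to the known concentration inequality for the exact GP posterior, namely $|f(x) - \mu_t(x)| \le \beta_t \sigma_t(x)$ holding with probability $1-\delta$ for all $x\in\X$ simultaneously, where $\beta_t = \Bc + R\sqrt{2(\gamma_{tB} + 1 + \log(1/\delta))}$ (this is Theorem~2 of \cite{Chowdhury2017bandit}, applied with the $tB$ observations collected through round $t$, and $\gamma_{tB}$ the maximal information gain). I would work on the event of probability $\ge 1-\delta$ on which this exact bound holds, and show the claimed approximate bound holds deterministically on that event.

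First I would apply the triangle inequality to split
\[
|f(x) - \tilde{\mu}_t(x)| \le |f(x) - \mu_t(x)| + |\mu_t(x) - \tilde{\mu}_t(x)| \le \beta_t \sigma_t(x) + c_t \sigma_t(x) = (\beta_t + c_t)\sigma_t(x),
\]
where the second term is controlled by Assumption~\ref{Ass2}. Next I would convert the bound in terms of $\sigma_t$ into a bound in terms of $\tilde{\sigma}_t$ using the lower inequality of Assumption~\ref{Ass1}, which gives $\sigma_t(x) \le \underline{a}_t(\tilde{\sigma}_t(x) + \epsilon_t)$. Substituting yields
\[
|f(x) - \tilde{\mu}_t(x)| \le \underline{a}_t(\beta_t + c_t)(\tilde{\sigma}_t(x) + \epsilon_t) = \underline{a}_t\bigl(\Bc + R\sqrt{2(\gamma_{tB} + 1 + \log(1/\delta))} + c_t\bigr)(\tilde{\sigma}_t(x) + \epsilon_t),
\]
which is exactly $\u_t(\delta)(\tilde{\sigma}_t(x) + \epsilon_t)$ with the stated $\u_t(\delta)$. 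Since all the steps after conditioning on the exact-concentration event are pointwise deterministic manipulations valid for every $x\in\X$, the union over $x$ is already handled by the uniform-in-$x$ nature of the cited exact bound.

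The only subtlety worth checking carefully — and what I expect is the main (though mild) obstacle — is the bookkeeping of which $\delta$, which horizon parameter, and which information-gain quantity enter: the cited result must be invoked with the cumulative batch data of size $tB$, so the relevant complexity term is $\gamma_{tB}$ rather than $\gamma_t$, and one should confirm that the i.i.d. sub-Gaussian noise over both $t$ and $b$ (Assumption~\ref{AssNoise}) together with the RKHS norm bound (Assumption~\ref{AssNorm}) indeed meet the hypotheses of \cite{Chowdhury2017bandit}'s Theorem~2 in the batch setting. Everything else is a direct chaining of Assumptions~\ref{Ass1} and~\ref{Ass2} with the triangle inequality, so no further estimates are needed.
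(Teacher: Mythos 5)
Your proposal is correct and follows essentially the same route as the paper's proof: triangle inequality to isolate $|\tilde{\mu}_t(x)-\mu_t(x)|$, Assumption~\ref{Ass2} to bound it by $c_t\sigma_t(x)$, Theorem~2 of \cite{Chowdhury2017bandit} for the exact concentration, and the lower inequality of Assumption~\ref{Ass1} to convert $\sigma_t(x)\le \underline{a}_t(\tilde{\sigma}_t(x)+\epsilon_t)$. Your remark about carefully tracking $\gamma_{tB}$ versus $\gamma_t$ for the cumulative batch data is a fair point of bookkeeping that the paper's appendix glosses over.
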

Proof is provided in Appendix~\ref{app:Proofs}. Here, $\gamma_s$ is the \textit{maximal information gain}:
    $\gamma_s = \max_{A\subset\X, |A|= s}\Ic([y(x)]_{x\in A}; [\hat{f}(x)]_{x\in A})$,
where $\Ic([y(x)]_{x\in A}; [\hat{f}(x)]_{x\in A})$
denotes the mutual information~\citep[][Chapter $2$]{cover1999elements} between observations and the underlying GP model. The maximal information gain can itself be bounded for a specific kernel (see Sec. \ref{Sec:appl}).

Following \cite{srinivas2010gaussian} and \cite{Chowdhury2017bandit}, we consider a discretization $D_t$ of the search space satisfying the following assumption. 

\begin{assumption}\label{AssDisc}
The discretization $D_t$ is designed in a way that  $|f(x)-f(\mathtt{x}^{(t)})|\le 1/t^2$ for all $x\in\X$, where $\mathtt{x}^{(t)}=\argmin_{x'\in D_t}||x-x'||$ is the closest point (in Euclidean norm) to $x$ in $D_t$. 
The size of this discretization satisfies $|D_t|=N_t\le C(d,B) t^{2d}$ where $C(d,B)$ is independent of $t$ (\cite{Chowdhury2017bandit,srinivas2010gaussian}). 
\end{assumption}

We are now in a position to present regret bounds based on the quality of GP approximations:
\begin{theorem}\label{The:RegretBoundsApproximate}
Consider S-GP-TS with $\alpha_t = 2\u_{t}(1/(t^2))$. Under Assumptions~\ref{AssNoise},~\ref{AssNorm},~\ref{Ass1}~,~\ref{Ass2} and~\ref{AssDisc}, the regret defined in~\eqref{eq:regdef}, satisfies
\begin{eqnarray}\nn
R(T,B;f) &\le& 
30\bar{a}\beta_TB\sqrt{\frac{2T\gamma_T}{\log(1+\frac{1}{\tau})}}+
(31\beta_T+\alpha_T)\epsilon TB + 15B\Bc+2B \\
&&\hspace{-5em} ={O}\left( \underline{a}\bar{a}BR\sqrt{d\gamma_T(\gamma_{TB}+\log(T)) T\log(T)} + \underline{a}\epsilon TBR\sqrt{d(\gamma_{TB}+\log(T))\log(T)}  \right),
\end{eqnarray}
where $\beta_t = \alpha_t(b_t+\frac{1}{2})$ with $b_t = \sqrt{2\log(N_t t^2)}$. 
\end{theorem}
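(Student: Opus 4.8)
The plan is to follow the standard Thompson-sampling regret-decomposition argument of \cite{srinivas2010gaussian,Chowdhury2017bandit}, but carried out entirely in terms of the \emph{approximate} statistics $\tilde\mu_t,\tilde\sigma_t$, and to absorb the approximation slack using Assumptions~\ref{Ass1} and~\ref{Ass2} together with Lemma~\ref{Lemma:ConIneqApprox}. First I would fix, for each round $t$, two high-probability events: (i) the concentration event of Lemma~\ref{Lemma:ConIneqApprox}, giving $|f(x)-\tilde\mu_{t-1}(x)|\le \u_{t-1}(\tilde\sigma_{t-1}(x)+\epsilon_{t-1})$ simultaneously over the (finite) discretization $D_t$, and (ii) a Gaussian-tail event for the drawn samples $\tilde f_{t,b}$, which have mean $\tilde\mu_{t-1}$ and covariance $\alpha_t^2\tilde k_{t-1}$, so that $|\tilde f_{t,b}(x)-\tilde\mu_{t-1}(x)|\le b_t\,\alpha_t\tilde\sigma_{t-1}(x)$ on $D_t$ with $b_t=\sqrt{2\log(N_t t^2)}$; a union bound over $|D_t|=N_t\le C(d,B)t^{2d}$ controls both. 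Define the ``upper confidence'' proxy $U_t(x)=\tilde\mu_{t-1}(x)+\beta_t(\tilde\sigma_{t-1}(x)+\epsilon_{t-1})$ with $\beta_t=\alpha_t(b_t+\tfrac12)$; on the good events $U_t$ dominates $f$ on $D_t$, and Assumption~\ref{AssDisc} transfers this to all of $\X$ up to the discretization error $1/t^2$.

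Next I would carry out the key per-step bound. For each batch index $b$, decompose the instantaneous regret $f(x^*)-f(x_{t,b})$ by inserting $\pm f(\mathtt{x^{*(t)}})$, $\pm U_t(\mathtt{x^{*(t)}})$, $\pm U_t(x_{t,b})$, and using that $x_{t,b}$ maximizes $\tilde f_{t,b}$ over $D_t$. The crucial ``optimism in expectation'' ingredient is that, because $\alpha_t=2\u_{t-1}(1/t^2)$ inflates the sampling variance, with at least a constant probability (over the sample randomness) $\tilde f_{t,b}(\mathtt{x^{*(t)}})\ge U_t(\mathtt{x^{*(t)}})\ge f(x^*)-1/t^2$; this is exactly the role of the scaling parameter flagged in the text. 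Standard TS bookkeeping then yields, in expectation, $f(x^*)-f(x_{t,b})\le c_0\,\beta_t\big(\tilde\sigma_{t-1}(x_{t,b})+\epsilon_{t-1}\big)+\text{(discretization terms }O(1/t^2))$ for an absolute constant $c_0$. Now I invoke Assumption~\ref{Ass1} to replace $\tilde\sigma_{t-1}(x_{t,b})$ by $\bar a_t\sigma_{t-1}(x_{t,b})+\epsilon_{t-1}$, splitting the bound into an ``exact-variance'' part $O(\bar a\,\beta_t\sigma_{t-1}(x_{t,b}))$ and an ``$\epsilon$-overhead'' part $O((\beta_t+\alpha_t)\epsilon)$.

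Summing over $t\in[T]$ and $b\in[B]$: the discretization terms sum to $O(B)$ since $\sum_t 1/t^2=O(1)$ (this produces the $15B\Bc+2B$ and the constants). The $\epsilon$-term contributes $O((\beta_T+\alpha_T)\epsilon TB)$ after bounding $\beta_t\le\beta_T$, $\alpha_t\le\alpha_T$ monotonically. For the main term, $\sum_{t,b}\bar a\,\beta_t\sigma_{t-1}(x_{t,b})\le \bar a\beta_T\sum_{t,b}\sigma_{t-1}(x_{t,b})$, and then Cauchy--Schwarz gives $\sum_{t,b}\sigma_{t-1}(x_{t,b})\le\sqrt{TB\sum_{t,b}\sigma_{t-1}^2(x_{t,b})}$. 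The posterior variances along the observed sequence are controlled by information gain: $\sum_{t,b}\sigma_{t-1}^2(x_{t,b})\le \tfrac{2}{\log(1+1/\tau)}\gamma_{TB}$ — but one must be slightly careful, since here $B$ points are chosen per round against the \emph{same} posterior $k_{t-1}$ rather than sequentially; the standard fix is either a within-batch telescoping of the posterior variances (as in batch-GP-UCB analyses) or noting $\sum_{t,b}\sigma_{t-1}^2(x_{t,b})\le B\sum_t \sigma_{t-1}^2(\cdot)$-type bounds, which costs an extra $B$ and still matches the claimed $B\sqrt{T\gamma_T}$ scaling (the statement uses $\gamma_T$, i.e. the $T$-round information gain, precisely to accommodate this batching). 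Assembling the pieces and substituting $\beta_T=\alpha_T(b_T+\tfrac12)$, $\alpha_T=2\u_{T-1}(1/T^2)$ with $\u_t\le\underline a(\Bc+R\sqrt{2(\gamma_{tB}+1+\log(t^2))}+c)$, $b_T=\sqrt{2\log(N_Tt^2)}=O(\sqrt{d\log T})$, and $N_T\le C(d,B)T^{2d}$, yields the explicit constant-laden bound and then the stated $O(\cdot)$ form.

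The main obstacle I anticipate is the batch step: making the optimism-in-expectation argument and the information-gain sum simultaneously tight when all $B$ samples in a round share the posterior $k_{t-1}$, so that the leading term is $B\sqrt{T\gamma_T}$ rather than, say, $\sqrt{BT\gamma_{TB}}$ or something with a worse $B$-dependence; carefully tracking how the anti-concentration constant (the probability that a sample exceeds its UCB proxy) interacts with the $\underline a_t\le\underline a$ lower-bound side of Assumption~\ref{Ass1} and with $\alpha_t$ is the delicate part. A secondary technical point is verifying that $\alpha_t=2\u_{t-1}(1/t^2)$ is large enough — via Assumption~\ref{Ass1}'s lower bound $\tilde\sigma_{t-1}\ge\sigma_{t-1}/\underline a_t-\epsilon_{t-1}$ — to guarantee the required anti-concentration even after the approximation shrinks $\tilde\sigma$ relative to $\sigma$, which is exactly why $\u_t$ carries the factor $\underline a_t$ and why the final bound scales with $\underline a\bar a$.
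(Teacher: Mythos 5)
Your overall architecture matches the paper's (the two per-round concentration events, an anti-concentration step at the discretized optimum, the sufficiently-explored-set bookkeeping, and a batch information-gain lemma that reduces $\sum_{t,b}\sigma_{t-1}(x_{t,b})$ to $B$ times a single-point-per-round sum, giving $B\sqrt{T\gamma_T}$). However, there is a genuine gap at the step you call the ``crucial optimism-in-expectation ingredient,'' and it is exactly the point where this analysis must depart from exact GP-TS. You assert that, because $\alpha_t$ inflates the sampling variance, $\tilde f_{t,b}({\mathtt{x}^*}^{(t)})$ exceeds the target with constant probability, deferring to a ``secondary verification'' that $\alpha_t$ is large enough. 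This cannot be repaired by any choice of $\alpha_t$: on the concentration event the gap $f({\mathtt{x}^*}^{(t)})-\tilde\mu_{t-1}({\mathtt{x}^*}^{(t)})$ can be as large as $\tfrac12\alpha_t\bigl(\tilde\sigma_{t-1}({\mathtt{x}^*}^{(t)})+\epsilon_t\bigr)$ while the sample's standard deviation is only $\alpha_t\tilde\sigma_{t-1}({\mathtt{x}^*}^{(t)})$, so the normalized gap is $\tfrac12\bigl(1+\epsilon_t/\tilde\sigma_{t-1}({\mathtt{x}^*}^{(t)})\bigr)$ --- independent of $\alpha_t$ and unbounded whenever the approximate standard deviation at the optimum is small relative to the additive error $\epsilon_t$. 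No constant anti-concentration probability is available in general. The paper resolves this with a case split: if $\tilde\sigma_{t-1}({\mathtt{x}^*}^{(t)})\le\epsilon_t$ (Condition I), anti-concentration is abandoned and the instantaneous regret is bounded directly, since the prediction at the optimum is then accurate to within $O((\beta_t+\alpha_t)\epsilon_t)$; if $\tilde\sigma_{t-1}({\mathtt{x}^*}^{(t)})>\epsilon_t$ (Condition II), the normalized gap is at most $1$ and the Gaussian anti-concentration bound yields $p=1/(4\sqrt{\pi})$. This split is what generates the additive $(31\beta_T+\alpha_T)\epsilon TB$ term in the statement, which your sketch would not account for correctly.

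A secondary inaccuracy: you require $\tilde f_{t,b}({\mathtt{x}^*}^{(t)})\ge U_t({\mathtt{x}^*}^{(t)})$ with constant probability, i.e.\ exceedance of the mean by $\beta_t(\tilde\sigma+\epsilon)$ with $\beta_t=\alpha_t(b_t+\tfrac12)$ and $b_t=\sqrt{2\log(N_tt^2)}$. A Gaussian with standard deviation $\alpha_t\tilde\sigma$ exceeds its mean by $b_t\alpha_t\tilde\sigma$ only with probability of order $1/(N_tt^2)$, not a constant; the event actually needed (and proved under Condition II) is merely $\tilde f_{t,b}({\mathtt{x}^*}^{(t)})>f({\mathtt{x}^*}^{(t)})$, which lies within one scaled standard deviation of the mean. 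Your handling of the batch information-gain sum is otherwise consistent with the paper's Lemma~\ref{cumbatch}.
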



See the proof in Appendix~\ref{app:Proofs}. This regret bound scales with the product of the ratios $\underline{a}$ and $\bar{a}$, with an additive term depending on the additive approximation error in the standard deviation.

\subsection{Approximation Quality of the Decoupled Sampling Rule}\label{Sec:prop1}

For S-GP-TS with inducing points, we assume, as in \cite{Burt2019Rates},  that the inducing points are sampled according to a discrete k-DPP. While this might be costly in practice, 
\cite{Burt2019Rates} showed that $\Zb_t$ can be efficiently sampled from \emph{$\epsilon_0$ close} sampling methods without compromising the predictive quality of SVGP. For both sampling rules, we also assume in our analysis that the Mercer decomposition of the kernel is used.

The quality of the approximation can be characterized using the spectral properties of the GP kernel. Let us define the tail mass of eigenvalues $\delta_M =\sum_{i=M+1}^\infty\lambda_i\bar{\phi}^{2}_i$ where $\bar{\phi}_i =\max_{x\in\X} \phi_i(x)$.
With decaying eigenvalues, including sufficient eigenfunctions in the feature representation results in a small $\delta_M$. 
In addition, \cite{Burt2019Rates} showed that, for an SVGP, a sufficient number of inducing variables ensures that the Kullback–Leibler (KL) divergence between the approximate and the true posterior distributions diminishes. Consequently, the approximate posterior mean and the approximate posterior variance converge to the true ones. Building on this result, we are able to prove Proposition~\ref{Prop:paramters} on the quality of approximations.

\begin{proposition}\label{Prop:paramters}
For S-GP-TS based on sampling rule~\eqref{SampIndPoints} with $\alpha_t=1$ and an SVGP using an $\epsilon_0$ close k-DPP for selecting $\Zb_t$, with probability at least $1-\delta$, Assumptions~\ref{Ass1} and~\ref{Ass2} hold with parameters $c_t =  \sqrt{\kappa_t}$, $\underline{a}_t = \frac{1}{\sqrt{1-\sqrt{3\kappa_t}}}$, $\bar{a}_t = \sqrt{1+\sqrt{3\kappa_t}}$, and $\epsilon_t = \sqrt{C_1m_t\delta_M}$,
where $C_1$ is a constant specified in the appendix and $\kappa_t = \frac{2tB(m_t+1)\delta_{m_t}}{\tau\delta} + \frac{4tB\epsilon_0}{\tau\delta}$.

For S-GP-TS based on sampling rule~\eqref{SampIndVariables} with $\alpha_t=1$, Assumptions~\ref{Ass1} and~\ref{Ass2} hold with parameters $c_t =  \sqrt{\kappa_t}$, $\underline{a}_t = \frac{1}{\sqrt{1-\sqrt{3\kappa_t}}}$, $\bar{a}_t = \sqrt{1+\sqrt{3\kappa_t}}$, and $\epsilon_t = \sqrt{C_1m_t\delta_M}$,
where $C_1$ is the same constant as above and $\kappa_t = \frac{2tB\delta_{m_t}}{\tau}$.
\end{proposition}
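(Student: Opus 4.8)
The plan is to translate the statement into a chain of comparisons between the SVGP posterior statistics $\mu^{(s)}_t, \sigma^{(s)}_t$ and the statistics $\tilde\mu_t, \tilde\sigma_t$ actually induced by the decoupled sampling rules~\eqref{SampIndPoints} and~\eqref{SampIndVariables}, and then between $\mu^{(s)}_t, \sigma^{(s)}_t$ and the exact GP statistics $\mu_t, \sigma_t$. The first comparison is essentially algebra: one computes the mean and covariance of $\tilde f_t(x)$ as a Gaussian in the random weights $\wb_M$ and the inducing variables $\ub_t\sim q_t$. For sampling rule~\eqref{SampIndVariables} one should recover $\E[\tilde f_t(x)] = \mu^{(s)}_t(x)$ exactly and $\mathrm{var}[\tilde f_t(x)] = k^{(s)}_t(x,x) + \text{(truncation remainder)}$; the remainder is controlled by the tail mass $\delta_M = \sum_{i>M}\lambda_i\bar\phi_i^2$, giving the additive term $\epsilon_t = \sqrt{C_1 m_t \delta_M}$ after bounding the cross terms coming from $\Phib_{m_t,M}$ (whose entries are bounded in terms of $\bar\phi_i$) and the $m_t\times m_t$ matrix $\Lambda_{m_t}^{-1}$ (or $K_{\Zb_t,\Zb_t}^{-1}$ in the inducing-point case). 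First I would carry out this computation carefully for the feature case, where there is no inducing-point matrix to invert, and then reuse it for the inducing-point case, where the extra factor is a bound on $\|K_{\Zb_t,\Zb_t}^{-1}\|$ that the k-DPP assumption and Assumption~\ref{AssNorm} make possible.

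Next I would bring in the results of~\cite{Burt2019Rates}. Their analysis bounds the KL divergence $\mathrm{KL}(q_t \,\|\, \hat p_t)$ between the variational posterior and the exact GP posterior by a quantity proportional to $tB(m_t+1)\delta_{m_t}/(\tau)$ (plus the $\epsilon_0$ correction for $\epsilon_0$-close sampling), which is exactly the shape of $\kappa_t$ up to the $1/\delta$ factor. The $1/\delta$ enters via Markov's inequality: the bound in~\cite{Burt2019Rates} is in expectation over the random inducing set, so to get a high-probability statement I would apply Markov to $\mathrm{KL}(q_t\,\|\,\hat p_t)$ and absorb the $1/\delta$, then union-bound over $t$ if needed (or state it for a fixed $t$ and let the outer proof handle the union). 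From a KL bound one obtains, via Pinsker-type or direct Gaussian computations, that $|\mu^{(s)}_t(x) - \mu_t(x)| \lesssim \sqrt{\kappa_t}\,\sigma_t(x)$ and $(1-\sqrt{3\kappa_t})\sigma_t^2(x) \le (\sigma^{(s)}_t(x))^2 \le (1+\sqrt{3\kappa_t})\sigma_t^2(x)$ — these are essentially Lemmas already present in~\cite{Burt2019Rates}, giving $c_t = \sqrt{\kappa_t}$, $\underline a_t = 1/\sqrt{1-\sqrt{3\kappa_t}}$, $\bar a_t = \sqrt{1+\sqrt{3\kappa_t}}$.

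Finally I would combine the two layers: $\tilde\sigma_t$ differs from $\sigma^{(s)}_t$ only additively by (at most) $\epsilon_t$ by the first computation, and $\sigma^{(s)}_t$ is sandwiched multiplicatively between $\sqrt{1-\sqrt{3\kappa_t}}\,\sigma_t$ and $\sqrt{1+\sqrt{3\kappa_t}}\,\sigma_t$ by the second; chaining these yields Assumption~\ref{Ass1} with the stated constants (note $\tilde\mu_t = \mu^{(s)}_t$ in the feature case so no extra additive term is needed in Assumption~\ref{Ass2}, and in the inducing-point case the bias of $\tilde\mu_t$ is again absorbed into $c_t\sigma_t$ using the same KL bound). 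I expect the main obstacle to be the first step done rigorously: tracking the exact form of $\mathrm{var}[\tilde f_t(x)]$ through the $v_{t,j}$ coefficients and showing the discrepancy from $k^{(s)}_t(x,x)$ is genuinely $O(\sqrt{m_t\delta_M})$ uniformly in $x$ — in particular bounding the inducing-point-matrix factors and the cross terms $\phib_{m_t}^{\TP}(x)(\cdot)\Phib_{m_t,M}\Lambda_M^{1/2}\wb_M$ without picking up spurious dependence on $m_t$ beyond the single factor already in $C_1 m_t\delta_M$. A secondary subtlety is ensuring $\kappa_t < 1/3$ so the bounds are well-defined, which constrains how fast $m_t$ must grow and is really a hypothesis to be used downstream rather than proved here.
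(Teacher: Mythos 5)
Your proposal follows essentially the same route as the paper: the same two-layer triangle-inequality decomposition (decoupled sample vs.\ SVGP posterior, then SVGP vs.\ exact posterior), the same control of the prior-truncation error via the eigenvalue tail mass $\delta_M$ (where the paper invokes Proposition 7 of \cite{wilson2020efficientsampling} for the step you propose to compute directly), the same use of \cite{Burt2019Rates}'s KL-to-marginal-moments conversion to obtain $c_t$, $\underline{a}_t$, $\bar{a}_t$, and the same Markov-inequality origin of the $1/\delta$ factor in the k-DPP case. The only detail worth flagging is that the paper's extension of \cite{Burt2019Rates} also replaces their in-expectation bound on the trace term (taken over an assumed input distribution) with the deterministic worst-case bound $[E_t]_{i,i}\le\sum_{j>m_t}\lambda_j\bar{\phi}_j^2=\delta_{m_t}$, which your plan uses implicitly but does not call out.
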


Note that our proposition requires extending the results of~\cite{Burt2019Rates} in two non-trivial ways. First, the decoupled sampling rules introduce an additional error. 
Secondly, \cite{Burt2019Rates} built their convergence results on the assumption that the observation points $x_{t,b}$ are drawn from a prefixed distribution, which is not the case in S-GP-TS, where $x_{t,b}$ are selected according to an experimental design method. A detailed proof of Proposition~\ref{Prop:paramters} is provided in Appendix~\ref{app:Proofs}. 


\subsection{Application of Regret Bounds to Mat{\'e}rn and SE Kernels}\label{Sec:appl}

We now investigate the application of Theorem~\ref{The:RegretBoundsApproximate} to the Squared Exponential (SE) and Mat{\'e}rn kernels, widely used in practice \citep[see, e.g.,][]{Rasmussen2006, Snoek2012practicalBO}. 
%
%
%
In the case of a Mat{\'e}rn kernel with smoothness parameter $\nu>\frac{d}{2}$ it is known that $\lambda_j  =O(j^{-\frac{2\nu+d}{d}})$~\citep{MaternEigenvaluessantin2016}. For the SE kernel, we have $\lambda_j  = O(\exp(-j^{\frac{1}{d}}))$~\citep{SEEigenvalues,Gabriel2020practicalfeature}. 
With these bounds on the spectrum of the kernels and the specific bounds on the maximal information gain \citep[e.g., $\gamma_s\le O(\log(s)^{d+1})$ for SE and $\gamma_s\le O(s^{d/(2\nu+d)}\log(s))$ for Mat{\'e}rn,][]{vakili2021information},
Theorem~\ref{The:RegretBoundsApproximate} and Proposition~\ref{Prop:paramters} result in the following theorem.
\begin{theorem}\label{The:MatSE}
Under Assumptions~\ref{AssNoise} and \ref{AssNorm}, with the algorithmic parameters, kernels and sampling rules specified in Table \ref{Table:CC}, S-GP-TS offers
$
R(T,B;f) = O(B\sqrt{\gamma_T\gamma_{TB}T\log(T)}).
$

\end{theorem}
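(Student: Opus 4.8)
\textbf{Proof proposal for Theorem~\ref{The:MatSE}.}

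The plan is to specialize the general regret bound of Theorem~\ref{The:RegretBoundsApproximate} by substituting the approximation-quality parameters from Proposition~\ref{Prop:paramters} together with the known kernel spectral decay rates and the maximal information gain bounds. Recall Theorem~\ref{The:RegretBoundsApproximate} gives
\[
R(T,B;f) = O\!\left( \underline{a}\bar{a}BR\sqrt{d\gamma_T(\gamma_{TB}+\log T)T\log T} + \underline{a}\epsilon TBR\sqrt{d(\gamma_{TB}+\log T)\log T}\right),
\]
so the task reduces to showing that, for each kernel in Table~\ref{Table:CC}, one can choose the number of inducing variables $m_t$ and prior features $M$ (both growing polynomially in $t$, hence not affecting the order beyond logarithmic factors in the per-step cost) so that: (i) $\underline{a}$ and $\bar{a}$ are bounded by absolute constants, which by Proposition~\ref{Prop:paramters} requires $\kappa_t = O(1)$ (in fact $\kappa_t$ small enough that $1-\sqrt{3\kappa_t}$ is bounded away from zero); and (ii) the additive term is negligible, i.e. $\epsilon T B R\sqrt{d(\gamma_{TB}+\log T)\log T} = O(B\sqrt{\gamma_T\gamma_{TB}T\log T})$, for which it suffices that $\epsilon = O(1/T)$ up to the slowly-varying factors, i.e. $C_1 m_T \delta_M$ decays fast enough.

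First I would handle condition (i). From Proposition~\ref{Prop:paramters}, for the inducing-feature rule $\kappa_t = 2tB\delta_{m_t}/\tau$ and for the inducing-point rule $\kappa_t = 2tB(m_t+1)\delta_{m_t}/(\tau\delta) + 4tB\epsilon_0/(\tau\delta)$, where $\delta_{m} = \sum_{i=m+1}^\infty \lambda_i\bar\phi_i^2$ is the eigenvalue tail mass. Using the cited decay $\lambda_j = O(j^{-(2\nu+d)/d})$ for Mat\'ern (with $\nu > d/2$ so the tail sum converges and in fact $\delta_m = O(m^{-2\nu/d})$, modulo boundedness of $\bar\phi_i$ which I would absorb or bound via the kernel assumption $k\le 1$) and $\lambda_j = O(\exp(-j^{1/d}))$ for SE (so $\delta_m$ decays faster than any polynomial), I would solve $tB\delta_{m_t} = O(1)$ for $m_t$: a polynomial choice $m_t = \mathrm{poly}(tB)$ suffices for Mat\'ern (exponent $d/(2\nu) \cdot (1+o(1))$, possibly with an extra factor to kill the $m_t$ prefactor in the inducing-point case), and a polylogarithmic choice $m_t = \mathrm{polylog}(tB)$ suffices for SE. For the inducing-point rule I also need $\epsilon_0 = O(1/(tB))$, which is the $\epsilon_0$-closeness budget and is achievable per \cite{Burt2019Rates}. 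With these choices $\kappa_t \le \kappa < 1$ uniformly, so $\underline{a},\bar{a} = O(1)$ and also $c_t = O(1)$, which is already baked into $\u_t$ and hence $\beta_T$ only through logarithmic/$\gamma$ factors.

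Next, condition (ii): I take $M$ large enough that $\epsilon_T = \sqrt{C_1 m_T \delta_M} = O(1/T)$, i.e. $m_T\delta_M = O(1/T^2)$. Since $m_T$ is at most polynomial in $T$, and $\delta_M$ decays in $M$ at the same rate as $\delta_{m}$ above, a polynomial (Mat\'ern) resp.\ polylogarithmic (SE) choice of $M$ again suffices. Plugging $\epsilon_T = O(1/T)$ into the additive term of Theorem~\ref{The:RegretBoundsApproximate} makes it $O(B\sqrt{d(\gamma_{TB}+\log T)\log T})$, which is dominated by the leading term $O(B\sqrt{d\gamma_T(\gamma_{TB}+\log T)T\log T})$. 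Finally I absorb the $d$ and the $\log T$ and the $\gamma_{TB}+\log T$ versus $\gamma_{TB}$ discrepancies: since $\gamma_{TB} = \Omega(\log(TB))$ for the kernels in question (the information gain lower bounds), $\gamma_{TB}+\log T = O(\gamma_{TB})$, and treating $d$ as a constant and folding the residual $\log T$ factors into the stated $\log(T)$, the bound collapses to $R(T,B;f) = O(B\sqrt{\gamma_T\gamma_{TB}T\log T})$, which is the claim. I would then note in Table~\ref{Table:CC} the resulting concrete rates by substituting $\gamma_T = O(\log(T)^{d+1})$ (SE) or $\gamma_T = O(T^{d/(2\nu+d)}\log T)$ (Mat\'ern) and likewise for $\gamma_{TB}$.

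The main obstacle I anticipate is not the algebra but the bookkeeping around the eigenfunction sup-norms $\bar\phi_i$: the decay rates quoted are for the eigenvalues $\lambda_i$ alone, whereas $\delta_m$ involves $\lambda_i\bar\phi_i^2$, and for some kernels (e.g. Mat\'ern on a bounded domain with the Lebesgue measure) the $\bar\phi_i$ can grow with $i$, potentially eroding the tail decay. I would need either to invoke a uniform bound on $\bar\phi_i$ valid for the kernels and domains considered, or to restrict to a Mercer decomposition (e.g. with respect to a measure for which the eigenfunctions are uniformly bounded) for which $\delta_m$ genuinely inherits the $\lambda_i$ decay; this is presumably the content hidden in the constant $C_1$ and the phrase ``usually'' qualifying the Mercer decomposition in Section~\ref{Subsec:feature}, and it is where the proof must be careful to ensure the polynomial/polylogarithmic choices of $m_t$ and $M$ actually suffice. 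A secondary, milder point is verifying that making $m_t$ and $M$ grow polynomially in $t$ does not break the ``$m_t \ll tB$'' regime that justifies the $O(tBm_t^2)$ cost claim — for Mat\'ern with small $\nu$ relative to $d$ the exponent $d/(2\nu)$ could exceed $1$, so one should check the exponent stays below the threshold needed for the computational savings to be genuine, or at least state the regime of $\nu,d$ in which Table~\ref{Table:CC} applies.
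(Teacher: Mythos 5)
Your proposal follows essentially the same route as the paper's proof: specialize Theorem~\ref{The:RegretBoundsApproximate} via Proposition~\ref{Prop:paramters}, use the eigenvalue tails to get $\delta_m = O(m^{-2\nu/d})$ (Mat{\'e}rn) or super-polynomial decay (SE), and pick $m_t$, $M$, $\epsilon_0$ and the confidence level so that $\kappa_T$ is a small constant and the additive $\epsilon$-term is dominated. Two quantitative points where your bookkeeping drifts from what is needed to certify the specific parameters in Table~\ref{Table:CC}. First, domination of the additive term only requires $\epsilon_T \, T = O(\sqrt{\gamma_T T})$, i.e.\ $\epsilon_T = O(1/\sqrt{T})$, equivalently $m_T\delta_M = O(1/T)$ — this is the condition the paper uses ($\epsilon_T\sqrt{T}$ bounded) and it is exactly what the table's $M$ delivers (e.g.\ for Mat{\'e}rn inducing features, $m_T\delta_M = T^{d/(2\nu)}\cdot T^{-(2\nu+d)/(2\nu)} = T^{-1}$). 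Your stronger requirement $\epsilon_T = O(1/T)$, i.e.\ $m_T\delta_M = O(1/T^2)$, would force a larger $M$ than the table lists, so as written your argument would not verify the theorem with the stated parameters; the fix is just to use the weaker sufficient condition. Second, in the inducing-point case you should keep the $1/\delta$ factor from the high-probability statement (the paper sets $\delta = 1/T$, so the relevant quantity is $\kappa_T \sim T^2 m_T \delta_{m_T}$, not $T\delta_{m_T}$) and the $m_t$ prefactor; together these are what produce the exponent $2d/(2\nu-d)$ in the table rather than your $d/(2\nu)(1+o(1))$, and are also why $\epsilon_0$ is taken as $1/(T^2\log T)$ rather than $1/(tB)$. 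Your concern about the eigenfunction sup-norms $\bar{\phi}_i$ is well placed: the paper passes from $\lambda_j$ decay to $\delta_m$ decay without addressing it, implicitly assuming uniformly bounded eigenfunctions, so this is a genuine looseness in the paper rather than a defect of your plan.
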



With a batch size $B=1$ Theorem \ref{The:MatSE} recovers the same regret bounds as the exact GP-TS~\citep{Chowdhury2017bandit}. 
{We also note that for a fair comparison in terms of both the batch size and number of samples we should consider $T'=TB$
as the number of samples. In that case, our regret bound becomes $O(\sqrt{B\gamma_{T'/B}\gamma_{T'} T'\log(T'/B)})$, which scales at most with $\sqrt{B}$. That is $\sqrt{B}$ tighter than the trivial scaling with $B$.}

In order to prove Theorem~\ref{The:MatSE}, the algorithmic parameters $M$ and $m_t$ must be selected large enough such that approximation parameters $\underline{a}, \bar{a}, c, \epsilon$ in Assumptions~\ref{Ass1} and~\ref{Ass2} are sufficiently small. Using the relation between the algorithmic parameters, the approximation parameters and $m_t$ provided by Proposition~\ref{Prop:paramters}, the regret bound follows from Theorem~\ref{The:RegretBoundsApproximate}. See Appendix~\ref{app:Proofs} for a detailed proof.

The values of $M$ and $m_t$ required for Theorem \ref{The:MatSE} are summarized in Table~\ref{Table:CC}. We also show the resulting computational cost of each sampling rule (as given by $O\left(B(M+m_T)N_T T + Bm_T^2T^2\right)) $, explicitly demonstrating the improvement of S-GP-TS over the $O(BN_T^3T+B^3T^4)$ computational cost of the vanilla GP-TS. Note that, for the Mat{\'e}rn kernel under sampling rule~\eqref{SampIndPoints}, $\nu$ is required to be sufficiently larger than $\frac{d}{2}$ in order for $m_t$ to grow slower than $t$.
\begin{table*}[h]
\renewcommand{\tabcolsep}{2pt}
\renewcommand{\arraystretch}{1.5}

\centering
\begin{tabular}{c c c c}
 \Xhline{2\arrayrulewidth}
    &  & Inducing points~\eqref{SampIndPoints} & Inducing features~\eqref{SampIndVariables}\\
\multirow{2}{*}{\rotatebox[origin=c]{90}{Mat{\'e}rn}} & Condition & $m_t \sim T^{\frac{2d}{2\nu-d}}$, $M\sim T^{\frac{(2\nu+d)d}{2(2\nu-d)\nu}}$ & $m_t \sim T^{\frac{d}{2\nu}}$, $M\sim T^{\frac{(2\nu+d)d}{4\nu^2}}$ \\ 
      & Cost & \small{$O\left(BN_TT^{\frac{4\nu^2+d^2}{2(2\nu-d)\nu}}+
 BT^2\min\{T^{\frac{4d}{2\nu-d}}, T^2\} \right)$}& 
    \small{$O\left( BN_TT^{\frac{(2\nu+d)^2-2\nu d}{4\nu^2}} + BT^{\frac{2\nu+d}{\nu}}  \right)$} \\ 
  \hline
\multirow{2}{*}{\rotatebox[origin=c]{90}{SE}} &  Condition & $m_t,M\sim (\log(T))^{d}$ & $m_t,M\sim (\log(T))^{d}$ \\
 & Cost &
 \small{$O\left(BN_TT\log^{d}(T)+BT^2\log^{2d}(T)\right)$}& \small{$O\left(BN_TT\log^{d}(T)+BT^2\log^{2d}(T)\right)$}
\end{tabular}
\caption{Conditions on the number of features $m_t$ and inducing variables $M_T$ required for Theorem \ref{The:MatSE}, alongisde the resulting cost of each decoupled sampling method.}\label{Table:CC}
\end{table*}


\section{Experiments}\label{Sec:Exp}

We now provide an empirical evaluation of S-GP-TS. As \cite{wilson2020efficientsampling} have already comprehensively demonstrated the practical advantage of decoupled sampling for problems with small optimization budgets, we focus here on scalability of S-GP-TS, and in particular a) its efficiency with large batch size, b) its ability to handle large data volumes. 
We first investigate a collection of classical synthetic problems for BO, before demonstrating S-GP-TS in a challenging real-world high-throughput molecular design considered by \cite{hernandez2017parallel}. Our synthetic experiments focus on multi-modal problems with substantial observation noise, as these cannot be solved accurately with a small budget yet are still unsuitable for local, exhaustive, or deterministic optimization routines. Our implementation is provided as part of the open-source toolbox \texttt{trieste} \citep{trieste} \footnote{\url{https://github.com/secondmind-labs/trieste}} and relies also on  \texttt{gpflow} \citep{matthews2017gpflow} and \texttt{gpflux} \citep{dutordoir2021gpflux}.

{As is often the case, our regret-based analysis applies to a version of the algorithm that is slightly different to a practically viable BO method. Rather than focusing on recreating our algorithm exactly in the very limited settings (e..g a 1-d RBF kernel for which we can calculate eigen-features exactly) that are of little interest to the BO community, we instead choose to demonstrate the practical strength and unprecedented scalability of S-GP-TS by investigating an implementation that could be used by practitioners. The resulting algorithms demonstrated in this section are still well-aligned with our work through their use of sparse GP surrogate models and decoupled Thompson sampling.}


\subsection{Synthetic Benchmarks}\label{Sec:Toy}
We first consider two toy problems: Hartmann (6 dim, moderately multi-modal) with a large additive noise and Shekel (4 dim, highly multi-modal) with moderate noise,
see Appendix~\ref{app:Experiments} for the full description.
Our SVGP models use inducing points and a Mat{\'e}rn kernel with smoothness parameter $\nu=2.5$.
As eigenfunctions for this kernel are limited to small dimensions \citep{solin2020hilbert}, 
we implement decoupled TS using the easily accessible random Fourier Features (RFF). Note that \cite{wilson2020efficientsampling} have shown decoupled sampling to significantly alleviate the \emph{variance starvation} phenomenon (underestimating the variance of points far from the observations~\citep{wang2018batched,Mutny2018SGPTS}) that typically hampers the efficacy of RFFs. We use $M=1000$ features and maximise each sample as in (\ref{eq:maxsample}) using L-BFGS-B, starting from the best point among a large sample. 

{As sampling inducing points from a k-DPP is prohibitively costly} for the repeated model fitting required by BO loops, we use the the greedy variance selection method of \cite{Burt2019Rates} which is \emph{$\epsilon_0$ close} to k-DPP and has been shown to outperform optimisation of inducing points in practice. We also consider the practical alternative of choosing inducing points chosen by a k-means clustering of the observations. As the optimisation progresses, observations are likely to be concentrated in the optimal regions, so clustering would result in somehow ``targeted'' inducing points for BO.  In order to control the computational overhead of S-GP-TS {and to allow an efficient computational implementation (i.e. avoiding Tensorflow recompilation issues)}, we use a fixed number $m_t$ of points, set to either 250 or 500. Similarly, we set the covariance scaling parameter $\alpha_t=1$ {to avoid having dynamic tunable parameters, like those that plague UCB-based approaches}.


For each experiment, we run $t=50$ steps of S-GP-TS with $B=100$ (i.e. 5,000 total observations). 
For baselines, we compare against $t=750$ steps of standard sequential non-batch BO routines with an exact GP model:
Expected Improvement \citep[EI,][]{jones1998efficient}, 
Augmented Expected Improvement \citep[AEI,][]{huang2006global}, and an extension of Max-value Entropy search suitable for noisy observations
\citep[GIBBON,][]{moss2021gibbon}.
Due to the large number of steps, we only consider low-cost but high-performance acquisitions, following the cost-benefit analysis of \cite{moss2021gibbon}, and exclude the popular knowledge gradient \citep{wu2016parallel} or classical entropy search \citep{hennig2012entropy,hernandez2014predictive}. Popular existing batch acquisition functions do not scale to batches as large as $B=100$, however, we present their performance on smaller batches across additional experiments in Appendix~\ref{app:Experiments}. We report simple regret of the current believed best solution (maximizer of the current model mean) across the previously queried data points. All results are averaged over $30$ runs and reported as a function of either the number of function evaluations ($tB$ for S-GP-TS and $t$ for the baselines), or the number of BO iterations, in Figure \ref{fig:synthetic}. 
   \vspace{-2mm}
\begin{figure*}[!ht]
 \centering
   \includegraphics[trim=0mm 3mm 0mm 2mm, clip, height=50mm]{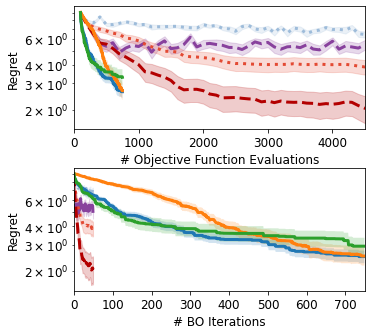}
   \includegraphics[trim=0mm 3mm 0mm 2mm, clip, height=50mm]{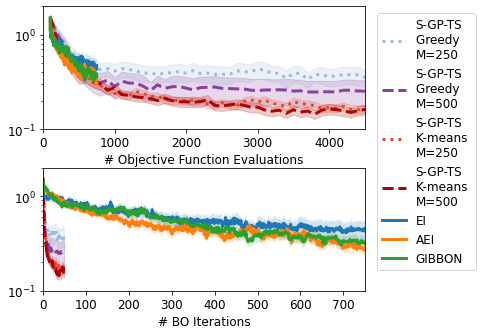}
      \vspace{-2mm}
   \caption{Simple regret on Shekel (4D, left) and Hartmann (6D, right). 
    When considering regret with respect to the total number of objective function evaluations $tB$ (top panels, purely sequential setting),  all S-GP-TS methods are initially less efficient (Shekel) or match the performance (Hartmann) of the best baselines, however the best S-GP-TS approach is able to efficiently allocate its additional budget to achieve lower final regret. When considering regret with respect to the BO iteration (bottom panels, idealised parallel setting), S-GP-TS achieves low regret in a fraction of the iterations required by the standard BO routines.
   }\label{fig:synthetic}
   \vspace{-2mm}
\end{figure*}

The fact that S-GP-TS is able to find solutions on both benchmarks with substantially improved regret than found by standard BO, provides strong evidence that S-GP-TS is effectively leveraging parallel resources. Moreover, as these higher-quality solutions were only found after large number of total evaluations, Figure \ref{fig:synthetic} also highlights the necessity for BO routines, like S-GP-TS, that can handle these larger (heavily parallelized) optimization budgets. {We reiterate that the existing BO baselines cannot handle as many evaluations as S-GP-TS, becoming prohibitively slow once we surpass $750$ data-points)}. When considering the regret achieved per individual function evaluation, we typically expect batch routines to be less efficient than purely sequential BO routines. However, in the case of the Hartmann function (the benchmark with the largest observation noise), we see that our best S-GP-TS exactly matches (before going on to exceed) the performance of the sequential routines, suggesting that S-GP-TS is a particularly effective optimizer for functions with significant levels of observation noise. 

Note that the performance of S-GP-TS is sensitive to its chosen inducing points, with k-means providing the most effective routines. On Hartmann,  250 inducing points is sufficient to deliver good performances, while on Shekel, which is much more multimodal, using a larger number is critical. 

\subsection{High-throughput Molecular Search}\label{Sec:Molecule}

Finally, we investigate the performance of S-GP-TS with respect to an established baseline for high-throughput molecular screening. Although molecular search has been tackled many times with BO \citep{gomez2016design,griffiths2017constrained,moss2020boss}, only the approach of \cite{hernandez2017parallel} - standard (non-decoupled) TS over a Bayesian neural network (BNN-TS) - is truly scalable. We now recreate the largest experiment considered by \cite{hernandez2017parallel},
where the objective is to uncover the top $10\%$ of molecules in terms of power conversion efficiency among a library of 2.3 million candidate 
from the Harvard Clean Energy Project \citep{hachmann2011harvard}. Molecules are encoded as Morgan circular fingerprints of Bond radius 3 (i.e. 512-dimensional bit vectors, see \cite{rogers2010extended}).

As the standard GP kernels considered above are not suitable for sparse and high-dimensional molecule inputs \citep{moss2020gaussian}, we instead build our SVGP with a zeroth order ArcCosine kernel \citep{cho2012kernel}, chosen due to its strong empirical performance under sparsity and as it permits a random decomposition that can be exploited to perform decoupled TS. In particular, we use the $M$-feature decomposition investigated by \cite{cutajar2017random} of
\begin{align*}
k_{arc}(\textbf{x},\textbf{x}')=2\int d\textbf{w}\frac{\textrm{e}^{-\frac{\|\textbf{w}\|^2}{2}}}{(2\pi)^{d/2}}\Theta(\textbf{w}^T\textbf{x})\Theta(\textbf{w}^T\textbf{x}')\approx\frac{2}{M}\sum_{j=1}^M\Theta(\textbf{w}_j^T\textbf{x})\Theta(\textbf{w}_j^T\textbf{x}'),
\end{align*}where $\Theta(.)$ is the Heaviside step function and $\textbf{w}_j\sim\mathcal{N}(0,I)$. 

In our experiments, we use $M=1\,000$ random features and, to avoid memory issues, we compute our GP samples over a random subset of $100\,000$ molecules (renewed at each sample). We run S-GP-TS twice, once with $m_t=500$ and once with $\,2000$ inducing points. We chose inducing points as uniform samples from the already evaluated molecules (for each model step), as preliminary experiments showed that neither the k-means nor greedy selection routines discussed above were effective when applied to sparse and high-dimensional molecular fingerprint inputs. 

Following \cite{hernandez2017parallel}, we report the recall (fraction of the top $10\%$ of molecules so far chosen by the BO loop) for S-GP-TS, along with the performance of BNN-TS, a greedy BNN  (that queries the $B$ maximizers of the BNN's posterior mean), and a random search baseline (all taken from \cite{hernandez2017parallel}). All routines (including our S-GP-TS) are ran for $t=250$ successive batches of $B=500$ molecules. Figure \ref{fig:molecule} shows that S-GP-TS is able to perform effective batch optimization over very large optimization budgets (120,000 total evaluations) and, when using  $m=2000$ or even just $m=500$ inducing points, S-GP-TS matches the performance of \cite{hernandez2017parallel}'s BNN-based TS and greedy sampling approaches, respectively. Note that due to the high computational demands of this experiment, we report just a single replication of S-GP-TS (a limitation also of \cite{hernandez2017parallel}'s results). However, we stress that an additional realization of the $m=500$ experiment returned indistinguishable results.

\begin{figure*}[!ht]
 \centering
  \includegraphics[trim=0mm 3mm 0mm 2mm, clip, width=.6\textwidth]{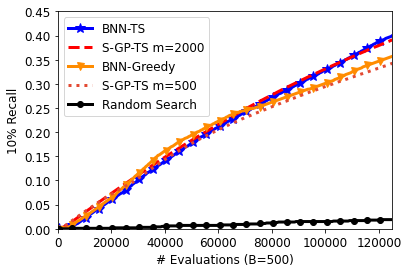}
     \vspace{-2mm}
   \caption{Proportion of the top 10\% of molecules found by each of the search routines. S-GP-TS is able to process substantial data volumes and effectively allocates large batches, matching the performance of the well-established BNN baselines.  }\label{fig:molecule}
      \vspace{-5mm}
\end{figure*}

\section{Discussion}
\label{sec:discussion}

We have shown that S-GP-TS enjoys the same regret order as exact GP-TS but with a greatly reduced $O(N_t M)$ computation per step $t$, compared to the $O(N_t^3)$ cost of the standard sampling. 
However, the discretization size $N_t$ is exponential in the dimension $d$ of the search space and so remains a limiting computational factor when optimizing over high dimensional search spaces. Hence, while S-GP-TS with decoupled sampling rule allows orders of magnitude larger optimization budgets compared to vanilla GP-TS, it still suffers from the \emph{curse of dimensionality}. Intuitively, this seems inevitable due to NP-Hardness of non-convex optimization problems \citep[see, e.g.,][]{Jian2017NPHard} as required to find the maximizer of the GP-UCB acquisition function \citep[see, e.g.,][]{Calandriello2019Adaptive}, or even in the application of UCB to linear bandits \citep{Dani2008}. In particular, the computational cost of the \textit{state-of-the-art} adaptive sketching method for implementing GP-UCB \cite{Calandriello2019Adaptive} was reported as $O(N_Td^2_{\text{eff}})$ where $d_{\text{eff}}$, referred to as the effective dimension of the problem, is upper bounded by $\gamma_T$.

An important practical consideration when using S-GP-TS in practice is how to choose its inducing points. The performance improvement provided by choosing inducing points by k-means rather than greedy variance selection, as demonstrated in our experiments, raises the possibility that BO-specific routines for choosing inducing points could allow even better performance. This is an important avenue for future work.






\bibliography{references.bib}

\begin{thebibliography}{10}

\bibitem{Thompson1933}
William Thompson.
\newblock On the likelihood that one unknown probability exceeds another in
  view of the evidence of two samples.
\newblock {\em Biometrika}, 1933.

\bibitem{Shahriari2016}
Bobak Shahriari, Kevin Swersky, Ziyu Wang, Ryan~P. Adams, and Nando de~Freitas.
\newblock Taking the human out of the loop: A review of {B}ayesian
  optimization.
\newblock {\em Proceedings of the IEEE}, 2016.

\bibitem{Chowdhury2017bandit}
Sayak~Ray Chowdhury and Aditya Gopalan.
\newblock On kernelized multi-armed bandits.
\newblock In {\em International Conference on Machine Learning}, 2017.

\bibitem{baptista2018bayesian}
Ricardo Baptista and Matthias Poloczek.
\newblock Bayesian optimization of combinatorial structures.
\newblock In {\em International Conference on Machine Learning}, 2018.

\bibitem{eriksson2021scalable}
David Eriksson and Matthias Poloczek.
\newblock Scalable constrained bayesian optimization.
\newblock In {\em International Conference on Artificial Intelligence and
  Statistics}, 2021.

\bibitem{daulton2019thompson}
Samuel Daulton, Shaun Singh, Vashist Avadhanula, Drew Dimmery, and Eytan
  Bakshy.
\newblock Thompson sampling for contextual bandit problems with auxiliary
  safety constraints.
\newblock {\em arXiv preprint arXiv:1911.00638}, 2019.

\bibitem{chevalier2013fast}
Cl{\'e}ment Chevalier and David Ginsbourger.
\newblock Fast computation of the multi-points expected improvement with
  applications in batch selection.
\newblock In {\em International Conference on Learning and Intelligent
  Optimization}, 2013.

\bibitem{gonzalez2016batch}
Javier Gonz{\'a}lez, Zhenwen Dai, Philipp Hennig, and Neil Lawrence.
\newblock Batch {B}ayesian optimization via local penalization.
\newblock In {\em Artificial intelligence and statistics}, 2016.

\bibitem{wu2016parallel}
Jian Wu and Peter~I Frazier.
\newblock The parallel knowledge gradient method for batch {B}ayesian
  optimization.
\newblock In {\em Advances in Neural Information Processing Systems}, 2016.

\bibitem{moss2021gibbon}
Henry~B Moss, David~S Leslie, Javier Gonzalez, and Paul Rayson.
\newblock Gibbon: General-purpose information-based bayesian optimisation.
\newblock {\em arXiv preprint arXiv:2102.03324}, 2021.

\bibitem{jalali2017comparison}
Hamed Jalali, Inneke Van~Nieuwenhuyse, and Victor Picheny.
\newblock Comparison of kriging-based algorithms for simulation optimization
  with heterogeneous noise.
\newblock {\em European Journal of Operational Research}, 2017.

\bibitem{binois2019replication}
Micka{\"e}l Binois, Jiangeng Huang, Robert~B Gramacy, and Mike Ludkovski.
\newblock Replication or exploration? sequential design for stochastic
  simulation experiments.
\newblock {\em Technometrics}, 2019.

\bibitem{hernandez2017parallel}
Jos{\'e}~Miguel Hern{\'a}ndez-Lobato, James Requeima, Edward~O Pyzer-Knapp, and
  Al{\'a}n Aspuru-Guzik.
\newblock Parallel and distributed {T}hompson sampling for large-scale
  accelerated exploration of chemical space.
\newblock In {\em International Conference on Machine Learning}, 2017.

\bibitem{kandasamy2018parallelised}
Kirthevasan Kandasamy, Akshay Krishnamurthy, Jeff Schneider, and Barnab{\'a}s
  P{\'o}czos.
\newblock Parallelised {B}ayesian optimisation via {T}hompson sampling.
\newblock In {\em International Conference on Artificial Intelligence and
  Statistics}, 2018.

\bibitem{snoek2015scalable}
Jasper Snoek, Oren Rippel, Kevin Swersky, Ryan Kiros, Nadathur Satish,
  Narayanan Sundaram, Mostofa Patwary, Mr~Prabhat, and Ryan Adams.
\newblock Scalable {B}ayesian optimization using deep neural networks.
\newblock In {\em International conference on machine learning}, pages
  2171--2180, 2015.

\bibitem{wang2018batched}
Zi~Wang, Clement Gehring, Pushmeet Kohli, and Stefanie Jegelka.
\newblock Batched large-scale {B}ayesian optimization in high-dimensional
  spaces.
\newblock In {\em International Conference on Artificial Intelligence and
  Statistics}, 2018.

\bibitem{Rasmussen2006}
Carl~E Rasmussen and Christopher~KI Williams.
\newblock {\em {Gaussian Processes for Machine Learning}}.
\newblock MIT Press, 2006.

\bibitem{diggle1998model}
Peter~J Diggle, Jonathan~A Tawn, and Rana~A Moyeed.
\newblock Model-based geostatistics.
\newblock {\em Journal of the Royal Statistical Society: Series C}, 1998.

\bibitem{eriksson2019scalable}
David Eriksson, Michael Pearce, Jacob Gardner, Ryan~D Turner, and Matthias
  Poloczek.
\newblock Scalable global optimization via local {B}ayesian optimization.
\newblock In {\em Advances in Neural Information Processing Systems}, 2019.

\bibitem{Titsias2009Variational}
M.~K. Titsias.
\newblock Variational learning of inducing variables in sparse {G}aussian
  processes.
\newblock In {\em International Conference on Artificial Intelligence and
  Statistics}, 2009.

\bibitem{mcintire2016sparse}
Mitchell McIntire, Daniel Ratner, and Stefano Ermon.
\newblock Sparse gaussian processes for bayesian optimization.
\newblock In {\em Associateion for Uncertainty in Artificial Intelligence},
  2016.

\bibitem{griffiths2017constrained}
Ryan-Rhys Griffiths and Jos{\'e}~Miguel Hern{\'a}ndez-Lobato.
\newblock Constrained {B}ayesian optimization for automatic chemical design.
\newblock {\em arXiv preprint arXiv:1709.05501}, 2017.

\bibitem{yang2019sparse}
Ang Yang, Cheng Li, Santu Rana, Sunil Gupta, and Svetha Venkatesh.
\newblock Sparse spectrum {G}aussian process for {B}ayesian optimization.
\newblock {\em arXiv preprint arXiv:1906.08898}, 2019.

\bibitem{wilson2020efficientsampling}
James~T. Wilson, Viacheslav Borovitskiy, Alexander Terenin, Peter Mostowsky,
  and Marc~Peter Deisenroth.
\newblock Efficiently sampling functions from {G}aussian process posteriors.
\newblock {\em International Conference on Machine Learning}, 2020.

\bibitem{Phan2019TSExample}
My~Phan, Yasin Abbasi~Yadkori, and Justin Domke.
\newblock Thompson sampling and approximate inference.
\newblock In {\em Advances in Neural Information Processing Systems}, 2019.

\bibitem{Burt2019Rates}
David Burt, Carl~Edward Rasmussen, and Mark Van Der~Wilk.
\newblock Rates of convergence for sparse variational {G}aussian process
  regression.
\newblock In {\em International Conference on Machine Learning}, 2019.

\bibitem{Russo2018TutorialTS}
Daniel~J. Russo, Benjamin Van~Roy, Abbas Kazerouni, Ian Osband, and Zheng Wen.
\newblock A tutorial on thompson sampling.
\newblock {\em Foundational Trends in Machine Learning}, 2018.

\bibitem{Russo2014}
D.~Russo and B.~Van~Roy.
\newblock Learning to optimize via posterior sampling.
\newblock {\em Mathematics of Operations Research}, 2014.

\bibitem{srinivas2010gaussian}
Niranjan Srinivas, Andreas Krause, Sham Kakade, and Matthias Seeger.
\newblock Gaussian process optimization in the bandit setting: no regret and
  experimental design.
\newblock In {\em International Conference on Machine Learning}, 2010.

\bibitem{Calandriello2019Adaptive}
Daniele Calandriello, Luigi Carratino, Alessandro Lazaric, Michal Valko, and
  Lorenzo Rosasco.
\newblock Gaussian process optimization with adaptive sketching: scalable and
  no regret.
\newblock In {\em Conference on Learning Theory}, 2019.

\bibitem{berlinet2011reproducing}
Alain Berlinet and Christine Thomas-Agnan.
\newblock {\em Reproducing kernel Hilbert spaces in probability and
  statistics}.
\newblock Springer Science \& Business Media, 2011.

\bibitem{Hensman2013}
James Hensman, Nicol{\'{o}} Fusi, and Neil~D. Lawrence.
\newblock Gaussian processes for big data.
\newblock In {\em Uncertainty in Artificial Intelligence ({UAI} 2013)}, 2013.

\bibitem{hensman2017variational}
James Hensman, Nicolas Durrande, Arno Solin, et~al.
\newblock Variational {F}ourier features for {G}aussian processes.
\newblock {\em Journal of Machine Learning Research}, 2017.

\bibitem{Dutordoir2020spherical}
Vincent Dutordoir, Nicolas Durrande, and James Hensman.
\newblock {Sparse {G}aussian Processes with Spherical Harmonic Features}.
\newblock In {\em Proceedings of the 37th International Conference on Machine
  Learning}, 2020.

\bibitem{lazaro2009inter}
Miguel L{\'a}zaro-Gredilla and Anibal Figueiras-Vidal.
\newblock Inter-domain {G}aussian processes for sparse inference using inducing
  features.
\newblock In {\em Advances in Neural Information Processing Systems}, 2009.

\bibitem{Kanagawa2018}
Motonobu Kanagawa, Philipp Hennig, Dino Sejdinovic, and Bharath~K
  Sriperumbudur.
\newblock Gaussian processes and kernel methods: A review on connections and
  equivalences.
\newblock {\em arXiv preprint arXiv:1807.02582}, 2018.

\bibitem{Borovitskiy2020}
Viacheslav Borovitskiy, Alexander Terenin, Peter Mostowsky, and Marc~P.
  Deisenroth.
\newblock {Matern {G}aussian processes on {R}iemannian manifolds}.
\newblock In {\em Advances in Neural Information Processing Systems}, 2020.

\bibitem{zhu1997gaussian}
Huaiyu Zhu, Christopher~KI Williams, Richard Rohwer, and Michal Morciniec.
\newblock Gaussian regression and optimal finite dimensional linear models,
  1997.

\bibitem{solin2020hilbert}
Arno Solin and Simo S{\"a}rkk{\"a}.
\newblock {H}ilbert space methods for reduced-rank {G}aussian process
  regression.
\newblock {\em Statistics and Computing}, 2020.

\bibitem{hernandez2014predictive}
Jos{\'e}~Miguel Hern{\'a}ndez-Lobato, Matthew~W Hoffman, and Zoubin Ghahramani.
\newblock Predictive entropy search for efficient global optimization of
  black-box functions.
\newblock {\em Advances in Neural Information Processing Systems}, 2014.

\bibitem{Hernandez2014features}
Jos\'{e}~Miguel Hern\'{a}ndez-Lobato, Matthew~W Hoffman, and Zoubin Ghahramani.
\newblock Predictive entropy search for efficient global optimization of
  black-box functions.
\newblock In {\em Advances in Neural Information Processing Systems}, 2014.

\bibitem{bochner1959lectures}
Salomon Bochner et~al.
\newblock {\em Lectures on {F}ourier integrals}.
\newblock Princeton University Press, 1959.

\bibitem{Mutny2018SGPTS}
Mojmir Mutny and Andreas Krause.
\newblock Efficient high dimensional {B}ayesian optimization with additivity
  and quadrature fourier features.
\newblock In {\em Advances in Neural Information Processing Systems 31}, 2018.

\bibitem{cover1999elements}
Thomas~M Cover.
\newblock {\em Elements of information theory}.
\newblock John Wiley \& Sons, 1999.

\bibitem{Snoek2012practicalBO}
Jasper Snoek, Hugo Larochelle, and Ryan~P Adams.
\newblock Practical {B}ayesian optimization of machine learning algorithms.
\newblock In {\em Advances in Neural Information Processing Systems 25}, 2012.

\bibitem{MaternEigenvaluessantin2016}
Gabriele Santin and Robert Schaback.
\newblock Approximation of eigenfunctions in kernel-based spaces.
\newblock {\em Advances in Computational Mathematics}, 2016.

\bibitem{SEEigenvalues}
Mikhail Belkin.
\newblock Approximation beats concentration? an approximation view on inference
  with smooth radial kernels.
\newblock In {\em Conference On Learning Theory}, 2018.

\bibitem{Gabriel2020practicalfeature}
Gabriel Riutort-Mayol1, Paul-Christian Burkner, Michael~R. Andersen, Arno
  Solin, and Aki Vehtari.
\newblock Practical {H}ilbert space approximate {B}ayesian {G}aussian processes
  for probabilistic programming.
\newblock {\em arXiv preprint arXiv:2004.11408}, 2020.

\bibitem{vakili2021information}
Sattar Vakili, Kia Khezeli, and Victor Picheny.
\newblock On information gain and regret bounds in {G}aussian process bandits.
\newblock In {\em International Conference on Artificial Intelligence and
  Statistics}, 2021.

\bibitem{trieste}
Joel Berkeley, Henry~B. Moss, Artem Artemev, Sergio Pascual-Diaz, Uri Granta,
  Hrvoje Stojic, Ivo Couckuyt, Jixiang Quing, Loka Satrio, and Victor Picheny.
\newblock Trieste, 2021.

\bibitem{matthews2017gpflow}
Alexander G de~G Matthews, Mark van~der Wilk, Tom Nickson, Keisuke Fujii,
  Alexis Boukouvalas, Pablo Le{\'o}n-Villagr{\'a}, Zoubin Ghahramani, and James
  Hensman.
\newblock Gpflow: A gaussian process library using tensorflow.
\newblock {\em Journal of Machine Learning Research}, 2017.

\bibitem{dutordoir2021gpflux}
Vincent Dutordoir, Hugh Salimbeni, Eric Hambro, John McLeod, Felix Leibfried,
  Artem Artemev, Mark van~der Wilk, James Hensman, Marc~P Deisenroth, and
  ST~John.
\newblock Gpflux: A library for deep {G}aussian processes.
\newblock {\em arXiv preprint arXiv:2104.05674}, 2021.

\bibitem{jones1998efficient}
Donald~R Jones, Matthias Schonlau, and William~J Welch.
\newblock Efficient global optimization of expensive black-box functions.
\newblock {\em Journal of Global optimization}, 1998.

\bibitem{huang2006global}
Deng Huang, Theodore~T Allen, William~I Notz, and Ning Zeng.
\newblock Global optimization of stochastic black-box systems via sequential
  kriging meta-models.
\newblock {\em Journal of global optimization}, 2006.

\bibitem{hennig2012entropy}
Philipp Hennig and Christian~J Schuler.
\newblock Entropy search for information-efficient global optimization.
\newblock {\em Journal of Machine Learning Research}, 2012.

\bibitem{gomez2016design}
Rafael G{\'o}mez-Bombarelli, Jorge Aguilera-Iparraguirre, Timothy~D Hirzel,
  David Duvenaud, Dougal Maclaurin, Martin~A Blood-Forsythe, Hyun~Sik Chae,
  Markus Einzinger, Dong-Gwang Ha, Tony Wu, et~al.
\newblock Design of efficient molecular organic light-emitting diodes by a
  high-throughput virtual screening and experimental approach.
\newblock {\em Nature Materials}, 2016.

\bibitem{moss2020boss}
Henry~B Moss, Daniel Beck, Javier Gonz{\'a}lez, David~S Leslie, and Paul
  Rayson.
\newblock Boss: Bayesian optimization over string spaces.
\newblock {\em Advances in Neural Information Processing Systems}, 2020.

\bibitem{hachmann2011harvard}
Johannes Hachmann, Roberto Olivares-Amaya, Sule Atahan-Evrenk, Carlos
  Amador-Bedolla, Roel~S S{\'a}nchez-Carrera, Aryeh Gold-Parker, Leslie Vogt,
  Anna~M Brockway, and Al{\'a}n Aspuru-Guzik.
\newblock The {H}arvard clean energy project: large-scale computational
  screening and design of organic photovoltaics on the world community grid.
\newblock {\em The Journal of Physical Chemistry Letters}, 2011.

\bibitem{rogers2010extended}
David Rogers and Mathew Hahn.
\newblock Extended-connectivity fingerprints.
\newblock {\em Journal of chemical information and modeling}, 2010.

\bibitem{moss2020gaussian}
Henry~B Moss and Ryan-Rhys Griffiths.
\newblock Gaussian process molecule property prediction with flowmo.
\newblock {\em Advances in Neural Information Processing Systems: Workshop on
  Machine Learning for Molecules.}, 2020.

\bibitem{cho2012kernel}
Youngmin Cho.
\newblock {\em Kernel methods for deep learning}.
\newblock PhD thesis, UC San Diego, 2012.

\bibitem{cutajar2017random}
Kurt Cutajar, Edwin~V Bonilla, Pietro Michiardi, and Maurizio Filippone.
\newblock Random feature expansions for deep {G}aussian processes.
\newblock In {\em International Conference on Machine Learning}, 2017.

\bibitem{Jian2017NPHard}
Prateek Jain and Purushottam Kar.
\newblock Non-convex optimization for machine learning.
\newblock {\em Foundational Trends in Machine Learning}, 2017.

\bibitem{Dani2008}
V.~Dani, T.~P. Hayes, and S.~M. Kakade.
\newblock Stochastic linear optimization under bandit feedback.
\newblock In {\em Conference on Learning Theory}, 2008.

\bibitem{kingma2014adam}
Diederik~P Kingma and Jimmy Ba.
\newblock Adam: A method for stochastic optimization.
\newblock {\em arXiv preprint arXiv:1412.6980}, 2014.

\bibitem{riche2021revisiting}
Rodolphe~Le Riche and Victor Picheny.
\newblock Revisiting {B}ayesian optimization in the light of the coco
  benchmark.
\newblock {\em arXiv preprint arXiv:2103.16649}, 2021.

\bibitem{liu1989limited}
Dong~C Liu and Jorge Nocedal.
\newblock On the limited memory bfgs method for large scale optimization.
\newblock {\em Mathematical programming}, 1989.

\end{thebibliography}
\bibliographystyle{unsrt}

\newpage
\appendix


\section{Complements on SVGPs}\label{app:SVGP}
As discussed in Section \ref{Sec:GPs}, SVGPs approximate the posterior of exact GPs through either a set of inducing points  $\Zb_t \triangleq \{z_1,..., z_{m}\}$ or through a set of inducing features $\phib_{m}(x)\triangleq \{\phi_1(x),...,\phi_m(x)\}$. The resulting inducing features, defined as $u_{t,i} = \hat{f}(z_{t,i})$  (for inducing points) or $u_{t,i} = \int_{\X} \hat{f}(x)\phi_i(x)dx$ (for inducing features), are assumed to follow a prior Gaussian density $q_{t}(\ub_t) = \Nc(\mb_t, \Sb_t)$. We now discuss how to set these variational parameters $\mb_t$ and $\Sb_t$ for a given dataset.

For SVGPS, the posterior mean and covariance is given in closed form as
\begin{eqnarray}\nn
{\mu}^{(s)}_t(x) = k_{\Zb_t,x}^{\TP} K^{-1}_{\Zb_t,\Zb_t}\mb_t\quad
{k}^{(s)}_t(x, x') = k(x,x')+ k_{\Zb_t,x}^{\TP}K^{-1}_{\Zb_t,\Zb_t} (\Sb_t - K_{\Zb_t,\Zb_t})K^{-1}_{\Zb_t,\Zb_t} k_{\Zb_t,x'}\nn
\end{eqnarray}
and
\begin{eqnarray}\nn
{\mu}^{(s)}_t(x) = \phib_{m_t}^{\TP}(x)\mb_t\qquad {k}^{(s)}_t(x,x') = k(x,x') +   \phib_{m_t}^{\TP}(x)(\Sb_t-\Lambda_{m_t})\phib_{m_t}(x'),
\end{eqnarray}
for the inducing point and inducing features representations, respectively. See Section \ref{Sec:GPs} or \cite{Burt2019Rates} for more details. However, the marginal likelihood for these models is intractable, and so, as is common practice in variational inference methods, we set of our variational parameters (as-well as the SVGP's kernel parameters) to maximize instead the tractable Evidence-based Lower BOund (ELBO).

For inducing point SVGPS, the ELBO can be written as
\begin{eqnarray}\nn
\text{ELBO}(t) &=& -\frac{1}{2}\yb_t^{\TP} (Q_t+\tau \Ib_t)^{-1}\yb_t - \frac{1}{2}\log|Q_t+\tau \Ib_t| - \frac{t}{2}\log(2\pi)-\frac{\theta_t}{2\tau},\label{eq:elbo}
\end{eqnarray}
where $Q_t = K^{\TP}_{\Zb_t,\Xb_t}K^{-1}_{\Zb_t,\Zb_t}K_{\Zb_t,\Xb_t}$, $K_{\Zb_t,\Xb_t} = [k_{z_i, x_j}]_{i,j}$, $i=1,\dots, m_t$, $j=1,\dots,t$, $\Ib_t$ is the $t\times t$ identity matrix and $\theta_t = \text{Tr}(K_{\Xb_t,\Xb_t}- Q_t)$. See \cite{Hensman2013} for a full derivation.

For inducing feature SVGP, the expression of ELBO is the same but with $Q_t = K^{\TP}_{\phib_{m_t},\Xb_t} \Lambda^{-1}_{m_t}K_{\phib_{m_t},\Xb_t}$, $K_{\phib_{m_t},\Xb_t} = [\lambda_i\phi_i(x_j)]_{i,j}$, $i=1,\dots, m_t$, $j=1,\dots,t$. 

To optimize the ELBO in practice, \cite{Hensman2013} proposed a numerical solution allowing for mini-batching \citep[see also][]{Burt2019Rates} and the use of stochastic gradient descent algorithms such as Adam~\citep{kingma2014adam}. In addition, \cite{Titsias2009Variational} provides an explicit solution for the convex optimization problem of finding $(\mb_t, \Sb_t)$, allowing more involved alternate optimization schemes.

\section{Detailed Proofs}\label{app:Proofs}

In this section, we provide detailed proofs for Theorem~\ref{The:RegretBoundsApproximate}, Lemma~\ref{Lemma:ConIneqApprox}, Proposition~\ref{Prop:paramters} and Theorem~\ref{The:MatSE}, in order.
%
%

\subsection{Proof of Theorem~\ref{The:RegretBoundsApproximate}}

Before presenting the proof of Theorem~\ref{The:RegretBoundsApproximate}, we first overview the regret bound for vanilla GP-TS~\citep[][Theorem $4$]{Chowdhury2017bandit}.

\begin{proof}[The Existing Regret Bound for Vanilla GP-TS] 

\cite{Chowdhury2017bandit} proved that, with probability at least $1-\delta$, $|f(x) - \mu_{t}(x)|\le {u}_t\sigma_t(x)$, where ${u}_t = \left( B+R\sqrt{2(\gamma_{t}+1+\log(1/\delta))} \right)$ and $\gamma_t$ is the {maximal information gain}. Based on this concentration inequality, \cite{Chowdhury2017bandit} showed that the regret of GP-TS scales with the cumulative uncertainty at the observation points measured by the standard deviation: $O(\sum_{t=1}^T {u}_t\sigma_{t-1}(x_t))$.
Furthermore, \cite{srinivas2010gaussian} showed that $\sum_{i=1}^t\sigma_{i-1}^2(x_i)\le \gamma_t$. Using this result and applying Cauchy-Schwarz inequality to $O(\sum_{t=1}^T {u}_t\sigma_{t-1}(x_t))$,  \cite{Chowdhury2017bandit} proved that
$
R( T,1; f) = {O}\left(\gamma_T\sqrt{T\log(T)}\right)
$, for vanilla GP-TS. 

\end{proof}

We build on the analysis of GP-TS in~\cite{Chowdhury2017bandit} to prove the regret bounds for S-GP-TS. We stress that despite some similarities in the proof, the analysis of standard GP-TS does not extend to S-GP-TS. This proof characterizes the behavior of the upper bound on regret in terms of the approximation constants, namely $\underline{a}, \bar{a}, {c}$ and $\epsilon$. 
A notable difference is that the additive approximation error in the posterior standard deviation ($\epsilon_t$) can cause under-exploration which is an issue the analysis of exact GP-TS cannot address. In addition, we account for the effect of batch sampling on the regret bounds. 

We first focus on the instantaneous regret at each time $t$ within the discrete set, $f({\mathtt{x}^*}^{(t)}) - f(x_{t,b})$. Recall ${\mathtt{x}^*}^{(t)} \triangleq \text{argmin}_{x'\in D_t}||x^*-x'||$ from Assumption~\ref{AssDisc}. It is then easy to upper bound the cumulative regret by the cumulative value of $f({\mathtt{x}^*}^{(t)}) - f(x_{t,b})+\frac{1}{t^2}$ as our discretization ensures that $f(x^*)-f({\mathtt{x}^*}^{(t)})\le \frac{1}{t^2}$. 
For upper bounds on instantaneous regret, we start with concentration of GP samples $\tilde{f}_{t,b}$ around their predicted values and the concentration of the prediction around the true objective function. We then consider the anti-concentration around the optimum point. The necessary anti-concentration may fail due to approximation error in the standard deviation around the optimum point. We thus consider two cases of low and sufficiently high standard deviation at ${\mathtt{x}^*}^{(t)}$ separately. While a low standard deviation implies good prediction at ${\mathtt{x}^*}^{(t)}$, a sufficiently high standard deviation guarantees sufficient exploration. We use these results to upper bound the instantaneous regret at each time $t$ with uncertainties measured by the standard deviation. 

\textbf{Concentration events $\Ec_t$ and $\tilde{\Ec}_t$:}

\textbf{Define $\Ec_t$} as the event that at time $t$, for all $x\in D_t$, 
$
|f(x) - \tilde{\mu}_{t-1}(x)| \le \frac{1}{2}\alpha_t(\tilde{\sigma}_{t-1}(x)+\epsilon_t)
$.
Recall $\alpha_t = 2\u_t(1/(t^2))$. Applying lemma~\ref{Lemma:ConIneqApprox}, we have $\Pr[\Ec_t]\ge 1-\frac{1}{t^2}$.

\textbf{Define $\tilde{\Ec}_t$} as the event that for all $x\in D_t$, and for all $b\in [B]$,
$
|\tilde{f}_{t,b}(x) - \tilde{\mu}_{t-1}(x)| \le  \alpha_tb_t \tilde{\sigma}_{t-1}(x)
$
where $b_t = \sqrt{2\ln(B N_t  t^2)}$. We have 
$
\Pr[\tilde{\Ec}_t] \ge 1-\frac{1}{t^2}.
$

\emph{Proof.}
For a fixed $x\in D_t$, and a fixed $b\in [B]$,
\begin{eqnarray}\nn
\Pr\left[|\tilde{f}_{t,b}(x) - \tilde{\mu}_{t-1}(x)| >  \alpha_{t}b_t \tilde{\sigma}_{t-1}(x)\right] <\exp(-\frac{b_t^2}{2}) = \frac{1}{BN_tt^2}.
\end{eqnarray}
The inequality holds because of the following bound on the CDF of a normal random variable $1-\text{CDF}_{\Nc(0,1)}(c)\le \frac{1}{2}\exp(-\frac{c^2}{2})$ and the observation that $\frac{\tilde{f}_{t,b}(x) - \tilde{\mu}_{t-1}(x)}{\alpha_t \tilde{\sigma}_{t-1}(x)}$ has a normal distribution. Applying a union bound we get
$
\Pr[\bar{\tilde{\Ec}}_t] \le \frac{1}{t^2}
$
which gives us the bound on probability of $\tilde{\Ec}_t$.  ~~~~~ $\square$

We thus proved $\Ec_t$ and $\tilde{\Ec}_t$ are high probability events. This will facilitate the proof by conditioning on $\Ec_t$ and $\tilde{\Ec}_t$. 
Also notice that when both $\Ec_t$ and $\tilde{\Ec}_t$ hold true, we have, for all $x\in D_t$, and for all $b\in[B]$,
\begin{eqnarray}\label{eq:beta}
|\tilde{f}_{t,b}(x) - f(x)| \le \beta_t\tilde{\sigma}_{t-1}(x) + \frac{1}{2}\alpha_t\epsilon_t
\end{eqnarray}
where $\beta_t = \alpha_t(b_t+\frac{1}{2})$.

\paragraph{Anti Concentration Bounds.} It is standard in the analysis of TS methods to prove sufficient exploration using an anti-concentration bound. 
That establishes a lower bound on the probability of a sample being sufficiently large (so that the corresponding point is likely to be selected by TS rule).
For this purpose, we use the following bound on the CDF of a normal distribution: $1-\text{CDF}_{\Nc(0,1)}(c)\ge \frac{\exp(-{c^2})}{4c\sqrt{\pi}}$. The underestimation of the posterior standard deviation at the optimum point however might result in an under exploration. On the other hand, a low standard deviation at the optimum point implies a low prediction error. We use this observation in our regret analysis by considering the two cases separately. Specifically, the regret $ f({\mathtt{x}^*}^{(t)}) - f(x_{t,b})$ at each time $t$ for each sample $b$ is bounded differently under the conditions: I. $\tilde{\sigma}_{t-1}({\mathtt{x}^*}^{(t)})\le \epsilon_t$ and II. $\tilde{\sigma}_{t-1}({\mathtt{x}^*}^{(t)})> \epsilon_t$. \\

\textbf{Under Condition} I ($\tilde{\sigma}_{t-1}({\mathtt{x}^*}^{(t)})\le \epsilon_t$), when both $\Ec_t$ and $\tilde{\Ec}_t$ hold true, we have
\begin{eqnarray}\nn
&&\hspace{-4em}f({\mathtt{x}^*}^{(t)}) - f(x_{t,b})\\\nn
&\le& \tilde{f}_{t,b}({\mathtt{x}^*}^{(t)}) + \beta_t\tilde{\sigma}_{t-1}({\mathtt{x}^*}^{(t)})+ \frac{1}{2}\alpha_t\epsilon_t \\\nn
&&~~~~~- \tilde{f}_{t}(x_{t,b}) + \beta_t\tilde{\sigma}_{t-1}(x_{t,b})+ \frac{1}{2}\alpha_t\epsilon_t~\hspace{7em}\text{by~\eqref{eq:beta},}\\\label{eq:boundstarsig}
&\le&  \beta_t\tilde{\sigma}_{t-1}({\mathtt{x}^*}^{(t)})+ \beta_t\tilde{\sigma}_{t-1}(x_{t,b})+\alpha_t\epsilon_t~\hspace{1.2em}\text{by~the selection rule of TS,}\\\nn
&\le&\beta_t\tilde{\sigma}_{t-1}(x_{t,b}) +    (\beta_t +\alpha_t)\epsilon_t~\hspace{9em}~~\text{by Condition I}.
\end{eqnarray}
that upper bounds the instantaneous regret at time $t$ by a factor of approximate standard deviation up to an additive term caused by approximation error. Since $f({\mathtt{x}^*}^{(t)}) - f(x_{t,b})\le 2B$, under Condition I,
\begin{eqnarray}\label{Ass1simplereg}
\E[f({\mathtt{x}^*}^{(t)}) - f(x_{t,b})] \le  \beta_t\tilde{\sigma}_{t-1}(x_{t,b}) + (\beta_t+\alpha_t)\epsilon_t +\frac{4B}{t^2}.
\end{eqnarray}
where the inequality holds by $\Pr[\bar{{\Ec}}_t ~\text{or}~\bar{\tilde{\Ec}}_t  ]\le \frac{2}{t^2}$.

\textbf{Under Condition} II ($\tilde{\sigma}_{t-1}({\mathtt{x}^*}^{(t)})> \epsilon_t$), we can show sufficient exploration by anti-concentration at the optimum point. In particular under Condition II, if $\Ec_t$ holds true, we have \begin{eqnarray}\label{eq:plow}
\Pr[\tilde{f}_{t,b}({\mathtt{x}^*}^{(t)})>f({\mathtt{x}^*}^{(t)})] \ge p,
\end{eqnarray}
where $p= \frac{1}{4\sqrt{\pi}}$.

\emph{Proof.} 
Applying the anti-concentration of a normal distribution 
\begin{eqnarray}\nn
\Pr[\tilde{f}_{t,b}({\mathtt{x}^*}^{(t)})>f({\mathtt{x}^*}^{(t)})] &=& \Pr\left[ \frac{\tilde{f}_{t,b}({\mathtt{x}^*}^{(t)})-\tilde{\mu}_{t-1}({\mathtt{x}^*}^{(t)})}{\alpha_{t}\tilde{\sigma}_{t-1}({\mathtt{x}^*}^{(t)})}> \frac{f({\mathtt{x}^*}^{(t)})-\tilde{\mu}_{t-1}({\mathtt{x}^*}^{(t)})}{\alpha_{t}\tilde{\sigma}_{t-1}({\mathtt{x}^*}^{(t)})}\right]\\\nn
&\ge& p.
\end{eqnarray}
As a result of the observation that the right hand side of the inequality inside the probability argument is upper bounded by $1$:
\begin{eqnarray}\nn
\frac{f({\mathtt{x}^*}^{(t)})-\tilde{\mu}_{t-1}({\mathtt{x}^*}^{(t)})}{\alpha_{t}\tilde{\sigma}_{t-1}({\mathtt{x}^*}^{(t)})} &\le& \frac{  \frac{1}{2}\alpha_t\tilde{\sigma}_{t-1}({\mathtt{x}^*}^{(t)})+ \frac{1}{2}\alpha_t\epsilon_t }{\alpha_t\tilde{\sigma}_{t-1}({\mathtt{x}^*}^{(t)})}~~~~\text{By}~\Ec_t\\\nn
&\le& 1.~~~\hspace{13em}~\text{By Condition II}~~~~~\square
\end{eqnarray}

\paragraph{Sufficiently Explored Points.} Let $\Sc_t$ denote the set of sufficiently explored points which are unlikely to be selected by S-GP-TS if $\tilde{f}_{t,b}({\mathtt{x}^*}^{(t)})$ is higher than $f({\mathtt{x}^*}^{(t)})$. Specifically, we use the notation
\begin{eqnarray}
\Sc_t = \{x\in D_t:  f(x)+\beta_t\tilde{\sigma}_{t-1}(x)+ \frac{1}{2}\alpha_t\epsilon_t \le f({\mathtt{x}^*}^{(t)})\}. 
\end{eqnarray}  
Recall $\beta_t = \alpha_t(b_t+\frac{1}{2})$. In addition, we define
\begin{eqnarray}
\bar{x}_t ={\text{argmin}}_{x\in D_t\setminus \Sc_t}\tilde{\sigma}_{t-1}(x).
\end{eqnarray}

We showed in equation~\eqref{eq:boundstarsig} that the instantaneous regret can be upper bounded by the sum of standard deviations at $x_{t,b}$ and ${\mathtt{x}^*}^{(t)}$. The standard method based on information gain can be used to bound the cumulative standard deviations at $x_{t,b}$. 
This is not sufficient however because the cumulative standard deviations at ${\mathtt{x}^*}^{(t)}$ do not converge unless there is sufficient exploration around $x^*$. To address this, we use $\bar{x}_t$ as an intermediary to be able to upper bound the instantaneous regret by a factor of $\tilde{\sigma}_{t-1}(x_{t,b})$ through the following lemma. 
\begin{lemma}\label{Lemma:xbar}
Under Condition II, for $t\ge\sqrt{\frac{2}{p}}$, if $\Ec_t$ holds true
\begin{eqnarray}
\tilde{\sigma}_{t-1}(\bar{x}_t)\le \frac{2}{p} \E[\tilde{\sigma}_{t-1}(x_{t,b})],
\end{eqnarray}
where the expectation is taken with respect to the randomness in the sample $\tilde{f}_{t,b}$. 
\end{lemma}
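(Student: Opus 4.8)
The plan is to combine the anti-concentration bound \eqref{eq:plow} with the defining inequality of the set $\Sc_t$ of sufficiently explored points. Conditioning on the history (so that $\bar x_t$, $\Sc_t$, $\tilde\sigma_{t-1}$, $f$ and ${\mathtt{x}^*}^{(t)}$ are deterministic) and working on $\Ec_t$, I would show that whenever the drawn sample overshoots the objective at the discretized optimum, i.e. $\tilde f_{t,b}({\mathtt{x}^*}^{(t)}) > f({\mathtt{x}^*}^{(t)})$, and the auxiliary concentration event $\tilde{\Ec}_t$ holds, the Thompson-selected point $x_{t,b}$ is forced to lie in $D_t\setminus\Sc_t$, hence $\tilde\sigma_{t-1}(x_{t,b})\ge\tilde\sigma_{t-1}(\bar x_t)$. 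Averaging over the step-$t$ sample randomness and showing this favourable event has probability at least $p/2$ then gives the claim.

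First, $\bar x_t$ is well defined because $D_t\setminus\Sc_t\ne\emptyset$: under Condition~II we have $\tilde\sigma_{t-1}({\mathtt{x}^*}^{(t)})>\epsilon_t\ge 0$, so $f({\mathtt{x}^*}^{(t)})+\beta_t\tilde\sigma_{t-1}({\mathtt{x}^*}^{(t)})+\tfrac12\alpha_t\epsilon_t > f({\mathtt{x}^*}^{(t)})$, i.e. ${\mathtt{x}^*}^{(t)}\notin\Sc_t$. Next, let $A$ be the event $\{\tilde f_{t,b}({\mathtt{x}^*}^{(t)}) > f({\mathtt{x}^*}^{(t)})\}\cap\tilde{\Ec}_t$. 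On $A$, and given $\Ec_t$, inequality \eqref{eq:beta} gives $\tilde f_{t,b}(x)\le f(x)+\beta_t\tilde\sigma_{t-1}(x)+\tfrac12\alpha_t\epsilon_t$ for every $x\in D_t$; for $x\in\Sc_t$ the right-hand side is at most $f({\mathtt{x}^*}^{(t)})<\tilde f_{t,b}({\mathtt{x}^*}^{(t)})$ by the definition of $\Sc_t$. Since ${\mathtt{x}^*}^{(t)}\in D_t$, no point of $\Sc_t$ can be a maximizer of $\tilde f_{t,b}$ over $D_t$, so $x_{t,b}\in D_t\setminus\Sc_t$ and therefore $\tilde\sigma_{t-1}(x_{t,b})\ge\min_{x\in D_t\setminus\Sc_t}\tilde\sigma_{t-1}(x)=\tilde\sigma_{t-1}(\bar x_t)$ on $A$.

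Finally, I would lower bound $\Pr[A]$. By \eqref{eq:plow}, $\Pr[\tilde f_{t,b}({\mathtt{x}^*}^{(t)})>f({\mathtt{x}^*}^{(t)})]\ge p$, and the union bound defining $\tilde{\Ec}_t$ gives $\Pr[\bar{\tilde{\Ec}}_t]\le 1/t^2$, so $\Pr[A]\ge p-1/t^2\ge p/2$ as soon as $t\ge\sqrt{2/p}$. Since $\tilde\sigma_{t-1}\ge 0$, discarding the contribution from the complement of $A$ yields $\E[\tilde\sigma_{t-1}(x_{t,b})]\ge\tilde\sigma_{t-1}(\bar x_t)\,\Pr[A]\ge\tfrac p2\,\tilde\sigma_{t-1}(\bar x_t)$, which rearranges to the stated inequality. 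The only delicate point is the simultaneous use of the anti-concentration bound \eqref{eq:plow} and the sample-to-prediction bound \eqref{eq:beta}: both require $\tilde{\Ec}_t$ on top of $\Ec_t$, and paying the probability $1/t^2$ for $\bar{\tilde{\Ec}}_t$ is precisely what turns the ideal factor $1/p$ into $2/p$ and forces the hypothesis $t\ge\sqrt{2/p}$.
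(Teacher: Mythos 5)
Your proposal is correct and follows essentially the same route as the paper's proof: use \eqref{eq:beta} on $\Ec_t\cap\tilde{\Ec}_t$ to cap samples on $\Sc_t$ by $f({\mathtt{x}^*}^{(t)})$, invoke the anti-concentration bound \eqref{eq:plow} so that with probability at least $p-1/t^2\ge p/2$ the maximizer $x_{t,b}$ is forced into $D_t\setminus\Sc_t$, and then restrict the expectation to that event. Your explicit check that ${\mathtt{x}^*}^{(t)}\notin\Sc_t$ (so $\bar x_t$ is well defined) is a small but welcome addition that the paper leaves implicit.
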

\begin{proof}[Proof of Lemma~\ref{Lemma:xbar}]
First notice that when both $\Ec_t$ and $\tilde{\Ec_t}$ hold true, for all $x\in \Sc_t$
\begin{eqnarray}\nn
\tilde{f}_{t,b}(x)&\le& f(x) + \beta_t \tilde{\sigma}_{t-1}(x)+(\alpha_t-1)\epsilon_t~~~\hspace{2em} \text{by~\eqref{eq:beta}}\\\label{eq:ftildes1}
&\le& f({\mathtt{x}^*}^{(t)}), ~~~~\hspace{5em}~\text{by definition of}~\Sc_t.
\end{eqnarray}
Also, if $\tilde{f}_{t,b}({\mathtt{x}^*}^{(t)})>\tilde{f}_{t,b}(x), \forall x \in S_t$, the rule of selection in TS ($x_{t,b} = \argmax_{x\in \X}\tilde{f}_{t,b}(x)$) ensures $x_{t,b}\in D_t\setminus \Sc_t$. So we have
\begin{eqnarray}\nn
\Pr[x_{t,b} \in D_t\setminus \Sc_t] &\ge& \Pr[\tilde{f}_{t,b}({\mathtt{x}^*}^{(t)})>\tilde{f}_{t,b}(x), \forall x \in S_t]\\\nn
&\ge& \Pr[\tilde{f}_{t,b}({\mathtt{x}^*}^{(t)})>\tilde{f}_{t,b}(x), \forall x \in S_t, \tilde{\Ec_t}] -
\Pr[\bar{\tilde{\Ec}}_t]
\\\nn
&\ge& \Pr[\tilde{f}_{t,b}({\mathtt{x}^*}^{(t)})>f({\mathtt{x}^*}^{(t)})] -
\Pr[\bar{\tilde{\Ec}}_t]~~~\hspace{2em}~\text{by \eqref{eq:ftildes1}}~\\\nn
&\ge& p-\frac{1}{t^2} ~~~\hspace{2em}~\text{by \eqref{eq:plow}}
\\\nn
&\ge&
\frac{p}{2}, ~~~\hspace{2em}~\text{for}~t\ge \sqrt{2/p}.
\end{eqnarray}
Finally, we have
\begin{eqnarray}
\E[\tilde{\sigma}_{t-1}(x_{t,b})] &\ge& \E\left[\tilde{\sigma}_{t-1}(x_{t,b})\bigg|x_{t,b}\in D_t\setminus S_t\right] \Pr[x_{t,b}\in D_t\setminus S_t]\\\nn
&\ge& \frac{p\tilde{\sigma}_{t-1}(\bar{x}_t)}{2},
\end{eqnarray}
where the expectation is taken with respect to the randomness in the sample $\tilde{f}_{t,b}$ at time $t$. 
\end{proof}

Now we are ready to bound the simple regret under Condition II using $\bar{x}_t$ as an intermediary. 
Under Condition II, when both $\Ec_t$ and $\tilde{\Ec}_t$ hold true, 
\begin{eqnarray}\nn
f({\mathtt{x}^*}^{(t)}) - f(x_{t,b}) &=& f({\mathtt{x}^*}^{(t)}) - f(\bar{x}_t)+f(\bar{x}_t) - f(x_{t,b}) \\\nn
&\le& \beta_t\tilde{\sigma}_{t-1}(\bar{x}_t)+ \frac{1}{2}\alpha_t\epsilon_t
+f(\bar{x}_t) - f(x_{t,b})~~~\text{by definition of}~\Sc_t\\\nn
&\le & \beta_t\tilde{\sigma}_{t-1}(\bar{x}_t)+ \frac{1}{2}\alpha_t\epsilon_t\\\nn
&&~~~~~+ \tilde{f}_{t,b}(\bar{x}_t) + \beta_t\tilde{\sigma}_{t-1}(\bar{x}_t) - \tilde{f}_{t,b}(x_{t,b}) +\beta_t\tilde{\sigma}_{t-1}(x_{t,b})+\alpha_t\epsilon_t~~~\text{by~\eqref{eq:beta}}\\\nn
&\le& \beta_t(2\tilde{\sigma}_{t-1}(\bar{x}_t)+\tilde{\sigma}_{t-1}(x_{t,b}) ) + \frac{3}{2}\alpha_t\epsilon_t, ~~~\text{by the rule of selection in TS}.
\end{eqnarray}

Thus, since $f(x^*)-f(x_{t,b})\le 2B$, under Condition II, for $t\ge \sqrt{\frac{2}{p}}$
\begin{eqnarray}
\E[f({\mathtt{x}^*}^{(t)})-f(x_{t,b})]\le \frac{(4+p)\beta_t}{p}\E[\tilde{\sigma}_{t-1}(x_{t,b})] +\frac{3}{2}\alpha_t\epsilon_t + \frac{4B}{t^2}
\end{eqnarray}
where we used Lemma~\ref{Lemma:xbar} and $\Pr[\bar{{\Ec}}_t ~\text{or}~\bar{\tilde{\Ec}}_t  ]\le \frac{2}{t^2}$.

\paragraph{Upper bound on regret.} From the upper bounds on instantaneous regret under Condition I and Condition II we conclude that, for $t\ge \sqrt{\frac{2}{p}}$
\begin{eqnarray}
\E[f({\mathtt{x}^*}^{(t)})-f(x_{t,b})]&\le& \max \bigg\{ \beta_t\tilde{\sigma}_{t-1}(x_{t,b}) + (\beta_t+\alpha_t)\epsilon_t +\frac{4B}{t^2}, \\\nn &&~~~~~\frac{(4+p)\beta_t}{p}\E[\tilde{\sigma}_{t-1}(x_{t,b})] +\frac{3}{2}\alpha_t\epsilon_t + \frac{4B}{t^2}  \bigg\}\\\nn
&\le& \frac{(4+p)\beta_t}{p}\E[\tilde{\sigma}_{t-1}(x_{t,b})] + (\beta_t+\alpha_t)\epsilon_t+ \frac{4B}{t^2}.
\end{eqnarray}
We can now upper bound the cumulative regret. Noticing $\lceil\sqrt{\frac{2}{p}}\rceil = 4$. 
\begin{eqnarray}\nn
R(T,B;f) &=& \sum_{t=1}^T\sum_{b=1}^B \E[f(x^*)-f(x_{t,b})]\\\nn
&=& \sum_{t=1}^{4}\sum_{b=1}^B \E[f(x^*)-f(x_{t,b})]+ \sum_{t=5}^T\sum_{b=1}^B\E[f(x^*)-f(x_{t,b})]\\\nn
&\le&8B\Bc +  \sum_{t=5}^T(\E[f({\mathtt{x}^*}^{(t)})-f(x_{t,b})]+\frac{1}{t^2})\\\nn
&\le&8B\Bc +  \sum_{t=5}^T\sum_{b=1}^B \left(\frac{(4+p)\beta_t}{p}\E[\tilde{\sigma}_{t-1}(x_{t,b})] + (\beta_t+\alpha_t)\epsilon_t+ \frac{4\Bc+1}{t^2}\right)\\\nn
&\le& 8B\Bc + \frac{\pi^2B(4\Bc+1)}{6} + \frac{(4+p)\beta_T}{p}\sum_{t=1}^T\sum_{b=1}^B\E[\tilde{\sigma}_{t-1}(x_{t,b})] + (\beta_T+\alpha_T)\sum_{t=1}^T\sum_{b=1}^B\epsilon_t\\\nn
&\le& 15B\Bc+2B + 30\beta_T\sum_{t=1}^T\sum_{b=1}^B(\bar{a}\E[\sigma_{t-1}(x_{t,b})]+\epsilon_t)
+ (\beta_T+\alpha_T){\epsilon}TB\\\nn
&\le& 15B\Bc+2B + 30\bar{a}\beta_T\sum_{t=1}^T\sum_{b=1}^B\E[\sigma_{t-1}(x_{t,b})]+ 30\beta_T\epsilon TB
+ (\beta_T+\alpha_T){\epsilon}TB\\\nn
&\le& 15B\Bc+2B
+ 30\bar{a}\beta_T\sum_{t=1}^T\sum_{b=1}^B\E[\sigma_{t-1}(x_{t,b})]+
(31\beta_T+\alpha_T)\epsilon TB.
\end{eqnarray}

We simplified the expressions by $\frac{4+p}{p}\le 30$, $\frac{4\pi^2}{6}\le 7$ and $\frac{\pi^2}{6}\le 2$.

We now use a technique based on information gain to upper bound $\sum_{t=1}^T\sum_{b=1}^B\E[\sigma_{t-1}(x_{t,b})]$ as formalized in the following lemma.

\begin{lemma}\label{cumbatch}
For all batch observation sequences $\{x_{t,b}\}_{t\in[T],b\in[B]}$, we have
\begin{eqnarray}
\sum_{t=1}^T\sum_{b=1}^B \sigma_{t-1}(x_{t,b})\le B\sqrt{\frac{2 T\gamma_T}{\log (1+\frac{1}{\tau})}}
\end{eqnarray}
\end{lemma}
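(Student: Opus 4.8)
The plan is to reduce the batch quantity $\sum_{t=1}^T\sum_{b=1}^B \sigma_{t-1}(x_{t,b})$ to a single-index sum of posterior standard deviations, then invoke the classical information-gain bound. First I would note that within a batch the posterior is \emph{not} updated between $b=1$ and $b=B$: all $B$ points at round $t$ share the same conditioning set $\H_{t-1}$, so $\sigma_{t-1}(x_{t,b})$ uses the posterior after $(t-1)B$ observations. The key monotonicity fact is that GP posterior variances are nonincreasing under additional conditioning (adding observations can only shrink $\sigma$), so $\sigma_{t-1}(x_{t,b}) \le \sigma_{t-1,b-1}(x_{t,b})$, where $\sigma_{t-1,b-1}$ denotes the posterior standard deviation after conditioning on all of $\H_{t-1}$ plus the first $b-1$ points of batch $t$. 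Summing over $t$ and $b$, the right-hand side is exactly a sum of one-step posterior standard deviations along the full sequence of $TB$ observation points $x_{1,1}, x_{1,2}, \dots, x_{T,B}$, each evaluated at its own posterior.

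Next I would apply the standard chain: for a sequence of $s$ points conditioned sequentially, $\sum \sigma_{i-1}^2(x_i) \le \frac{2}{\log(1+1/\tau)}\,\gamma_s$, which follows (as recalled in the excerpt, citing \cite{srinivas2010gaussian}) from $\sigma^2 \le \frac{1}{\log(1+1/\tau)}\log(1+\sigma^2/\tau)$ together with the fact that the sum of the log-terms equals twice the mutual information between the observations and $\hat f$ along that sequence, which is bounded by $\gamma_s$. With $s = TB$ this gives $\sum_{t,b}\sigma_{t-1}^2(x_{t,b}) \le \frac{2\gamma_{TB}}{\log(1+1/\tau)}$; however, to match the stated bound with $\gamma_T$ (not $\gamma_{TB}$) I instead want to keep the batch structure and bound, for each fixed $b$, the sequence $x_{1,b},\dots,x_{T,b}$ — each conditioned on the corresponding $\H_{t-1}$ which contains at least those $t-1$ earlier same-slot points — giving $\sum_{t=1}^T \sigma_{t-1}^2(x_{t,b}) \le \frac{2\gamma_T}{\log(1+1/\tau)}$ by the same argument applied to a length-$T$ subsequence (using monotonicity to drop the remaining batch members from the conditioning set only increases variance, so this is the direction we need — care with the inequality direction is required here). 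Then Cauchy–Schwarz in $t$ yields $\sum_{t=1}^T \sigma_{t-1}(x_{t,b}) \le \sqrt{T \sum_{t=1}^T \sigma_{t-1}^2(x_{t,b})} \le \sqrt{\frac{2T\gamma_T}{\log(1+1/\tau)}}$, and finally summing the $B$ identical bounds over $b$ gives the factor $B$ and the claimed inequality.

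The main obstacle is getting the conditioning arguments and the monotonicity inequalities to point the right way simultaneously: we need $\gamma_T$ rather than $\gamma_{TB}$ in the final bound, which forces us to work slot-by-slot with length-$T$ subsequences, yet the actual posterior at round $t$ conditions on the \emph{whole} batch history $\H_{t-1}$ (size $(t-1)B$), not just the $t-1$ previous same-slot points. Reconciling these requires using variance monotonicity to pass from the true posterior variance (conditioned on more points) to the variance conditioned on only the same-slot subsequence — which is an \emph{increase} — so one must verify this is the step that is actually needed for the information-gain bound (it is: we upper-bound the true $\sigma_{t-1}$ by the larger slot-only $\sigma$, and bound \emph{that} sum by $\gamma_T$). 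I would write this carefully, and also double-check the constant: $\sigma^2(x) = k_{t-1}(x,x) \le 1$ so the log-inequality constant $\frac{1}{\log(1+1/\tau)}$ is the right one, matching the lemma statement.
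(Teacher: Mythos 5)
Your proof is correct and rests on the same key ingredients as the paper's: monotonicity of the GP posterior variance under additional conditioning, used to replace the true posterior (conditioned on all $(t-1)B$ points) by one conditioned on a length-$(t-1)$ subsequence so that $\gamma_T$ rather than $\gamma_{TB}$ appears, followed by Cauchy--Schwarz and the information-gain bound of Srinivas et al. The only cosmetic difference is bookkeeping: you run this argument once per batch slot $b$ and sum $B$ identical bounds, whereas the paper first bounds each batch's contribution by $B$ times its maximum-variance member and then runs the argument once on that sequence of maximizers; both routes give the identical constant.
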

\begin{proof}[Proof of Lemma~\ref{cumbatch}]

Without loss of generality assume that at each time instance $t=1,2,\dots, T$, the batch observations are ordered such that $\sigma_{t-1}(x_{t,1})\ge\sigma_{t-1}(x_{t,b}) $, for all $b\in [B]$. We thus have 
\begin{eqnarray}\label{ksn1}
\sum_{t=1}^T\sum_{b=1}^B \sigma_{t-1}(x_{t,b}) \le B\sum_{t=1}^T\sigma_{t-1}(x_{t,1}). 
\end{eqnarray}

For the sequence of observations $\{x_{t,1}\}_{t=1}^T$, define the conditional posterior mean and variance
\begin{eqnarray}\nn
\bar{\mu}_{t}(x) &=& \E[\hat{f}(x)|\{x_{s,1}\}_{s=1}^t]\\\nn
\bar{\sigma}^2_{t}(x) &=& \E[(\hat{f}(x) -\bar{\mu}_{t}(x) )^2| \{x_{s,1}\}_{s=1}^t].
\end{eqnarray}

By the expression of posterior variance of multivariate Gaussian random variables and by positive definiteness of the covariance matrix, we know that conditioning on a larger set reduces the posterior variance. Thus $\bar{\sigma}_{t}(x)\ge \sigma_{t}(x)$. Notice that $\sigma_{t}(x)$ is the posterior variance conditioned on full batches of the observations while $\bar{\sigma}_{t}(x)$ is the posterior variance conditioned on only the first observation at each batch. We thus have
\begin{eqnarray}\label{ksn2}
\sum_{t=1}^T\sigma_{t-1}(x_{t,1}) \le \sum_{t=1}^T\bar{\sigma}_{t-1}(x_{t,1})
\end{eqnarray}

We can now follow the standard steps in bounding the cumulative standard deviation in the non-batch setting. In particular using Cauchy-Schwarz inequality, we have 
\begin{eqnarray}\label{ksn3}
\sum_{t=1}^T\bar{\sigma}_{t-1}(x_{t,1}) \le \sqrt{T\sum_{t=1}^T\bar{\sigma}^2_{t-1}(x_{t,1})}. 
\end{eqnarray}

In addition,~\cite{srinivas2010gaussian} showed that
\begin{eqnarray}\label{ksn4}
\sum_{t=1}^T\bar{\sigma}^2_{t-1}(x_{t,1}) \le \frac{2\gamma_T}{\log(1+\frac{1}{\tau})}. 
\end{eqnarray}

Combining~\eqref{ksn1},~\eqref{ksn2},~\eqref{ksn3} and~\eqref{ksn4}, we arrive at the lemma. 



\end{proof}

We thus have
\begin{eqnarray}
R(T;\text{S-GP-TS}) \le 
 30\bar{a}\beta_TB\sqrt{\frac{2T\gamma_T}{\log(1+\frac{1}{\tau})}}+
(31\beta_T+\alpha_T)\epsilon TB + 15B\Bc+2B
\end{eqnarray}

which can be simplified to
\begin{eqnarray}
R(T;\text{S-GP-TS}) = \tilde{O}\left( \underline{a}\bar{a}(1+c)B\sqrt{T\gamma_T } + \underline{a}^2(1+c^2)\epsilon TB  \right).
\end{eqnarray}

  $\square$

\subsection{Proof of Lemma~\ref{Lemma:ConIneqApprox}}

It remains to prove the concentration inequality for the approximate statistics given in Lemma~\ref{Lemma:ConIneqApprox}.

\begin{proof}[Proof of Lemma~\ref{Lemma:ConIneqApprox}]

By triangle inequality we have
\begin{eqnarray}\nn
|f(x) - \tilde{\mu}_t(x)| &\le& |f(x)-\mu_t(x)|  + |\tilde{\mu}_t(x) - \mu_t(x)| \\\nn
&\le& |f(x)-\mu_t(x)| + c_t\sigma_t(x)~~~~\text{by Assumptions~\ref{Ass2}}.
\end{eqnarray}
From Theorem 2 of~\cite{Chowdhury2017bandit}, 
with probability at least $1-\delta$,
\begin{eqnarray}\nn
f(x)- \mu_t(x)\le\left( B+R\sqrt{2(\gamma_{t}+1+\log(1/\delta))} \right)\sigma_t(x).
\end{eqnarray}

Thus,
\begin{eqnarray}\nn
|f(x)-\tilde{\mu}_t(x)| 
&\le& \left( B+R\sqrt{2(\gamma_{t}+1+\log(1/\delta))} \right) \sigma_t(x) + c_t {\sigma}_{t}(x)
\\\nn
&\le& \underline{a}_t(B+R\sqrt{\frac{2\ln(1/\delta)}{\tau}} +c_t) (\tilde{\sigma}_t(x)+\epsilon_t),
\\\nn
\end{eqnarray}
where the last inequality holds by Assumption~\ref{Ass1}.
\end{proof}


\subsection{Proof of Proposition~\ref{Prop:paramters}}

Here, we use $\tilde{\mu}_t$ and $\tilde{\sigma}_t$ to specifically denote the approximate posterior mean and the approximate posterior standard deviations of the decomposed sampling rules~\eqref{SampIndPoints} and~\eqref{SampIndVariables} in contrast to Sec.~\ref{Sec:RegSec} where we used the notation more generally for any approximate model. We also use $\mu^{(s)}_t$ and $\sigma^{(s)}_t$ to refer to the posterior mean and the posterior standard deviation of SVGP models, and $\mu^{(w)}$ and $\sigma^{(w)}$ to refer to the priors generated from an $M-$truncated feature vector. 
For the approximate posterior mean, we have $\tilde{\mu}_t = \mu^{(s)}_t$. However, the approximate posterior standard deviations $\sigma^{(s)}$ and $\tilde{\sigma}$ are not the same.

By the triangle inequality we have
\begin{eqnarray}
|\tilde{\sigma}_t(x) - \sigma_t(x)| \le |\tilde{\sigma}_t(x) - \sigma^{(s)}_t(x)| + |\sigma^{(s)}_t(x) - \sigma_t(x)|.
\end{eqnarray}

For the first term,
following the exact same lines as in the proof of Proposition 7 in~\cite{wilson2020efficientsampling}, we have
\begin{eqnarray}
|\tilde{\sigma}^2_t(x)-{\sigma^{(s)}_t}^2(x) | \le C_1m_t |\sigma^2(x) - {\sigma^{(w)}}^2(x) |
\end{eqnarray}
where $C_1 =\max_{1\le t\le T} (1+||K^{-1}_{\Zb_{m_t},\Zb_{m_t}}||_{C(\X^2)})$. 
\cite{wilson2020efficientsampling} proceed to upper bound $|\sigma^2(x) - {\sigma^{(w)}}^2(x) |$ by a constant divided by $\sqrt{M}$. We use a tighter bound based on feature representation of the kernel. Specifically from definition of $\delta_M$ we have that 
\begin{eqnarray}
|\sigma^2(x) - {\sigma^{(w)}}^2(x) | &\le& \sum_{i=M+1}^\infty \lambda_i\bar{\phi}_i^2\\\nn 
&=& \delta_M,
\end{eqnarray}
which results in the following upper bound
\begin{eqnarray}
|\tilde{\sigma}^2_t(x)-{\sigma^{(s)}_t}^2(x) |  \le C_1m_t \delta_M.
\end{eqnarray}
For the standard deviations we have
\begin{eqnarray}\nn
|\tilde{\sigma}_t(x)-\sigma^{(s)}_t(x) | &=& \sqrt{|\tilde{\sigma}_t(x)-\sigma^{(s)}_t(x) |^2 }\\\nn
&\le&\sqrt{|\tilde{\sigma}_t(x)-\sigma^{(s)}_t(x) ||\tilde{\sigma}_t(x)+\sigma^{(s)}_t(x) | }\\\nn
&=& \sqrt{|\tilde{\sigma}^2_t(x)-{\sigma^{(s)}_t}^2(x) |^2 }\\\label{eq:wilson}
&\le& \sqrt{C_1m_t\delta_M},
\end{eqnarray}
where the first inequality holds because $|\tilde{\sigma}_t(x)-\sigma^{(s)}_t(x) |\le |\tilde{\sigma}_t(x)+\sigma^{(s)}_t(x) |$ for positive $ \tilde{\sigma}_t(x)$ and $\sigma^{(s)}_t(x)$.

We can efficiently bound the error in the SVGP approximation based on the convergence of SVGP methods. Let us first focus on the inducing features. It was shown that (Lemma 2 in~\cite{Burt2019Rates}), for the SVGP with inducing features
\begin{eqnarray}
\text{KL}\left(\text{GP}(\mu_t,\sigma_t), {\text{GP}}(\mu^{(s)}_t,k^{(s)}_t)\right)\le \frac{\theta_t}{\tau}.
\end{eqnarray}
where $\text{GP}(\mu_t,\sigma_t)$ and ${\text{GP}}(\mu^{(s)}_t,k^{(s)}_t)$ are the true and the SVGP approximate posterior distributions at time $t$, and KL denotes the  Kullback-Leibler divergence between them. On the right hand side, $\theta_t$ is the trace of the error in the covariance matrix. Specifically, $\theta_t = \text{Tr}(E_t)$ where $E_t = K_{\Xb_t,\Xb_t} - K_{\Zb_t,\Xb_t}^{\TP}K_{\Zb_t,\Zb_t}K_{\Zb_t,\Xb_t} $. Using the Mercer expansion of
the kernel matrix,~\cite{Burt2019Rates} showed that $[E_t]_{i,i} = \sum_{j=m_t+1}^\infty\lambda_j\phi^2_j(x_i)$. Thus 
\begin{eqnarray}
\theta_t &=&\sum_{i=1}^t\sum_{j=m+1}^\infty\lambda_j\phi^2_j(x_i)\\\nn
&\le& t\sum_{j=m_t+1}^{\infty}\lambda_j\bar{\phi}_j^2\\\nn
&=&t\delta_{m_t}
\end{eqnarray}

Thus, 
\begin{eqnarray}\label{KL123}
\text{KL}\left(\text{GP}(\mu_t,\sigma_t), {\text{GP}}(\mu^{(s)}_t,k^{(s)}_t)\right)\le \kappa_t/2.
\end{eqnarray}
where $\kappa_t = 2tB\delta_m/\tau$ that is determined by the number of current observations. 
In comparison, \cite{Burt2019Rates} proceed by introducing a prior distribution on $x_i$ and bounding $[E_t]_{i,i,}$ differently.

For the case of inducing points drawn from an $\epsilon_0$ close k-DPP distribution, similarly following the exact lines as \cite{Burt2019Rates} except for the upper bound on $[E_t]_{i,i}$, with probability at least $1-\delta$, \eqref{KL123} holds with $\kappa_t = \frac{2tB(m_t+1)\delta_{m_t}}{\delta\tau} + \frac{4tB\epsilon_0}{\delta\tau}$ where $\epsilon_0$ that is determined by the number of current observations.

In addition, if the KL divergence between two Gaussian distributions is bounded by $\kappa_t/2$, we have the following bound on the means and variances of the marginals [Proposition 1 in~\cite{Burt2019Rates}]
\begin{eqnarray}\nn
|\mu^{(s)}_t(x)-\mu_t(x)|&\le& {\sigma}_t(x)\sqrt{\kappa_t},\\\label{eq:burt}
|1-\frac{{\sigma^{(s)}_t}^2(x)}{\sigma^2_t(x)}|&\le&\sqrt{3\kappa_t},
\end{eqnarray}
which by algebraic manipulation gives 
\begin{eqnarray}
\sqrt{1-\sqrt{3\kappa_t}}\sigma_t({x}) \le \sigma^{(s)}_t(x) \le \sqrt{1+\sqrt{3\kappa_t}}\sigma_t({x})
\end{eqnarray}
Combining the bounds on $\sigma^{(s)}_t$ with~\eqref{eq:wilson}, we get
\begin{eqnarray}\nn
\sqrt{1-\sqrt{3\kappa_t}}\sigma_t({x})-\sqrt{C_1m_t\delta_M} \le \tilde{\sigma}_t(x) \le \sqrt{1+\sqrt{3\kappa_t}}\sigma_t({x})+\sqrt{C_1m_t\delta_M}
\end{eqnarray}
Comparing this bound with Assumption~\ref{Ass1}, we have $\underline{a}_t  =\frac{1}{\sqrt{1-\sqrt{3\kappa_t}}}$, 
$\bar{a}_t = \sqrt{1+\sqrt{3\kappa_t}}$, and $\epsilon_t = \sqrt{C_1m_t\delta_M}$.
Also, since $\mu^{(s)}_t = \tilde{\mu}_t$, comparing~\eqref{eq:burt} with Assumption~\ref{Ass2}, we have $c_t = \sqrt{\kappa_t}$. 

$\square$



\subsection{Proof of Theorem~\ref{The:MatSE}}

In Theorem~\ref{The:RegretBoundsApproximate}, we proved that \begin{eqnarray}\nn
R(T,B;f) = {O}\left( \underline{a}\bar{a}BR\sqrt{d\gamma_T(\gamma_{TB}+\log(T)) T\log(T)} + \underline{a}\epsilon TBR\sqrt{d(\gamma_{TB}+\log(T))\log(T)}  \right)
\end{eqnarray}
We thus need to show that $\underline{a}\bar{a}$ is a constant independent of $T$ and $\underline{a}\epsilon$ is small so that the second term is dominated by the first term. 

In the case of Mat{\'e}rn kernel, $\lambda_j = O(j^{-\frac{2\nu+d}{d}})$ implies that $\delta_m = O(m^{-\frac{2\nu}{d}})$. Under sampling rule~\eqref{SampIndPoints}, we select $\delta=\frac{1}{T}$ and $\epsilon_0=\frac{1}{T^2\log(T)}$ in Proposition~\ref{Prop:paramters}. We thus need $\kappa_T = O({T^2m_T\delta_{m_T}})$ and $\epsilon_T\sqrt{T} = O(\sqrt{m_T\delta_M T}) $ be sufficiently small constants. That is achieved by selecting $m_T = T^{\frac{2d}{2\nu-d}}$ and $M=T^{\frac{(2\nu+d)d}{2(2\nu-d)\nu}}$.

Under sampling rule~\eqref{SampIndVariables}, we need $\kappa_T = O({T\delta_{m_T}})$ and $\epsilon_T\sqrt{T} = O(\sqrt{m_T\delta_M T}) $ be sufficiently small constants. That is achieved by selecting $m_T = T^{\frac{d}{2\nu}}$ and $M=\frac{(2\nu+d)d}{4\nu^2 }$.

In the case of SE kernel, $\lambda_j = O(\exp(-j^{\frac{1}{d}}))$ implies that $\delta_m = O(\exp(-m^{\frac{1}{d}}))$. Under sampling rule~\eqref{SampIndPoints}, we select $\delta=\frac{1}{T}$ and $\epsilon_0=\frac{1}{T^2\log(T)}$ in Proposition~\ref{Prop:paramters}. We thus need $\kappa_T = O({T^2m_T\delta_{m_T}})$ and $\epsilon_T\sqrt{T} = O(\sqrt{m_T\delta_M T}) $ be sufficiently small constants. That is achieved by selecting $m_T = (\log(T))^{d}$ and $M=(\log(T))^{d}$.
We obtain the same results under 
sampling rule~\eqref{SampIndVariables} where we need $\kappa_T = O({T\delta_{m_T}})$ and $\epsilon_T\sqrt{T} = O(\sqrt{m_T\delta_M T}) $ be sufficiently small constants. 

$\square$

\section{Additional Experiments and Experimental Details}\label{app:Experiments}

In Section \ref{Sec:Exp}, we tested S-GP-TS across popular synthetic benchmarks from the BO literature. We considered the Shekel, Hartmann and Ackley (see Figure \ref{fig:synthetic_extra}) functions, each contaminated by Gaussian noise with variance $0.1,0.5$ and $0.5$, respectively. Note that for Hartmann and Ackley, we chose our observation noise to be an order of magnitude larger than usually considered for these problems in order to demonstrate the suitability of S-GP-TS for controlling large optimization budgets (as required to optimize these highly noisy functions). We now provide explicit forms for these synthetic functions and list additional experimental details left out from the main paper.

\textbf{Shekel function}. A four-dimensional function with ten local and one global minima defined on $\mathcal{X}\in[0,10]^4$:
\begin{align*}
    f(\textbf{x}) = - \sum_{i=1}^{10}\left(\sum_{j=1}^{4}(x_j-A_{j,i})^2+\beta_i\right)^{-1},
\end{align*}
where 
\begin{align*}
    \beta=\begin{pmatrix}
1\\2\\2\\4\\4\\6\\3\\7\\5\\5
\end{pmatrix}
\quad \textrm{and} \quad
    A=\begin{pmatrix}
4 & 1 & 8 & 6 & 3 & 2 & 5 & 8 & 6 & 7  \\
4 & 1 & 8 & 6 & 7 & 9 & 3 & 1 & 2 & 3.6   \\
4 & 1 & 8 & 6 & 3 & 2 & 5 & 8 & 6 & 7  \\
4 & 1 & 8 & 6 & 7 & 9 & 3 & 1 & 2 & 3.6 
\end{pmatrix}.
\end{align*}

\textbf{Ackley function}. A five-dimensional function with many local minima surrounding a single global minima defined on $\mathcal{X}\in[-2,1]^5$:
\begin{align*}
    f(\textbf{x}) = -20 \exp\left(-0.2*\sqrt{\frac{1}{4}\sum_{i=1}^dx_i^2}\right) - \exp\left(\frac{1}{4}\sum_{i=1}^4\cos(2\pi x_i)\right) + 20 +\exp(1).
\end{align*}

\textbf{Hartmann 6 function}. A six-dimensional function with six local minima and a single global minima defined on $\mathcal{X}\in[0,1]^6$:

\begin{align*}
    f(\textbf{x})= -\sum\limits_{i=1}^4\alpha_{i} \exp\left(-\sum\limits_{j=1}^6A_{i,j}(x_j-P_{i,j})^2\right),
\end{align*}

where 
\begin{align*}
    &A=\begin{pmatrix}
10 & 3 & 17 & 3.5 & 1.7 & 8 \\
0.05 & 10 & 17 & 0.1 & 8 & 14 \\
3 & 3.5 & 1.7 & 10 & 17 & 8\\
17 & 8 & 0.05 & 10 & 0.1 & 14
\end{pmatrix},
\quad
&\alpha=\begin{pmatrix}
1 \\
1.2 \\
3 \\
3.2
\end{pmatrix},
\\
\\
    &P=10^{-4}\begin{pmatrix}
1312 & 1696 & 5569 & 124& 8283& 5886\\
2329 & 4135 & 8307 & 3736& 1004& 9991 \\
2348 & 1451 & 3522 & 2883& 3047& 6650 \\
4047 & 8828 & 8732 & 5743& 1091& 381
\end{pmatrix}.
\end{align*}

For all our synthetic experiments (both for S-GP-TS and the baseline BO methods), we follow the implementation advice of \cite{riche2021revisiting} regarding constraining length-scales (to stabilize model fitting) and by maximizing acquisition functions (and Thompson samples) using L-BFGS \citep{liu1989limited} starting from the best location found across a random sample of $500*d$ locations (where $d$ is the problem dimension). Our SVGP models are fit with an ADAM  optimizer \cite{kingma2014adam} with an initial learning rate of 0.1, ran for at most 10,000 iterations but with an early stopping criteria (if 100 successive steps lead to a loss less that 0.1). We also implemented a  learning rate reduction factor of 0.5 with a patience of 10. Our implementation of the GIBBON acquisition function follows \cite{moss2021gibbon} and is built on 10 Gumbel samples built across a grid of 10,000 *$d$ query points. For BO's initialization step, our S-GP-TS models are given a single random sample of the same size as the considered batches and standard BO routines are given $d+4$ initial samples (again following the advice of \cite{riche2021revisiting}). The function evaluations required for these initialization are included in our Figures.

\subsection{S-GP-TS on the Ackley Function}

To supplement the synthetic examples included in the main body of the paper, we now consider the performance of S-GP-TS when used to optimize the challenging Ackley function, defined over 5 dimensions and under very high levels of observation noise (Gaussian with variance $0.5$). The Ackley function (in 5D) has thousands of local minima and a single global optima in the centre. As this global optima has a very small volume, achieving high precision optimization on this benchmark requires high levels of exploration (akin to an active learning task). Figure \ref{fig:synthetic_extra} demonstrates the performance of S-GP-TS on the Ackley benchmark, where we see that S-GP-TS is once again able to find solutions with lower regret than the sequential benchmarks and effectively allocate batch resources. In contrast to our other experiments, where the K-means inducing point selection routine significantly outperforms greedy variance reduction, our Ackley experiment shows little difference between the different inducing point selection routines. In fact, greedy variance selection  slightly outperforms selection by k-means. We hypothesize that the strong repulsion properties of DPPs (as approximated by greedy variance selection) are advantageous for optimization problems requiring high levels of exploration.

\begin{figure}[!ht]
 \centering
   \includegraphics[trim=0mm 3mm 0mm 2mm, clip, height=50mm]{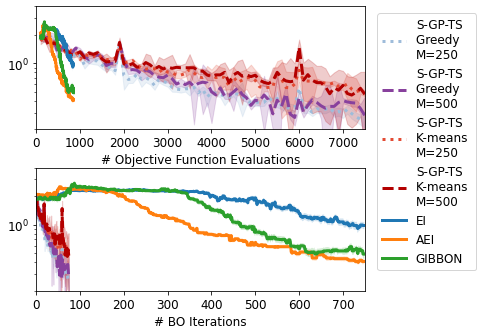}
   \caption{Simple regret on  5D Ackley function. The best S-GP-TS approaches are able to efficiently allocate additional optimization budgets to achieve lower final regret than the sequential baselines. When considering regret with respect to the BO iteration (bottom panels,idealised parallel setting), S-GP-TS achieves low regret in a fraction of the iterations required by standard BO routines. For this task, the choice of inducing point selection strategy (and number of inducing points) is not as crucial as for our other synthetic benchmarks, however, greedy variance selection provides a small improvement over selection by k-means.
   }\label{fig:synthetic_extra}
   \vspace{-2mm}
\end{figure}

\subsection{A Comparison of S-GP-TS with other batch BO routines}

To accompany Figures \ref{fig:synthetic} and \ref{fig:synthetic_extra} (our comparison of S-GP-TS with sequential BO routines), we also now compare S-GP-TS with popular batch BO routines. Once again, we stress that these existing BO routines do not scale to the large batch sizes that we consider for S-GP-TS, and so we plot their performance for  $B=25$ (a batch size considered  large in the context of these exiting BO methods). We consider two well-known batch extensions of EI: Locally Penalized EI \citep[LP,][]{gonzalez2016batch} and the multi-point EI (known as qEI) of \cite{chevalier2013fast}. We also consider with a recently proposed batch information-theoretic approach known as General-purpose Information-Based Bayesian OptimizatioN \cite[GIBBON,][]{moss2021gibbon}. The large optimization budgets considered in these problems prevent our use of batch extensions of other popular but high-cost acquisition functions such as those based on knowledge gradients \citep{wu2016parallel} or entropy search \citep{hernandez2017parallel}. Figure \ref{fig:synthetic_batch} compares our S-GP-TS methods (B=100) with the popular batch routines (B=25), where we see that S-GP-TS achieves lower regret than existing batch BO methods for our most noisy synthetic function (Hartmann).

\begin{figure}[!ht]
 \centering
   \includegraphics[trim=0mm 3mm 0mm 2mm, clip, height=50mm]{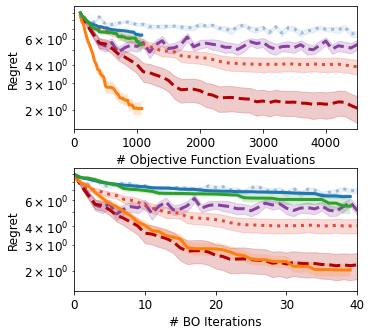}
   \includegraphics[trim=0mm 3mm 0mm 2mm, clip, height=50mm]{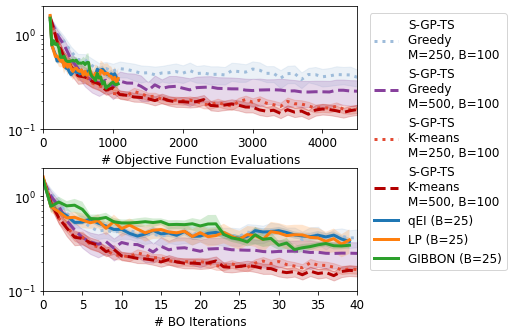}
   \caption{ Simple regret on Shekel (4D, left) and Hartmann (6D, right) as a function of either the number of evaluations (top) or BO iterations (bottom). S-GP-TS methods are ran for batches of size $B=100$ and the batch BO methods for batches of size $B=25$. We see that S-GP-TS is particularly effective when performing the batch optimization of particularly noisy functions (Hartmann), exceeding the regret of the batch baselines. In our synthetic benchmark with low observation noise (Shekel), S-GP-TS is less efficient in terms of individual function evaluations, however, S-GP-TS 's ability to control larger batches means that it can match the performance of the highly perfomant LP with respect to the number of BO iterations (the idealised parallel setting).
   }\label{fig:synthetic_batch}
   \vspace{-2mm}
\end{figure}

\end{document}


%

%


\section{Introduction}\label{Intro}



Over the past decade, Thompson sampling \citep[TS,][]{Thompson1933} has become a very popular algorithm to solve sequential optimization (here, maximization) problems. TS proceeds by sequentially sampling from the posterior distribution of the optimum of the objective function,
allowing for efficiently addressing exploration-exploitation trade-offs.
In the bandit setting, TS has been shown to enjoy favorable performance both in theory and practice
(see~\cite{Kaufmann2012,Agrawal2013TS} for Bernoulli distributions with beta priors,~\cite{Agrawal2013TS} for Gaussian distributions with Gaussian priors,~\cite{Kaufmann2013TS} for the one-dimensional exponential family with uninformative priors, \cite{Gopalan2014TS} for finitely supported distributions and priors,~\citep{Abeille2017LinTS}  for linear bandits, and~\cite{Chowdhury2017bandit} for kernelized bandits).

Gaussian Processes (GPs) provide a spectrum of powerful and flexible modeling tools that can be used to provide predictions of the objective function,
leading to very efficient optimization algorithms in a small data scenario \citep{Shahriari2016}.
TS using GP models \citep[GP-TS,][]{Chowdhury2017bandit} has been found to be empirically competitive while providing guarantees similar to the best GP-based algorithms. In addition, a particular advantage of GP-TS is its versatility (see \citet{hernandez2017parallel,kandasamy2018parallelised}  and \citet{paria2019flexible} for batch-sequential and multi-objective versions, respectively).



%



Despite those appealing features, the practical implementation of GP-TS introduces two 
main difficulties that prevent the method from scaling in terms of time horizon. 
First, building the GP posterior distribution, which is needed every time a new observation is acquired, is well known to require an $O(t^3)$ computation (where $t$ is the number of observations) due to a matrix inversion step~\citep{Rasmussen2006}.
Given the updated posterior model, sampling from its optimum is the second challenging task. The standard approach is to draw a joint sample on a grid discretizing the search space. Sampling over a grid of size $N$ has an $O(N^3)$ complexity \citep[due to a Cholesky decomposition step,][]{diggle1998model}, which is the second computational bottleneck.

A natural answer to the first challenge is to rely on the recent advances in sparse variational GP models (referred to as SVGP for the rest of the paper) 
which allow a low rank approximation of the GP posterior in $O(m^2 t)$ computation, where $m$ is the number of the so-called \emph{inducing variables} and grows at a rate much slower than $t$~\citep{Titsias2009Variational}. 
The inducing variables are manifested either as \emph{inducing points} or \emph{inducing features} \citep[sometimes referred to as inducing inter-domain variables,][]{Burt2019Rates,vanderWilk2020Framework,dutordoir2020}.
Furthermore, \cite{wilson2020efficientsampling} introduced an efficient sampling rule (referred to as \emph{decoupled } sampling) which decomposes a sample from the posterior into the sum of a prior with $M$ features (see Sec.~\ref{Subsec:feature}) and an SVGP update, reducing the computational cost of drawing a sample to $O\left((m+M)N\right)$. Leveraging this sampling rule
results in a scalable GP-TS algorithm (henceforth S-GP-TS) that is simple to implement and can handle orders of magnitude larger number of observation points. 

The main question addressed here is to assess whether such an approach maintains the performance guarantees (in terms of regret, see~\eqref{eq:regdef}) of the vanilla GP-TS.
Indeed, using sparse models and decoupled sampling introduce two layers of approximation,
that, if handled without care, can have a dramatic effect on performance \citep[see e.g.][that showed that even a small constant posterior error in $\alpha-$divergence can lead to under- or over-exploration and poor performance, i.e. linear regret]{Phan2019TSExample}.
Hence, this work proposes the analysis of S-GP-TS methods, along with necessary conditions on some algorithmic parameters to guarantee performance. 


\subsection{Contributions}
In Theorem~\ref{The:RegretBoundsApproximate}, we establish an upper bound on the regret of TS using any approximate GP model that satisfy some conditions on the quality of their posterior approximations (Assumptions~\ref{Ass1} and~\ref{Ass2}). This bound is independent of the particular sampling rule and may be of general interest.

Then, focusing on SVGP models, we provide bounds for the number $M$ of prior features and the number $m$ of inducing variables required to guarantee a low regret when the decoupled sampling rule is used. The bounds on $m$ and $M$ are characterized based on the spectrum of the GP kernel. With these bounds in place, we report drastic improvement in the computational complexity of S-GP-TS with Mat{\'e}rn and Squared Exponential kernels (see Table~\ref{Table:CC}) without compromising the regret performance. Specifically, we prove $O(\gamma_T\sqrt{T\log(T)})$ regret bounds (where $\gamma_T$ is the so-called \textit{information gain}, see Sec.~\ref{Sec:Analysis}), which is the same order as the regret for vanilla GP-TS~\citep{Chowdhury2017bandit}, when $m$ and $M$ are large enough so that conditions on the quality of the approximation are satisfied.







The details of the problem formulation and the analysis are provided in Sec.~\ref{Sec:PF} and Sec.~\ref{Sec:Analysis}, respectively. The preliminaries on GPs are reviewed in Sec.~\ref{Sec:GPs} and the details of S-GP-TS methods are presented in Sec.~\ref{Sec:Alg}.

\subsection{Related work}
Our sampling rule is based on the decoupled sampling rule of~\cite{wilson2020efficientsampling}. Our contribution is to provide analysis for TS using a version of decoupled sampling. 
Note however that our sampling rule differs slightly from the one of \citet{wilson2020efficientsampling}, as we scale the covariance of the sample to ensure sufficient exploration required to guarantee the convergence in TS algorithms. 
The proof of Theorem~\ref{The:RegretBoundsApproximate} is inspired by the analysis of vanilla GP-TS given in~\cite{Chowdhury2017bandit}. 
Our analysis however tackles additional challenges due to the introduced approximations (We characterize the regret bounds in terms of a parameterization of the approximation). An additive approximation error in the standard deviation, in particular, needs special attention as it might cause under-exploration in the vicinity of the optimum point. 
To bound the values of $m$ and $M$ guaranteeing a low regret, we use a result from~\cite{Burt2019Rates} on the KL-divergence between the SVGP model and the exact GP model.


\cite{Calandriello2019Adaptive} introduced a scalable version of GP-UCB (an \emph{optimistic} optimization algorithm that selects the observation points according to an upper confidence bound score), based on randomized matrix sketching
and leverage score sampling.
They proved the same regret guarantees as the ones for GP-UCB (up to a multiplicative $\log(T)$ factor). In comparison, the analysis of S-GP-TS is different and has additional complexity due to the cost of posterior sampling ($O(N^3)$ per step). 

Among other approaches to sparse approximation of GP posteriors is to use a finite dimensional truncated feature representation~\citep{Hernandez2014features, Shahriari2016} such as random Fourier features (FFs) which might suffer from \emph{variance starvation}, i.e., underestimate the variance of points far from the observations~\citep{wang2018batched, Mutny2018SGPTS, wilson2020efficientsampling}. Intuitively, that is because the Fourier basis is only an efficient basis for representing stationary GPs, while the posterior is generally nonstationary~\citep{wilson2020efficientsampling}. To the best of our knowledge, no regret guarantees are reported for this approach. 

\cite{Mutny2018SGPTS} however took a different approach based on the quadratic FFs (in contrast to the random FFs) and constructed scalable BO methods not suffering from variance starvation. They provided regret bounds for the case of additive SE kernel which match the ones resulting from the application of our Theorem~\ref{The:RegretBoundsApproximate} to the SE kernel.
The application of our regret bounds to $d-$dimensional linear kernels results in the same $O(d\sqrt{T\log(T)})$ bounds as in linear bandits with TS~\citep{Abeille2017LinTS} or with UCB~\citep{Abbasi2011}. 

\cite{hernandez2017parallel} introduced a heuristic parallel implementation of TS where each updated posterior model is used to draw $n'$ samples. \cite{kandasamy2018parallelised} showed that parallelization improves the computational complexity due to updating the model, by a constant factor of $O(n')$ at the price of incurring an $O(n')$ additive regret. The parallelization however does not reduce the cost due to sampling. This idea directly applies to our results as well.



\section{Problem Formulation}\label{Sec:PF}

We consider the sequential optimization of an unknown function $f$ over a compact set $\X\subset \Rr^d$. A sequential learning policy selects a batch of $B$ observation points $\{x_{t,b}\}_{b\in [B]}$ at each time step $t=1,2,\dots,T$ and receives the corresponding real-valued and noisy rewards $\{y_{t,b}=f(x_{t,b})+\epsilon_{t,b}\}_{b\in[B]}$, where $\epsilon_{t,b}$ denotes the observation noise. Throughout the paper, we use the notation $[n]=\{1,2,\dots,n\},$ for $n\in \Nn$. 
As is common in both the bandits and GP literature, our analysis uses the following sub-Gaussianity assumption,  a direct consequence of which is that $\E[\epsilon_{t,b}] = 0$, for all~$t,b\in \Nn$.
\begin{assumption}\label{AssNoise}
$\epsilon_{t,b}$ are i.i.d., over both $t$ and $b$, $R-$sub-Gaussian random variables, where $R>0$ is a fixed constant. Specifically,
$
\E[e^{h\epsilon_{t,b}}]\le \exp(\frac{h^2R^2}{2}),~\forall h\in \Rr, \forall t,b\in \Nn. 
$

\end{assumption}

Let $x^*\in\argmax_{x\in\X}f(x)$ be an optimal point.
We can then measure the performance of a sequential optimizer by its \emph{strict regret}, defined as the cumulative loss compared to $f(x^*)$ over a time horizon $T$
\begin{eqnarray} \label{eq:regdef}
R(T,B;f) = \E\left[\sum_{t=1}^T \sum_{b=1}^B f(x^*) - f(x_{t,b})\right],
\end{eqnarray}
where the expectation is with respect to the randomness in noise and the possible stochasticity in the sequence of the selected batch observation points $\{x_{t,b}\}_{t\in[T], b\in [B]}$. 
Note that our regret measure (\ref{eq:regdef}) is defined for the true unknown  $f$. In contrast, the alternative {Bayesian regret} \citep[see e.g.][]{Russo2018TutorialTS, kandasamy2018parallelised} averages over a prior distribution for $f$. As upper bounds on strict regret directly apply to the Bayesian regret (but not necessarily the reverse), our results are stronger than those that can be achieved when analysing just Bayesian regret, for example when applying the technique of \cite{Russo2014} that equates TS's Bayesian regret with that of the well-studied upper confidence bound policies.

Following \cite{Chowdhury2017bandit,srinivas2010gaussian, Calandriello2019Adaptive}, our analysis assumes a regularity condition on the objective function motivated by kernelized learning models and their associated reproducing kernel Hilbert spaces \citep[RKHS,][]{berlinet2011reproducing}: 
\begin{assumption}\label{AssNorm}
Given an RKHS $H_k$, the norm of the objective function is bounded: $||f||_{H_k}\le \Bc$, for some $\Bc>0$, and $k(x,x')\le 1$, for all $x,x'\in\X$.
\end{assumption}

In the case of practically relevant kernels, Assumption~\ref{AssNorm} implies certain smoothness properties for the objective functions. 




\section{Gaussian Processes and Sparse Models}\label{Sec:GPs}

GPs are powerful non-parametric Bayesian models over the space of functions \citep{Rasmussen2006} with a distribution specified by a mean function $\mu(x)$
(henceforth assumed to be zero for simplicity)
and a positive definite kernel (or covariance function) $k(x,x')$.
We provide here a brief description of the classical GP model and two sparse variational formulations.

\subsection{Exact Gaussian Process models}

Suppose that we have collected a set of location-observation tuples $\H_{t} = \{\Xb_{t},\yb_{t}\}$, where $\Xb_{t}$ is the $tB\times d$ matrix of locations with rows $[\Xb_{t}]_{(s-1)B+b} = x_{s,b}$, and $\yb_{t}$ is the $tB$-dimensional column vector of observations with elements $[\yb_{t}]_{(s-1)B+b}=y_{s,b}$, for all $s\in[t], b\in[B]$.
Then, assuming a  Gaussian observation noise
, the posterior of the GP model $\hat{f}$ 
given the set of past observations $\H_{t}$, is also a GP with mean $\mu_{t}(\cdot)$, variance $\sigma^2_{t}()$ and kernel function $k_t(\cdot,\cdot)$
specified as
\begin{eqnarray}
\mu_t(x) =  k^{\TP}_{\Xb_t,x} (K_{\Xb_t,\Xb_t}+\tau \Ib)^{-1} \yb_{t},
\quad
k_{t}(x,x') =  k(x,x') -  k^{\TP}_{\Xb_t,x} (K_{\Xb_t,\Xb_t}+\tau \Ib)^{-1} k_{\Xb_t,x'},\label{GPt}
\end{eqnarray}
and $\sigma^2_{t}(x)= k_{t}(x,x)$, with $k_{\Xb_t,x}$ the $tB$ dimensional column vector with entries $[k_{\Xb_t,x}]_{(s-1)B+b}= k(x_{s,b},x)$, and $K_{\Xb_t,\Xb_t}$ the ${tB}\times{tB}$ positive definite covariance matrix with entries $[K_{\Xb_t,\Xb_t}]_{(s-1)B+b,(s'-1)B+b'} = k(x_{s,b}, x_{s',b'})$. 
We directly see from (\ref{GPt}) that accessing the posterior expressions require an $O((tB)^3)$ matrix inversion, which is a computational bottleneck for large values of $tB$. 

Note that in our problem formulation $f$ is fixed and observation noise has an unknown sub-Gaussian distribution. Using  a GP prior and assuming a Gaussian noise is merely for ease of modelling and does not affect our assumptions on $f$ and $\epsilon_{t,b}$. The notation  $\hat{f}$ is thus used to distinguish the GP model from the fixed $f$.




\subsection{Sparse Variational Gaussian Process Models with Inducing Points}\label{Sec:SVGP}

To overcome the cubic cost of exact GPs, SVGPs \citep{Titsias2009Variational,Hensman2013} instead approximate the GP posterior  
through a  set of \emph{inducing points} $\Zb_t = \{z_1,..., z_{m_t}\}$ ($z_i \in \X$, with $m_t << tB$). Conditioning on the \emph{inducing variables}
$\ub_t = \hat{f}(\Zb_t)$ (rather than the $tB$ observations in $\yb_t$) and specifying a prior Gaussian density $q_{t}(\ub_t) = \Nc(\mb_t, \Sb_t)$, yields an approximate posterior distribution that, crucially, is still  
a GP but with the significantly reduced computational complexity of $O(m_t^2t)$. The posterior mean and covariance of the SVGP are given in closed form as
\begin{eqnarray}\nn
{\mu}^{(s)}_t(x) = k_{\Zb_t,x}^{\TP} K^{-1}_{\Zb_t,\Zb_t}\mb_t\quad
{k}^{(s)}_t(x, x') = k(x,x')+ k_{\Zb_t,x}^{\TP}K^{-1}_{\Zb_t,\Zb_t} (\Sb_t - K_{\Zb_t,\Zb_t})K^{-1}_{\Zb_t,\Zb_t} k_{\Zb_t,x'}.\nn
\end{eqnarray}
The variational parameters $\mb_t$ and $\Sb_t$ are set as the maximizers of the evidence lower bound (ELBO, see Appendix~\ref{app:SVGP} for the details) and can be 
optimized numerically with mini-batching \citep{Hensman2013}. There are various standard ways in practice to select the locations of the inducing points $\Zb_t$, e.g. by using an experimental design, sampling from a k-DPP (that stands for determinantal point process),
or by optimizing them along with the inducing variables. 





\subsection{Sparse Variational Gaussian Process Models with Inducing Features}\label{Subsec:feature}

An alternative approximation strategy is using inducing feature approximations \citep{hensman2017variational,Burt2019Rates,Dutordoir2020spherical}. Here, we
define inducing variables as the linear integral transform of $\hat{f}$ with respect to some \emph{inducing features} \citep{lazaro2009inter} $\psi_1(x),..,\psi_{m_t}(x)$, i.e we set our $i^{\textrm{th}}$ inducing variable as  $u_{t,i} = \int_{\X} \hat{f}(x)\psi_i(x)dx$.  Courtesy of Mercer's theorem, we can {usually} decompose our chosen kernel $k$ as the inner product of possibly infinite dimensional feature maps (see Theorem 4.1 in \cite{Kanagawa2018}) to provide the expansion $k(x,x') = \sum_{j=1}^{\infty}\lambda_j\phi_j(x).\phi_j(x')$ for eigenvalues $\{\lambda_j\in \Rr^+\}_{j=1}^{\infty}$ and eigenfunctions $\{\phi_j\in H_k\}_{j=1}^\infty$. If we set our inducing features to be the $m_t$ eigenfunctions with largest eigenvalues, it can be shown that  $\text{cov}(u_{t,i}, u_{t,j}) = \lambda_j \delta_{i,j}$ and $\text{cov}(u_{t,j}, \hat{f}(x)) = \lambda_j \phi_j(x)$ , yielding an approximate Gaussian Process model with  posterior mean and covariance given by
\begin{eqnarray}\nn
{\mu}^{(s)}_t(x) = \phib_{m_t}^{\TP}(x)\mb_t\qquad {k}^{(s)}_t(x,x') = k(x,x') +   \phib_{m_t}^{\TP}(x)(\Sb_t-\Lambda_{m_t})\phib_{m_t}(x').
\end{eqnarray}
Here, $\mb_t$ and $\Sb_t$ are inducing parameters (as above),  $\phib_{m_t}(x)\triangleq [\phi_1(x),...,\phi_{m_t}(x)]^{\TP}$ is the truncated feature vector and $\Lambda_{m_t}$ is the ${m_t}\times {m_t}$ diagonal matrix of eigenvalues, $[\Lambda_{m_t}]_{i,j}=\lambda_i\delta_{i,j}$. 


Inducing feature approximations have strong advantages, in particular a reduced computational cost and the fact that no inducing points need to be specified. However, accessing these eigenfeatures
require the Mercer decomposition of the used kernel, which is available for certain kernels on manifolds \citep{Borovitskiy2020,Dutordoir2020spherical}, but limited to low dimensions for others \citep{zhu1997gaussian,solin2020hilbert}.

\section{Scalable Thompson Sampling using Gaussian Process Models (S-GP-TS)}\label{Sec:Alg}

At each time $t$, an ideal GP-TS proceeds by drawing $B$ i.i.d. samples $\{\hat{f}_{t,b}\}_{b\in [B]}$  from the posterior distribution of $\hat{f}$ (up to a scaling of its covariance) and finding their maximizers. 
Since $\hat{f}_{t,b}$ is an infinite dimensional object, it is standard in practice \citep{kandasamy2018parallelised} to resort to sampling on a discretization $D_t$ of $\X$.
Then, the $x_{t,b}$ are selected as the maximizers of the discretized samples:
\begin{eqnarray}
    \{x_{t,b} = \argmax_{x\in D_t}\hat{f}_{t,b}(x)\}_{b\in [B]} \label{eq:maxsample}
\end{eqnarray}

The discrete draws are generated via an affine transformation of Gaussian random variables $\xi\sim\Nc(\mathbf{0}_{N_t},\Ib_{N_t})$, where for $N_t=|D_t|$, $\mathbf{0}_{N_t}$ is the $N_t\times 1$ zero vector and $\Ib_{N_t}$ is the $N_t\times N_t$ identity matrix. Specifically, 
\begin{equation}
    \hat{f}_{D_t}|\yb_t = \mu_{D_t|\yb_t} + \alpha_tK^{\frac{1}{2}}_{D_t,D_t|\yb_t}\xi,
\end{equation}
where $\hat{f}_{D_t}|\yb_t = [\hat{f}_t(x)]_{x\in D_t}$, $\mu_{D_t|\yb} = [\mu_t(x)]_{x\in D_t}$, $K_{D_t,D_t|\yb}=[k_t(x,x')]_{x,x'\in D_t}$, and $(\cdot)^{\frac{1}{2}}$ denotes a matrix square root, such as a Cholesky factor which incurs an $O(N_t^3)$ computational cost. The scaling parameter $\alpha_t\in \Rr$ is used to ensure sufficient exploration as it will become clear in the analysis. 

To improve the efficiency of sampling, a classical approximation \citep{Hernandez2014features} is to rely on 
kernel decomposition (see Sec. \ref{Subsec:feature}). In particular, a sample $\hat{f}$ can be expressed as a weighted sum of eigenfunctions
$
\hat{f}(x) =  \sum_{j=1}^\infty \sqrt{\lambda_j} w_j \phi_j(x),
$
where the weights $w_j$ are independent normal $\Nc(0, 1)$ random variables. 
Then, with decaying eigenvalues, an approximate sample $\hat{f}_M\overset{\mathrm{d}}{\approx}\hat{f}$ can be drawn from the GP using an $M-$truncated feature map as
$
\hat{f}_M(x) 
=\sum_{j=1}^M \sqrt{\lambda_j} w_j \phi_j(x).
$ 


\citet{wilson2020efficientsampling} introduced a sampling method based on decomposing the posterior sample into a truncated feature representation of the prior and an SVGP update which improves the computational cost of drawing a sample. We build on this method to introduce two sampling rules for S-GP-TS.
Our first TS rule is referred to as \emph{Decoupled Sampling with Inducing Points}:
\begin{eqnarray}\label{SampIndPoints}
\tilde{f}_{t}(x)
=\sum_{j=1}^M \alpha_t\sqrt{\lambda_j}w_j\phi_j(x) + \sum_{j=1}^{m_t}  v_{t,j} k(x,z_j),
\end{eqnarray}
where the coefficients $v_{t,j} = [K^{-1}_{\Zb_t, \Zb_t}(\alpha_t(\ub_t - \mb_t) +\mb_t - \alpha_t\Phib_{m_t,M} \Lambda_M^{\frac{1}{2}}\wb_M)]_j$, $\Phib_{m_t,M} = [\phib_M(z_1),..., \phib_M(z_{m_t})]^{\TP}$ ($m_t\times M$) and $\wb_{M} = [w_1,...,w_{M}]^{\TP}$ where $w_i$ are drawn i.i.d from $\Nc(0,1)$, for $i=1,2\dots,M$. (\ref{SampIndPoints}) is a modification of the sampling rule of~\citet{wilson2020efficientsampling} where we have scaled
the covariance of the sample with $\alpha^2_t$ (to be specified later) without changing the mean. The scaling is necessary for TS methods to ensure sufficient exploration based on anti-concentration of Gaussian distributions. 
If we set $\alpha_t = 1$, we retrieve the exact sampling rule of~\citet{wilson2020efficientsampling}. 

For SVGP models with inducing features, the decoupled sampling rule becomes: 
\begin{eqnarray}\label{SampIndVariables}
\tilde{f}_{t}(x)
=
\sum_{j=1}^M \alpha_t\sqrt{\lambda_j}w_j\phi_j(x) + \sum_{j=1}^{m_t} v_{t,j} {\lambda_j}\phi_j(x),
\end{eqnarray}
where $v_{t,j} = [\Lambda_{m_t}^{-1}(\alpha_t(\ub_t -\mb_t)+\mb_t -\alpha_t\Lambda_{m_t}^{\frac{1}{2}} \wb_{m_t})]_j$.
The computational complexity of both sampling rules introduced above is $O(tm_t^2 + N_t M)$ per step $t$\footnote{The computational cost also comprises an $O(M m_t + N_t m_t)$ term which is dominated by $O(N_t M)$. This will become clear later when the values of $N_t, m_t$ and $M$ are specified. }. 

\section{Regret Analysis}\label{Sec:Analysis}

In this section, we present the main contribution of our work that is the theoretical analysis of S-GP-TS methods. First, we establish an upper bound on the regret of any approximate GP model (Theorem~\ref{The:RegretBoundsApproximate}) based on the quality of approximations parameterized in Assumptions~\ref{Ass1} and~\ref{Ass2}. We then discuss the consequences of Theorem~\ref{The:RegretBoundsApproximate} for the regret bounds and the computational complexity of S-GP-TS methods based on SVGP and the decoupled sampling rules~\eqref{SampIndPoints} and~\eqref{SampIndVariables}.


\subsection{The Analysis of Vanilla GP-TS} 

\citeauthor{Chowdhury2017bandit} proved that, with probability at least $1-\delta$, $|f(x) - \mu_{t}(x)|\le {u}_t\sigma_t(x)$, where ${u}_t = \left( B+R\sqrt{2(\gamma_{t}+1+\log(1/\delta))} \right)$ and $\gamma_t$ is the \textit{maximal information gain}:
\begin{equation}
    \gamma_t = \max_{\Xb_t\subset\X}I(\yb_t, \hat{\bm{f}}_t),
\end{equation}
where $I(\yb_t, \hat{\bm{f}}_t) = \frac{1}{2}\log( \det(\Ib_t+\tau^{-1}\Kb_{\Xb_t,\Xb_t}))$
denotes the mutual information between observations $\yb_t$ and the underlying model values $\hat{\bm{f}}_t=[\hat{f}(x_1),..., \hat{f}(x_t)]$. The maximal information gain can itself be bounded for a specific kernel, e.g. $\gamma_t\le O(\log(t)^{d+1})$ for the SE kernel and $\gamma_t\le O(t^{d(d+1)/(2\nu+d(d+1))}\log(t))$ for the Mat{\'e}rn kernel \citep{srinivas2010gaussian}. 

Based on this concentration inequality, \citeauthor{Chowdhury2017bandit} showed that the regret of GP-TS scales with the cumulative uncertainty at the observation points measured by the standard deviation: $O(\sum_{t=1}^T {u}_t\sigma_{t-1}(x_t))$.
Furthermore, \citeauthor{srinivas2010gaussian} showed that $\sum_{i=1}^t\sigma_{i-1}^2(x_i)\le \gamma_t$. Using this result and applying Cauchy-Schwarz inequality to $O(\sum_{t=1}^T {u}_t\sigma_{t-1}(x_t))$,  \citeauthor{Chowdhury2017bandit} proved that
$
R( T; \text{vanilla GP-TS}) = {O}\left(\gamma_T\sqrt{T\log(T)}\right)
$. 

\subsection{Regret Bounds Based on the Quality of Approximations}\label{Sec:RegSec}

Consider a TS algorithm using an approximate GP model to solve the sequential optimization problem. In particular, assume an approximate model with kernel $\tilde{k}$ and mean $\tilde{\mu}$ is provided such that $\tilde{k}_t$, $\tilde{\sigma}_t$ and $\tilde{\mu}_t$ are, respectively, approximations of ${k}_t$, ${\sigma}_t$ and ${\mu}_t$ in the surrogate GP model (defined in~\eqref{GPt}). At each time $t$, a batch of samples $\{\tilde{f}_{t,b}\}_{b\in [B]}$ is drawn from a GP with mean $\tilde{\mu}_{t-1}$ and the scaled covariance $\alpha_t^2\tilde{k}_{t-1}$. The next observation point $x_t$ is selected as the maximizer of $\tilde{f}_t$ over a discretization $D_t$ of the search space.
We start our analysis by making two assumptions on the \emph{quality} of approximations $\tilde{\mu}_t$, $\tilde{\sigma}_t$ of the posterior mean and the standard deviation. This parameterization is agnostic to the particular sampling rule (governing $\tilde{\mu}_t$ and $\tilde{\sigma}_t$)
and
provides valuable intuition that can be applied to any approximate method. I.e., for the results presented in this section, $\tilde{\mu}_t$, $\tilde{\sigma}_t$ may correspond to any approximate model and are not limited to the decoupled sampling rules.

\begin{assumption}[{quality} of the approximate standard deviation]\label{Ass1}
For the approximate $\tilde{\sigma}_{t}$, the exact $\sigma_t$, and for all $x\in \X$,
\[
\frac{1}{\underline{a}_t}\sigma_t(x)-\epsilon_t\le \tilde{\sigma}_{t}(x) \le \bar{a}_t\sigma_t(x)+\epsilon_t,
\]
where $1\le\underline{a}_t\le\underline{a}$, $1\le\bar{a}_t\le\bar{a}$ for all $t\ge1$ and some constants $\underline{a}, \bar{a}\in\Rr$, and $0\le\epsilon_t\le \epsilon$ for all $t\ge1$ and some small constant $\epsilon\in \Rr$.
\end{assumption}

\begin{assumption}[{quality} of the approximate prediction]\label{Ass2}
For the approximate $\tilde{\mu}_t$, the exact $\mu_t$ and $\sigma_t$, and for all $x\in \X$,
\[
|\tilde{\mu}_t(x) - \mu_t(x)| \le c_t {\sigma}_t(x),
\]
where $0\le c_t\le c$ for all $t\ge1$ and some constant $c\in \Rr$.
\end{assumption}
The following Lemma establishes a concentration inequality for the approximate statistics as a direct result of Theorem 2 of~\cite{Chowdhury2017bandit}.

\begin{lemma}\label{Lemma:ConIneqApprox}
Under Assumptions~\ref{Ass1}~and~\ref{Ass2}, with probability at least $1-\delta$,
$
|f(x)-\tilde{\mu}_t(x)| \le  \u_t (\tilde{\sigma}_t(x)+\epsilon_t)
$, where $\u_t(\delta)=\underline{a}_t\left( B+R\sqrt{2(\gamma_{t}+1+\log(1/\delta))} +c_t\right)$.

\end{lemma}

Proof is provided in Appendix~A. 
Following \citeauthor{Chowdhury2017bandit, srinivas2010gaussian}, we also assume that $D_t$ is designed in a way that  $|f(x)-f(\mathtt{x}^{(t)})|\le 1/t^2$ for all $x\in\X$, where $\mathtt{x}^{(t)}=\argmin_{x'\in D_t}||x-x'||$ is the closest point (in Euclidean norm) to $x$ in $D_t$. 
The size of this discretization satisfies $N_t\le C(d,B) t^{2d}$ where $C(d,B)$ is independent of $t$ (\citeauthor{Chowdhury2017bandit,srinivas2010gaussian}). 
We are now in a position to present the regret bounds based on the quality of the approximations.

\begin{theorem}\label{The:RegretBoundsApproximate}
Consider S-GP-TS with $\alpha_t = 2\u_{t}(1/(t^2))$ using an approximate GP model satisfying Assumptions~\ref{Ass1}~and~\ref{Ass2}. Assume $||f||_{H_k}\le B$ and the observation noise is $R-$sub-Gaussian. The regret (defined in~\eqref{eq:regdef}) satisfies
\begin{eqnarray}\nn
R(T;\text{S-GP-TS}) &\le& 
30\bar{a}\beta_T\sqrt{4\gamma_T(T+2)}+
(31\beta_T+\alpha_T)\epsilon T + 15B+2,
\end{eqnarray}
where $\beta_t = \alpha_t(b_t+\frac{1}{2})$ with $b_t = \sqrt{2\log(N_t t^2)}$. 
\end{theorem}
The expression can be simplified as
$R(T;\text{S-GP-TS})= {O}\left( \underline{a}\bar{a}\gamma_T\sqrt{ T\log(T)} + \underline{a}\epsilon T\sqrt{\gamma_T\log(T)}  \right)$,
where it is easier to see the scaling of regret with respect to the parameterization of the quality of the approximations. The regret bound scales with the product of the ratios $\underline{a}$ and $\bar{a}$. It also comprises an additive term depending on the additive approximation error in the standard deviation.

Parameter $B$ is used to properly tune the scaling $\alpha_t$. If it is not known in advance, standard guess-and-double techniques apply~(the same as in GP-UCB, see \citeauthor{srinivas2010gaussian, Calandriello2019Adaptive})


\subsection{Approximation Quality of the Decomposed Sampling Rule}

The quality of the approximation is characterized using the spectral properties of the GP kernel. Let us define $\delta_M =\sum_{i=M+1}^\infty\lambda_i\psi^{2}_i$ where $\psi_i =\max_{x\in\X} \phi_i(x)$.
With decaying eigenvalues, including sufficient eigenfunctions in the feature representation results in a small $\delta_M$. 
In addition, \citeauthor{Burt2019Rates} showed that, for SVGP, a sufficient number of inducing variables ensures that the Kullback–Leibler (KL) divergence between the approximate and the true posterior distributions diminishes. Consequently, the approximate posterior mean and the approximate posterior variance converge to the true ones. Building on this result, we prove the following proposition on the quality of approximations. 
\begin{proposition}\label{Prop:paramters}
For S-GP-TS based on sampling rule~\eqref{SampIndPoints} with $\alpha_t=1$ and an SVGP using an $\epsilon_0$ close k-DPP for selecting $\Zb_t$, with probability at least $1-\delta$, Assumptions~\ref{Ass1} and~\ref{Ass2} hold with parameters $c_t =  \sqrt{\kappa_t}$, $\underline{a}_t = \frac{1}{\sqrt{1-\sqrt{3\kappa_t}}}$, $\bar{a}_t = \sqrt{1+\sqrt{3\kappa_t}}$, and $\epsilon_t = \sqrt{C_1m_t\delta_M}$,
where $C_1$ is a constant specified in Appendix~C and $\kappa_t = \frac{2t(m_t+1)\delta_{m_t}}{\delta\tau} + \frac{4t\epsilon_0}{\delta\tau}$.

For S-GP-TS based on sampling rule~\eqref{SampIndVariables} with $\alpha_t=1$, Assumptions~\ref{Ass1} and~\ref{Ass2} hold with parameters $c_t =  \sqrt{\kappa_t}$, $\underline{a}_t = \frac{1}{\sqrt{1-\sqrt{3\kappa_t}}}$, $\bar{a}_t = \sqrt{1+\sqrt{3\kappa_t}}$, and $\epsilon_t = \sqrt{C_1m_t\delta_M}$,
where $C_1$ is the same constant as above and $\kappa_t = \frac{2t\delta_{m_t}}{\tau}$.
\end{proposition}

The results in~\citeauthor{Burt2019Rates} do not directly apply to our setting for two reasons. First, we use the decoupled sampling rules (\eqref{SampIndPoints} and~\eqref{SampIndVariables}) which introduce additional error in approximate variance compared to SVGP (while keeping the prediction the same as in SVGP). This additional error in the approximate variance in particular makes the analysis of S-GP-TS more challenging. Second, \citeauthor{Burt2019Rates} build their convergence results on the assumption that the observation points $x_t$ are drawn from a prefixed distribution which is not the case in S-GP-TS, where $x_t$ are selected according to an experimental design method. A detailed proof of Proposition~\ref{Prop:paramters} is provided in Appendix~C. 

\begin{table*}[h]
\centering
\begin{tabular}{  c c c} 
 \Xhline{2\arrayrulewidth}
   Kernel&  Sampling rule~\eqref{SampIndPoints}&Sampling rule~\eqref{SampIndVariables}\\
 \hline
  &&\\
   Mat{\'e}rn ($\nu$) & $O\left(N_TT^{\frac{4\nu^2+d^2}{2(2\nu-d)\nu}}+
 T^2\min\{T^{\frac{4d}{2\nu-d}}, T^2\} \right)$& 
  $O\left( N_TT^{\frac{(2\nu+d)^2-2\nu d}{4\nu^2}} + T^{\frac{2\nu+d}{\nu}}  \right)$ \\ 
   SE  &  $O\left(N_TT\log^{d}(T)+T^2\log^{2d}(T)\right)$&  $O\left(N_TT\log^{d}(T)+T^2\log^{2d}(T)\right)$\\ 
    &&\\
 \hline
 &&\\
\end{tabular}\caption{  The computational complexity of S-GP-TS using Decoupled Sampling with Inducing Points~\eqref{SampIndPoints} and Decomposed Sampling with Inducing Features~\eqref{SampIndVariables} with Mat{\'e}rn (with smoothness parameter $\nu$) and SE kernels.}\label{Table:CC}
\end{table*}

\subsection{Application of Regret Bounds to Mat{\'e}rn and SE Kernels}

In this section, we show the application of Theorem~\ref{The:RegretBoundsApproximate} to the Matern and SE kernels, based on their spectrum properties. In the case of a Mat{\'e}rn kernel with smoothness parameter $\nu>\frac{d}{2}$ it is known that $\lambda_j  =O(j^{-\frac{2\nu+d}{d}})$~\citep{MaternEigenvaluessantin2016}. In the case of SE kernel, it is known that $\lambda_j  = O(\exp(-j^{\frac{1}{d}}))$~\citep{SEEigenvalues}. Also, see~\citet{Gabriel2020practicalfeature} which gave closed form expression of their eigenvalue-eigenfunction pairs on hypercubes.
With these bounds on the spectrum of the kernels, Theorem~\ref{The:RegretBoundsApproximate} and Proposition~\ref{Prop:paramters} result in the following theorem.
\begin{theorem}\label{The:MatSE}
Assume $||f||\le B$ and the observation noise is sub-Gaussian. 
Consider S-GP-TS under following different cases:
\begin{itemize}
    \item[$1$.] Mat{\'e}rn kernel with parameter $\nu$ under sampling rule~\eqref{SampIndPoints} with $m_t \sim T^{\frac{2d}{2\nu-d}}$, $M\sim T^{\frac{(2\nu+d)d}{2(2\nu-d)\nu}}$,
    \item[$2$.] Mat{\'e}rn kernel with parameter $\nu$ under sampling rule~\eqref{SampIndVariables} with $m_t \sim T^{\frac{d}{2\nu}}$, $M\sim T^{\frac{(2\nu+d)d}{4\nu^2}}$,
    \item[$3$.] SE kernel under sampling rule~\eqref{SampIndPoints} with $m_t,M\sim (\log(T))^{d}$,
    \item[$4$.] SE kernel under sampling rule~\eqref{SampIndVariables} with $m_t,M\sim (\log(T))^{d}$.
\end{itemize}
Under all cases above, we have 
$
R(T; S-GP-TS) = O(\gamma_T\sqrt{T\log(T)}).
$

\end{theorem}


To prove Theorem~\ref{The:MatSE}, it is shown that the algorithmic parameters $M$ and $m_t$ are selected large enough such that approximation parameters $\underline{a}, \bar{a}, c, \epsilon$ in Assumptions~\ref{Ass1} and~\ref{Ass2} are sufficiently small. The relation between the algorithmic parameters and the approximation parameters and $m_t$ is given in Proposition~\ref{Prop:paramters}. The regret bound follows form Theorem~\ref{The:RegretBoundsApproximate}. A detailed proof is provided in Appendix~.

 Table~\ref{Table:CC} reports the computational costs, $O\left((M+m_T)N_T + m_T^2T^2\right) $, under the four cases, and shows the improvements over the $O(N_T^3T+T^4)$ computational cost of the vanilla GP-TS. For the Mat{\'e}rn kernel, under sampling rule~\eqref{SampIndPoints}, in order for $m_t$ to grow slower than $t$, $\nu$ is required to be sufficiently larger than $\frac{d}{2}$.

\section{Conclusion}\label{Sec:Disc}












\bibliography{references.bib}
\bibliographystyle{abbrvnat}

\newpage

\title{Scalable Thompson Sampling using Sparse Gaussian Process Models: \\ 
Supplementary Materials}

%
%
\vspace{-10em}
In this section, we provide detailed proofs for Theorem~\ref{The:RegretBoundsApproximate}, Proposition~\ref{Prop:paramters} and Theorem~\ref{The:MatSE}.
%
%

\subsection*{Proof of Theorem~\ref{The:RegretBoundsApproximate}}

We build on the analysis of GP-TS in~\cite{Chowdhury2017bandit} to prove the regret bounds for S-GP-TS. Although the proofs are to some extent similar, nonetheless the analysis of standard GP-TS does not extend to S-GP-TS. This proof characterizes the behavior of the upper bound on regret in terms of the approximation constants, namely $\underline{a}, \bar{a}, {c}$ and $\epsilon$. 
A notable difference is that the additive approximation error in the posterior standard deviation ($\epsilon_t$) can cause under-exploration which is an issue the analysis of exact GP-TS cannot address. 

We first focus on the instantaneous regret at each time $t$ within the discrete set, $f({\mathtt{x}^*}^{(t)}) - f(x_t)$. Recall ${\mathtt{x}^*}^{(t)} \triangleq \text{argmin}_{x'\in D_t}||x^*-x'||$. It is then easy to upper bound the cumulative regret as the cumulative value of $f({\mathtt{x}^*}^{(t)}) - f(x_t)+\frac{1}{t^2}$ as a result of the property of the discretization that ensures $f(x^*)-f({\mathtt{x}^*}^{(t)})\le \frac{1}{t^2}$. 
For upper bounds on instantaneous regret, we start with concentration of GP samples $\tilde{f}_t$ around their predicted values as well as concentration of the prediction around the objective function. We then consider the anti-concentration around the optimum point. The necessary anti-concentration may fail due to approximation error in the standard deviation around the optimum point. We thus consider two cases of low and sufficiently high standard deviation at ${\mathtt{x}^*}^{(t)}$ separately. While a low standard deviation implies good prediction at ${\mathtt{x}^*}^{(t)}$, a sufficiently high standard deviation guarantees sufficient exploration. We use these results to upper bound the instantaneous regret at each time $t$ with uncertainties measured by the standard deviation. 

\textbf{Concentration events $\Ec_t$ and $\tilde{\Ec}_t$:}

\textbf{Define $\Ec_t$} as the event that at time $t$, for all $x\in D_t$, 
$
|f(x) - \tilde{\mu}_{t-1}(x)| \le \frac{1}{2}\alpha_t(\tilde{\sigma}_{t-1}(x)+\epsilon_t)
$.
Recall $\alpha_t = 2\u_t(1/(t^2))$. Applying lemma~\ref{Lemma:ConIneqApprox}, we have $\Pr[\Ec_t]\ge 1-\frac{1}{t^2}$.

\textbf{Define $\tilde{\Ec}_t$} as the event that for all $x\in D_t$, 
$
|\tilde{f}_t(x) - \tilde{\mu}_{t-1}(x)| \le  \alpha_tb_t \tilde{\sigma}_{t-1}(x)
$
where $b_t = \sqrt{2\ln(N_t t^2)}$. We have 
$
\Pr[\tilde{\Ec}_t] \ge 1-\frac{1}{t^2}.
$

\emph{Proof.}
For a fixed $x\in D_t$,
\begin{eqnarray}\nn
\Pr\left[|\tilde{f}_t(x) - \tilde{\mu}_{t-1}(x)| >  \alpha_{t}b_t \tilde{\sigma}_{t-1}(x)\right] <\exp(-\frac{b_t^2}{2}) = \frac{1}{N_tt^2}.
\end{eqnarray}
The inequality holds because of the following bound on the CDF of a normal random variable $1-\text{CDF}_{\Nc(0,1)}(c)\le \frac{1}{2}\exp(-\frac{c^2}{2})$ and the observation that $\frac{\tilde{f}_t(x) - \tilde{\mu}_{t-1}(x)}{\alpha_t \tilde{\sigma}_{t-1}(x)}$ has a normal distribution. Applying union bound we get
$
\Pr[\bar{\tilde{\Ec}}_t] \le \frac{1}{t^2}
$
which gives us the bound on probability of $\tilde{\Ec}_t$.  ~~~~~ $\square$

We thus proved $\Ec_t$ and $\tilde{\Ec}_t$ are high probability events. This will facilitate the proof by conditioning on $\Ec_t$ and $\tilde{\Ec}_t$. 
Also notice that when both $\Ec_t$ and $\tilde{\Ec}_t$ hold true, we have, for all $x\in D_t$
\begin{eqnarray}\label{eq:beta}
|\tilde{f}_t(x) - f(x)| \le \beta_t\tilde{\sigma}_{t-1}(x) + \frac{1}{2}\alpha_t\epsilon_t
\end{eqnarray}
where $\beta_t = \alpha_t(b_t+\frac{1}{2})$.

\paragraph{Anti Concentration Bounds.} It is standard in the analysis of TS methods to prove sufficient exploration by building on anti-concentration bounds. For this purpose we use the following bound on the CDF of a normal distribution: $1-\text{CDF}_{\Nc(0,1)}(c)\ge \frac{\exp(-{c^2})}{4c\sqrt{\pi}}$. The underestimation of the posterior standard deviation at the optimum point however might result in an under exploration. On the other hand, a low standard deviation at the optimum point implies a low prediction error. We use this observation in our regret analysis by considering the two cases separately. Specifically, the regret $ f({\mathtt{x}^*}^{(t)}) - f(x_t)$ at each time $t$ is bounded differently under the conditions: I. $\tilde{\sigma}_{t-1}({\mathtt{x}^*}^{(t)})\le \epsilon_t$ and II. $\tilde{\sigma}_{t-1}({\mathtt{x}^*}^{(t)})> \epsilon_t$. \\

\textbf{Under Condition} I ($\tilde{\sigma}_{t-1}({\mathtt{x}^*}^{(t)})\le \epsilon_t$), when both $\Ec_t$ and $\tilde{\Ec}_t$ hold true, we have
\begin{eqnarray}\nn
&&\hspace{-4em}f({\mathtt{x}^*}^{(t)}) - f(x_t)\\\nn
&\le& \tilde{f}_t({\mathtt{x}^*}^{(t)}) + \beta_t\tilde{\sigma}_{t-1}({\mathtt{x}^*}^{(t)})+ \frac{1}{2}\alpha_t\epsilon_t \\\nn
&&~~~~~- \tilde{f}_{t}(x_t) + \beta_t\tilde{\sigma}_{t-1}(x_t)+ \frac{1}{2}\alpha_t\epsilon_t~\hspace{7em}\text{by~\eqref{eq:beta},}\\\label{eq:boundstarsig}
&\le&  \beta_t\tilde{\sigma}_{t-1}({\mathtt{x}^*}^{(t)})+ \beta_t\tilde{\sigma}_{t-1}(x_t)+\alpha_t\epsilon_t~\hspace{1.2em}\text{by~the selection rule of TS,}\\\nn
&\le&\beta_t\tilde{\sigma}_{t-1}(x_t) +    (\beta_t +\alpha_t)\epsilon_t~\hspace{9em}~~\text{by Condition I}.
\end{eqnarray}
that upper bounds the instantaneous regret at time $t$ by a factor of approximate standard deviation up to an additive term caused by approximation error. Since $f({\mathtt{x}^*}^{(t)}) - f(x_t)\le 2B$, under Condition I,
\begin{eqnarray}\label{Ass1simplereg}
\E[f({\mathtt{x}^*}^{(t)}) - f(x_t)] \le  \beta_t\tilde{\sigma}_{t-1}(x_t) + (\beta_t+\alpha_t)\epsilon_t +\frac{4B}{t^2}.
\end{eqnarray}
where the inequality holds by $\Pr[\bar{{\Ec}}_t ~\text{or}~\bar{\tilde{\Ec}}_t  ]\le \frac{2}{t^2}$.

\textbf{Under Condition} II ($\tilde{\sigma}_{t-1}({\mathtt{x}^*}^{(t)})> \epsilon_t$), we can show sufficient exploration by anti-concentration at the optimum point. In particular under Condition II, if $\Ec_t$ holds true, we have \begin{eqnarray}\label{eq:plow}
\Pr[\tilde{f}_t({\mathtt{x}^*}^{(t)})>f({\mathtt{x}^*}^{(t)})] \ge p,
\end{eqnarray}
where $p= \frac{1}{4\sqrt{\pi}}$.

\emph{Proof.} 
Applying the anti-concentration of normal distribution 
\begin{eqnarray}\nn
\Pr[\tilde{f}_t({\mathtt{x}^*}^{(t)})>f({\mathtt{x}^*}^{(t)})] &=& \Pr\left[ \frac{\tilde{f}_t({\mathtt{x}^*}^{(t)})-\tilde{\mu}_{t-1}({\mathtt{x}^*}^{(t)})}{\alpha_{t}\tilde{\sigma}_{t-1}({\mathtt{x}^*}^{(t)})}> \frac{f({\mathtt{x}^*}^{(t)})-\tilde{\mu}_{t-1}({\mathtt{x}^*}^{(t)})}{\alpha_{t}\tilde{\sigma}_{t-1}({\mathtt{x}^*}^{(t)})}\right]\\\nn
&\ge& p.
\end{eqnarray}
As a result of the observation that the right hand side of the inequality inside the probability argument is upper bounded by $1$:
\begin{eqnarray}\nn
\frac{f({\mathtt{x}^*}^{(t)})-\tilde{\mu}_{t-1}({\mathtt{x}^*}^{(t)})}{\alpha_{t}\tilde{\sigma}_{t-1}({\mathtt{x}^*}^{(t)})} &\le& \frac{  \frac{1}{2}\alpha_t\tilde{\sigma}_{t-1}({\mathtt{x}^*}^{(t)})+ \frac{1}{2}\alpha_t\epsilon_t }{\alpha_t\tilde{\sigma}_{t-1}({\mathtt{x}^*}^{(t)})}~~~~\text{By}~\Ec_t\\\nn
&\le& 1.~~~\hspace{13em}~\text{By Condition II}~~~~~\square
\end{eqnarray}

\paragraph{Sufficiently Explored Points.} Let $\Sc_t$ denote the set of sufficiently explored points which are unlikely to be selected by S-GP-TS if $\tilde{f}_t({\mathtt{x}^*}^{(t)})$ is higher than $f({\mathtt{x}^*}^{(t)})$. Specifically, we use the notation
\begin{eqnarray}
\Sc_t = \{x\in D_t:  f(x)+\beta_t\tilde{\sigma}_{t-1}(x)+ \frac{1}{2}\alpha_t\epsilon_t \le f({\mathtt{x}^*}^{(t)})\}. 
\end{eqnarray}  
Recall $\beta_t = \alpha_t(b_t+\frac{1}{2})$. In addition, we define
\begin{eqnarray}
\bar{x}_t ={\text{argmin}}_{x\in D_t\setminus \Sc_t}\tilde{\sigma}_{t-1}(x).
\end{eqnarray}

We showed in equation~\eqref{eq:boundstarsig} that the instantaneous regret can be upper bounded by the sum of standard deviations at $x_t$ and ${\mathtt{x}^*}^{(t)}$. The standard method based on information gain can be used to bound the cumulative standard deviations at $x_t$. 
This is not sufficient however because the cumulative standard deviations at ${\mathtt{x}^*}^{(t)}$ does not converge unless there is sufficient exploration around $x^*$. To address this, we use $\bar{x}_t$ as an intermediary to be able to upper bound the instantaneous regret by a factor of $\tilde{\sigma}_{t-1}(x_t)$ through the following lemma. 
\begin{lemma}\label{Lemma:xbar}
Under Condition II, for $t\ge\sqrt{\frac{2}{p}}$, if $\Ec_t$ holds true
\begin{eqnarray}
\tilde{\sigma}_{t-1}(\bar{x}_t)\le \frac{2}{p} \E[\tilde{\sigma}_{t-1}(x_t)],
\end{eqnarray}
where the expectation is taken with respect to the randomness in the sample $\tilde{f}_t$. 
\end{lemma}
\begin{proof}[Proof of Lemma~\ref{Lemma:xbar}]
First notice that when both $\Ec_t$ and $\tilde{\Ec_t}$ hold true, for all $x\in \Sc_t$
\begin{eqnarray}\nn
\tilde{f}_t(x)&\le& f(x) + \beta_t \tilde{\sigma}_{t-1}(x)+(\alpha_t-1)\epsilon_t~~~\hspace{2em} \text{by~\eqref{eq:beta}}\\\label{eq:ftildes1}
&\le& f({\mathtt{x}^*}^{(t)}), ~~~~\hspace{5em}~\text{by definition of}~\Sc_t.
\end{eqnarray}
Also, if $\tilde{f}_t({\mathtt{x}^*}^{(t)})>\tilde{f}_t(x), \forall x \in S_t$, the rule of selection in TS ensures $x_t\in D_t\setminus \Sc_t$. So we have
\begin{eqnarray}\nn
\Pr[x_t \in D_t\setminus \Sc_t] &\ge& \Pr[\tilde{f}_t({\mathtt{x}^*}^{(t)})>\tilde{f}_t(x), \forall x \in S_t]\\\nn
&\ge& \Pr[\tilde{f}_t({\mathtt{x}^*}^{(t)})>\tilde{f}_t(x), \forall x \in S_t, \tilde{\Ec_t}] -
\Pr[\bar{\tilde{\Ec}}_t]
\\\nn
&\ge& \Pr[\tilde{f}_t({\mathtt{x}^*}^{(t)})>f({\mathtt{x}^*}^{(t)})] -
\Pr[\bar{\tilde{\Ec}}_t]~~~\hspace{2em}~\text{by \eqref{eq:ftildes1}}~\\\nn
&\ge& p-\frac{1}{t^2} ~~~\hspace{2em}~\text{by \eqref{eq:plow}}
\\\nn
&\ge&
\frac{p}{2}, ~~~\hspace{2em}~\text{for}~t\ge \sqrt{2/p}.
\end{eqnarray}
Finally, we have
\begin{eqnarray}
\E[\tilde{\sigma}_{t-1}(x_t)] &\ge& \E\left[\tilde{\sigma}_{t-1}(x_t)\bigg|x_t\in D_t\setminus S_t\right] \Pr[x_t\in D_t\setminus S_t]\\\nn
&\ge& \frac{p\tilde{\sigma}_{t-1}(\bar{x}_t)}{2},
\end{eqnarray}
where the expectation is taken with respect to the randomness in the sample $\tilde{f}_t$ at time $t$. 
\end{proof}

Now we are ready to bound the simple regret under Condition II using $\bar{x}_t$ as an intermediary. 
Under Condition II, when both $\Ec_t$ and $\tilde{\Ec}_t$ hold true, 
\begin{eqnarray}\nn
f({\mathtt{x}^*}^{(t)}) - f(x_t) &=& f({\mathtt{x}^*}^{(t)}) - f(\bar{x}_t)+f(\bar{x}_t) - f(x_t) \\\nn
&\le& \beta_t\tilde{\sigma}_{t-1}(\bar{x}_t)+ \frac{1}{2}\alpha_t\epsilon_t
+f(\bar{x}_t) - f(x_t)~~~\text{by definition of}~\Sc_t\\\nn
&\le & \beta_t\tilde{\sigma}_{t-1}(\bar{x}_t)+ \frac{1}{2}\alpha_t\epsilon_t\\\nn
&&~~~~~+ \tilde{f}_t(\bar{x}_t) + \beta_t\tilde{\sigma}_{t-1}(\bar{x}_t) - \tilde{f}_t(x_t) +\beta_t\tilde{\sigma}_{t-1}(x_t)+\alpha_t\epsilon_t~~~\text{by~\eqref{eq:beta}}\\\nn
&\le& \beta_t(2\tilde{\sigma}_{t-1}(\bar{x}_t)+\tilde{\sigma}_{t-1}(x_t) ) + \frac{3}{2}\alpha_t\epsilon_t, ~~~\text{by the rule of selection in TS}.
\end{eqnarray}

Thus, since $f(x^*)-f(x_t)\le 2B$, under Condition II, for $t\ge \sqrt{\frac{2}{p}}$
\begin{eqnarray}
\E[f({\mathtt{x}^*}^{(t)})-f(x_t)]\le \frac{(4+p)\beta_t}{p}\E[\tilde{\sigma}_{t-1}(x_t)] +\frac{3}{2}\alpha_t\epsilon_t + \frac{4B}{t^2}
\end{eqnarray}
where we used Lemma~\ref{Lemma:xbar} and $\Pr[\bar{{\Ec}}_t ~\text{or}~\bar{\tilde{\Ec}}_t  ]\le \frac{2}{t^2}$.

\paragraph{Upper bound on regret.} From the upper bounds on instantaneous regret under Condition I and Condition II we conclude that, for $t\ge \sqrt{\frac{2}{p}}$
\begin{eqnarray}
\E[f({\mathtt{x}^*}^{(t)})-f(x_t)]&\le& \max \bigg\{ \beta_t\tilde{\sigma}_{t-1}(x_t) + (\beta_t+\alpha_t)\epsilon_t +\frac{4B}{t^2}, \\\nn &&~~~~~\frac{(4+p)\beta_t}{p}\E[\tilde{\sigma}_{t-1}(x_t)] +\frac{3}{2}\alpha_t\epsilon_t + \frac{4B}{t^2}  \bigg\}\\\nn
&\le& \frac{(4+p)\beta_t}{p}\E[\tilde{\sigma}_{t-1}(x_t)] + (\beta_t+\alpha_t)\epsilon_t+ \frac{4B}{t^2}.
\end{eqnarray}
We can now upper bound the cumulative regret. Noticing $\lceil\sqrt{\frac{2}{p}}\rceil = 4$. 
\begin{eqnarray}\nn
R(T;S-GP-TS) &=& \sum_{t=1}^T \E[f(x^*)-f(x_t)]\\\nn
&+& \sum_{t=1}^{4} \E[f(x^*)-f(x_t)]+ \sum_{t=5}^T\E[f(x^*)-f(x_t)]\\\nn
&\le&8B +  \sum_{t=5}^T(\E[f({\mathtt{x}^*}^{(t)})-f(x_t)]+\frac{1}{t^2})\\\nn
&\le&8B +  \sum_{t=5}^T \left(\frac{(4+p)\beta_t}{p}\E[\tilde{\sigma}_{t-1}(x_t)] + (\beta_t+\alpha_t)\epsilon_t+ \frac{4B+1}{t^2}\right)\\\nn
&\le& 8B + \frac{\pi^2(4B+1)}{6} + \frac{(4+p)\beta_T}{p}\sum_{t=1}^T\E[\tilde{\sigma}_{t-1}(x_t)] + (\beta_T+\alpha_T)\sum_{t=1}^T\epsilon_t\\\nn
&\le& 15B+2 + 30\beta_T\sum_{t=1}^T(\bar{a}\E[\sigma_{t-1}(x_t)]+\epsilon_t)
+ (\beta_T+\alpha_T){\epsilon}T\\\nn
&\le& 15B+2 + 30\bar{a}\beta_T\sum_{t=1}^T\E[\sigma_{t-1}(x_t)]+ 30\beta_T\epsilon T
+ (\beta_T+\alpha_T){\epsilon}T\\\nn
&\le& 15B+2
+ 30\bar{a}\beta_T\sum_{t=1}^T\E[\sigma_{t-1}(x_t)]+
(31\beta_T+\alpha_T)\epsilon T.
\end{eqnarray}

We simplified the expressions by $\frac{4+p}{p}\le 30$, $\frac{4\pi^2}{6}\le 7$ and $\frac{\pi^2}{6}\le 2$. We also use the standard information gain technique to upper bound $\sum_{t=1}^T\E[\sigma_{t-1}(x_t)]$. Specifically $\sum_{t=1}^T\E[\sigma_{t-1}(x_t)]\le \sqrt{4\gamma_T(T+2)}$~(Lemma 4 in \cite{Chowdhury2017bandit}). We thus have
\begin{eqnarray}
R(T;\text{S-GP-TS}) \le 
 30\bar{a}\beta_T\sqrt{4\gamma_T(T+2)}+
(31\beta_T+\alpha_T)\epsilon T + 15B+2
\end{eqnarray}

which can be simplified to
\begin{eqnarray}
R(T;\text{S-GP-TS}) = \tilde{O}\left( \underline{a}\bar{a}(1+c)\sqrt{\gamma_T T} + \underline{a}^2(1+c^2)\epsilon T  \right).
\end{eqnarray}

  $\square$

\begin{proof}[Proof of Lemma~\ref{Lemma:ConIneqApprox}]

By triangle inequality we have
\begin{eqnarray}\nn
|f(x) - \tilde{\mu}_t(x)| &\le& |f(x)-\mu_t(x)|  + |\tilde{\mu}_t(x) - \mu_t(x)| \\\nn
&\le& |f(x)-\mu_t(x)| + c_t\sigma_t(x)~~~~\text{by Assumptions~\ref{Ass2}}.
\end{eqnarray}
From Theorem 2 of~\cite{Chowdhury2017bandit}, 
with probability at least $1-\delta$,
\begin{eqnarray}\nn
f(x)- \mu_t(x)\le\left( B+R\sqrt{2(\gamma_{t}+1+\log(1/\delta))} \right)\sigma_t(x).
\end{eqnarray}

Thus,
%
\begin{eqnarray}\nn
|f(x)-\tilde{\mu}_t(x)| 
&\le& \left( B+R\sqrt{2(\gamma_{t}+1+\log(1/\delta))} \right) \sigma_t(x) + c_t {\sigma}_{t}(x)
\\\nn
&\le& \underline{a}_t(B+R\sqrt{\frac{2\ln(1/\delta)}{\tau}} +c_t) (\tilde{\sigma}_t(x)+\epsilon_t),
\\\nn
\end{eqnarray}
where the last inequality holds by Assumption~\ref{Ass1}.
\end{proof}


\subsection*{Proof of Proposition~\ref{Prop:paramters}}

Here, we use $\tilde{\mu}_t$ and $\tilde{\sigma}_t$ to specifically denote the approximate posterior mean and the approximate posterior standard deviations of the decomposed sampling rules~\eqref{SampIndPoints} and~\eqref{SampIndVariables} in contrast to Sec.~\ref{Sec:RegSec} where we used the notation more generally for any approximate model. We also use $\mu^{(s)}_t$ and $\sigma^{(s)}_t$ to refer to the posterior mean and the posterior standard deviation of SVGP models, and $\mu^{(w)}$ and $\sigma^{(w)}$ to refer to the priors generated form an $M-$truncated feature vector. 
For the approximate posterior mean, we have $\tilde{\mu}_t = \mu^{(s)}_t$. However the approximate posterior standard deviations $\sigma^{(s)}$ and $\tilde{\sigma}$ are not the same.

By triangle inequality we have
\begin{eqnarray}
|\tilde{\sigma}_t(x) - \sigma_t(x)| \le |\tilde{\sigma}_t(x) - \sigma^{(s)}_t(x)| + |\sigma^{(s)}_t(x) - \sigma_t(x)|.
\end{eqnarray}

For the first term,
following the exact same lines as in the proof of Proposition 7 in~\cite{wilson2020efficientsampling}, we have
\begin{eqnarray}
|\tilde{\sigma}^2_t(x)-{\sigma^{(s)}_t}^2(x) | \le C_1m_t |\sigma^2(x) - {\sigma^{(w)}}^2(x) |
\end{eqnarray}
where $C_1 =\max_{1\le t\le T} (1+||K^{-1}_{\Zb_{m_t},\Zb_{m_t}}||_{C(\X^2)})$. 
\cite{wilson2020efficientsampling} proceed to upper bound $|\sigma^2(x) - {\sigma^{(w)}}^2(x) |$ by a constant divided by $\sqrt{M}$. We use a tighter bound based on feature representation of the kernel. Specifically from equation~\eqref{eq:featurerep} and the definition of $\delta_M$ we have that 
\begin{eqnarray}
|\sigma^2(x) - {\sigma^{(w)}}^2(x) | &\le& \sum_{i=M+1}^\infty \lambda_i\psi_i^2\\\nn 
&=& \delta_M,
\end{eqnarray}
which results in the following upper bound
\begin{eqnarray}
|\tilde{\sigma}^2_t(x)-{\sigma^{(s)}_t}^2(x) |  \le C_1m_t \delta_M.
\end{eqnarray}
For the standard deviations we have
\begin{eqnarray}\nn
|\tilde{\sigma}_t(x)-\sigma^{(s)}_t(x) | &=& \sqrt{|\tilde{\sigma}_t(x)-\sigma^{(s)}_t(x) |^2 }\\\nn
&\le&\sqrt{|\tilde{\sigma}_t(x)-\sigma^{(s)}_t(x) ||\tilde{\sigma}_t(x)+\sigma^{(s)}_t(x) | }\\\nn
&=& \sqrt{|\tilde{\sigma}^2_t(x)-{\sigma^{(s)}_t}^2(x) |^2 }\\\label{eq:wilson}
&\le& \sqrt{C_1m_t\delta_M}
\end{eqnarray}
where the first inequality holds because $|\tilde{\sigma}_t(x)-\sigma^{(s)}_t(x) |\le |\tilde{\sigma}_t(x)+\sigma^{(s)}_t(x) |$ for positive $ \tilde{\sigma}_t(x)$ and $\sigma^{(s)}_t(x)$.

We can efficiently bound the error in the SVGP approximation based on the convergence of SVGP methods. Let us first focus on the inducing features. It was shown that (Lemma 2 in~\cite{Burt2019Rates}), for the SVGP with inducing features
\begin{eqnarray}
\text{KL}\left(\text{GP}(\mu_t,\sigma_t), {\text{GP}}(\mu^{(s)}_t,k^{(s)}_t)\right)\le \frac{\theta_t}{\tau}.
\end{eqnarray}
where $\text{GP}(\mu_t,\sigma_t)$ and ${\text{GP}}(\mu^{(s)}_t,k^{(s)}_t)$ are the true and the SVGP approximate posterior distributions at time $t$, and KL denotes the  Kullback-Leibler divergence between them. On the right hand side, $\theta_t$ is the trace of the error in the covariance matrix. Specifically, $\theta_t = \text{Tr}(E_t)$ where $E_t = K_{\Xb_t,\Xb_t} - K_{\Zb_t,\Xb_t}^{\TP}K_{\Zb_t,\Zb_t}K_{\Zb_t,\Xb_t} $. Using the Mercer expansion of
the kernel matrix,~\cite{Burt2019Rates} showed that $[E_t]_{i,i} = \sum_{j=m_t+1}^\infty\lambda_j\phi^2_j(x_i)$. Thus 
\begin{eqnarray}
\theta_t &=&\sum_{i=1}^t\sum_{j=m+1}^\infty\lambda_j\phi^2_j(x_i)\\\nn
&\le& t\sum_{j=m_t+1}^{\infty}\lambda_j\psi_j^2\\\nn
&=&t\delta_{m_t}
\end{eqnarray}

Thus, 
\begin{eqnarray}\label{KL123}
\text{KL}\left(\text{GP}(\mu_t,\sigma_t), {\text{GP}}(\mu^{(s)}_t,k^{(s)}_t)\right)\le \kappa_t/2.
\end{eqnarray}
where $\kappa_t = 2t\delta_m/\tau$. 
In comparison, \cite{Burt2019Rates} proceed by introducing a prior distribution on $x_i$ and bounding $[E_t]_{i,i,}$ differently.

For the case of inducing points drawn from an $\epsilon_0$ close k-DPP distribution, similarly following the exact lines as \cite{Burt2019Rates} except for the upper bound on $[E_t]_{i,i}$, with probability at least $1-\delta$, \eqref{KL123} holds with $\kappa_t = \frac{2t(m_t+1)\delta_{m_t}}{\delta\tau} + \frac{4t\epsilon_0}{\delta\tau}$ where $\epsilon_0$ can be arbitrarily small.

In addition, if the KL divergence between two Gaussian distributions is bounded by $\kappa_t/2$, we have the following bound on their means and variances [Proposition 1 in~\cite{Burt2019Rates}]
\begin{eqnarray}\nn
|\mu^{(s)}_t(x)-\mu_t(x)|&\le& {\sigma}_t(x)\sqrt{\kappa_t},\\\label{eq:burt}
|1-\frac{{\sigma^{(s)}_t}^2(x)}{\sigma^2_t(x)}|&\le&\sqrt{3\kappa_t},
\end{eqnarray}
which by algebraic manipulation gives 
\begin{eqnarray}
\sqrt{1-\sqrt{3\kappa_t}}\sigma_t({x}) \le \sigma^{(s)}_t(x) \le \sqrt{1+\sqrt{3\kappa_t}}\sigma_t({x})
\end{eqnarray}
Combining the bounds on $\sigma^{(s)}_t$ with~\eqref{eq:wilson}, we get
\begin{eqnarray}\nn
\sqrt{1-\sqrt{3\kappa_t}}\sigma_t({x})-\sqrt{C_1m_t\delta_M} \le \tilde{\sigma}_t(x) \le \sqrt{1+\sqrt{3\kappa_t}}\sigma_t({x})+\sqrt{C_1m_t\delta_M}
\end{eqnarray}
Comparing this bounds with Assumption I, we have $\underline{a}_t  =\frac{1}{\sqrt{1-\sqrt{3\kappa_t}}}$, 
$\bar{a}_t = \sqrt{1+\sqrt{3\kappa_t}}$, and $\epsilon_t = \sqrt{C_1m_t\delta_M}$.
Also, since $\mu^{(s)}_t = \tilde{\mu}_t$, comparing~\eqref{eq:burt} with Assumption II, we have $c_t = \sqrt{\kappa_t}$. 

$\square$



\subsection*{Proof of Theorem~\ref{The:MatSE}}

Theorem~\ref{The:RegretBoundsApproximate} proved that $R(T;\text{S-GP-TS}) = {O}\left( \underline{a}\bar{a}\gamma_T\sqrt{ T\log(T)} + \underline{a}\epsilon T\sqrt{\gamma_T\log(T)}  \right)$. We thus need to show that $\underline{a}\bar{a}$ is a constant independent of $T$ and $\underline{a}\epsilon$ is small so that the second term is dominated by the first term. 

In the case of Mat{\'e}rn kernel, $\lambda_j = O(j^{-\frac{2\nu+d}{d}})$ implies that $\delta_m = O(m^{-\frac{2\nu}{d}})$. Under sampling rule~\eqref{SampIndPoints}, we select $\delta=\frac{1}{T}$ and $\epsilon_0=\frac{1}{T^2\log(T)}$ in Proposition~\ref{Prop:paramters}. We thus need $\kappa_T = O({T^2m_T\delta_{m_T}})$ and $\epsilon_T\sqrt{T} = O(\sqrt{m_T\delta_M T}) $ be sufficiently small constants. That is achieved by selecting $m_T = T^{\frac{2d}{2\nu-d}}$ and $M=T^{\frac{(2\nu+d)d}{2(2\nu-d)\nu}}$.

Under sampling rule~\eqref{SampIndVariables}, we need $\kappa_T = O({T\delta_{m_T}})$ and $\epsilon_T\sqrt{T} = O(\sqrt{m_T\delta_M T}) $ be sufficiently small constants. That is achieved by selecting $m_T = T^{\frac{d}{2\nu}}$ and $M=\frac{(2\nu+d)d}{4\nu^2 }$.

In the case of SE kernel, $\lambda_j = O(\exp(-j^{\frac{1}{d}}))$ implies that $\delta_m = O(\exp(-m^{\frac{1}{d}}))$. Under sampling rule~\eqref{SampIndPoints}, we select $\delta=\frac{1}{T}$ and $\epsilon_0=\frac{1}{T^2\log(T)}$ in Proposition~\ref{Prop:paramters}. We thus need $\kappa_T = O({T^2m_T\delta_{m_T}})$ and $\epsilon_T\sqrt{T} = O(\sqrt{m_T\delta_M T}) $ be sufficiently small constants. That is achieved by selecting $m_T = (\log(T))^{d}$ and $M=(\log(T))^{d}$.
We obtain the same results under 
sampling rule~\eqref{SampIndVariables} where we need $\kappa_T = O({T\delta_{m_T}})$ and $\epsilon_T\sqrt{T} = O(\sqrt{m_T\delta_M T}) $ be sufficiently small constants. 

$\square$
